%% file: main.tex
\documentclass[11pt]{article}
\usepackage[T1]{fontenc}
\usepackage[utf8]{inputenc}
\usepackage{lmodern}
\usepackage[margin=1in]{geometry}
\usepackage{amsmath,amssymb,amsthm,mathtools}
\usepackage{microtype}
\usepackage[numbers,sort&compress]{natbib}
\usepackage[hidelinks]{hyperref}
\hypersetup{pdftitle={Polynomial Learning of Positive ReLU Networks under Gaussian Inputs},pdfauthor={Zhijie Zheng}}
\newtheorem{theorem}{Theorem}[section]
\newtheorem{lemma}[theorem]{Lemma}
\newtheorem{proposition}[theorem]{Proposition}
\newtheorem{corollary}[theorem]{Corollary}
\theoremstyle{definition}

\theoremstyle{remark}
\newtheorem{remark}[theorem]{Remark}
\newcommand{\R}{\mathbb{R}}

\newcommand{\E}{\mathbb{E}}
\newcommand{\Prob}{\mathbb{P}}
\newcommand{\N}{\mathcal{N}}
\newcommand{\relu}{\operatorname{ReLU}}
\newcommand{\poly}{\operatorname{poly}}
\newcommand{\dist}{\operatorname{dist}}
\newcommand{\range}{\operatorname{range}}
\newcommand{\tr}{\operatorname{tr}}
\newcommand{\Sym}{\operatorname{Sym}}

\newcommand{\op}{\mathrm{op}}
\newcommand{\Frob}{\mathrm{F}}
\newcommand{\eps}{\varepsilon}
\newcommand{\ii}{\mathrm{i}}
\newcommand{\norm}[1]{\left\lVert#1\right\rVert}
\newcommand{\abs}[1]{\left\lvert#1\right\rvert}
\newcommand{\inner}[2]{\left\langle#1,#2\right\rangle}
\newcommand{\sphere}{\mathbb{S}^{d-1}}
\newcommand{\Coarse}{\mathsf{CoarseCover}}
\newcommand{\Prune}{\mathsf{Prune}}
\newcommand{\coeffset}{\Delta_{\leq1}}
\newcommand{\Fk}{\mathcal{F}_k}

\allowdisplaybreaks[1]
\title{Polynomial Learning of Positive ReLU Networks\\under Gaussian Inputs}
\author{Zhijie Zheng\\Peking University\\\texttt{2501210512@stu.pku.edu.cn}\\
\small\href{https://orcid.org/0009-0006-7610-0691}{ORCID: 0009-0006-7610-0691}}
\date{}
\begin{document}
\maketitle

\begin{abstract}
We study positive, bias-free, one-hidden-layer ReLU networks under standard Gaussian inputs. We obtain algorithms with polynomial sample and real-arithmetic complexity, without separation or conditioning assumptions on the hidden directions. When the conditional label mean is a width-$k$ network with total output weight at most one, the learner achieves squared prediction error $\eps$ with output width at most $k+\min\{k,d\}$ and complexity linear in one plus the residual second moment. For arbitrary labels with bounded second moment, we obtain a positive ReLU network of width $O(k\log(2/\eps))$. Its risk is bounded by the irreducible noise variance plus $Ck^3\log^{9/2}(2k/\eps)$ times the squared distance of the conditional mean to the target class, plus $\eps$. The proofs combine implicit Hermite moments, orthogonal tensor-train compression, and a direction decoder whose probability of returning a direction far from every atom is bounded linearly in the compression and estimation errors. Exact Gaussian localization refines the resulting direction covers. In the agnostic setting, a cone-constrained convex program with a scale-dependent penalty produces a positive network; in the conditional-mean setting, exact sparsification yields the smaller width. We also establish projective direction recovery and confidence amplification with sample complexity proportional to the logarithm of the reciprocal failure probability.
\end{abstract}

\section{Introduction}
\label{sec:introduction}
Learning neural networks in time jointly polynomial in dimension, width, and inverse accuracy is a central problem in computational learning theory. Positive sums of ReLU activations have useful structure even when their hidden directions are arbitrarily close or linearly dependent. After normalization, their even Hermite moments are positive semidefinite mixtures of tensor powers. To exploit this structure, one must represent high-order moments efficiently and turn coarse estimates of the hidden directions into accurate predictors.

We consider two noise models. In the conditional-mean model, $\E[Y\mid X]=f(X)$ for a target positive ReLU network $f$, with input-dependent residual noise of finite unconditional variance. In the agnostic model, the conditional mean $g=\E[Y\mid X]$ may lie outside the target class, and the benchmark is its best approximation by a network in that class. The mean approximation error $g-f$ vanishes in the first model and perturbs the population moments in the second. Separating this error from label noise leads to an agnostic risk bound in which the irreducible variance has coefficient one.

Both algorithms rely on a complete and sound cover of the hidden directions: every sufficiently heavy atom is approximated, and every returned center is close to an atom of the comparator network. Ordered tensor-train compression maintains exact orthogonality and bounds the total weighted squared projection error by the sum of the errors at individual stages. Successive Gaussian contractions concentrate the posterior distribution on nearby projective directions. Perturbation bounds and a frequency test then yield a robust cover with at most $k$ centers. Gaussian localization magnifies neighborhoods of these centers, and an inverse-map estimate converts coarse estimates in the localized coordinates into finer estimates in the original coordinates.

In the conditional-mean model, the direction estimates can be refined to arbitrary accuracy. Convex coefficient fitting followed by exact sparsification then gives a network of width $k+\min\{k,d\}$. In the agnostic model, each atom is approximated at a scale determined by its weight and the mean approximation error. A tangent approximation with cubic squared error under Gaussian inputs leads to a convex fitting problem. A scale-dependent penalty controls the error when the fitted predictor is converted into a positive ReLU network. The resulting network satisfies the same positivity and total weight constraints as the target class, with a logarithmic increase in width.

\paragraph{Relation to prior work.}
\citet{diakonikolas2025implicit} developed implicit high-order moment estimation and recursive compression for latent-variable models. Their positive-ReLU learner has complexity $\poly(d,k)2^{\poly(1/\eps)}$. We build on their moment framework and complex Hermite queries, using an orthogonal tensor-train representation and estimates that require only a finite second moment of the labels. Polynomial reweighting and spectral rounding also appear in tensor decomposition~\citep{ma2016polynomial}. Appendix~\ref{sec:alternative} gives a cover based on complex Gaussian reweighting for the conditional-mean model. The sequential decoder in Section~\ref{sec:robust-cover} also accommodates perturbations of the conditional mean.

Exact Gaussian soft localization is an established change-of-measure technique; the sampling identity below specializes Lemma~14 of \citet{diakonikolas2021agnostic}. The present analysis combines it with a sound multi-atom cover, a local inverse map, and pruning that controls the number of active centers at every scale. For agnostic learning, the analysis relates the increase in mean error under localization to the cubic error of the tangent approximation.

For a single arbitrarily biased ReLU, \citet{guo2025agnostic} obtain constant-factor agnostic learning under Gaussian marginals. Our setting allows variable width but assumes zero hidden biases, and our approximation factor depends on width. Related results for multi-index models cover different structural assumptions: \citet{diakonikolas2025finite} treat finite-range models, while \citet{diakonikolas2025real} give algorithms for real-valued models with exponential dependence on latent dimension and accuracy. The positive-sum structure considered here allows polynomial dependence on dimension, width, and inverse accuracy simultaneously.

Cone-constrained convex formulations of neural networks and conversions between gated and ReLU models have been studied by \citet{mishkin2022convex} and \citet{dwaraknath2023ntk}. We use circular cones centered at direction estimates from several localization scales. A penalty weighted by these scales bounds the population error introduced by conversion to a positive network.

\section{Models and main results}
\label{sec:model}
Throughout, $d,k\geq1$ are integers, $X\sim\N(0,I_d)$, and $\relu(z)=\max\{z,0\}$. Define
\begin{equation}
\Fk=\left\{f(x)=\sum_{i=1}^{k_0}a_i\relu(u_i^\top x):
 k_0\leq k,\ a_i\geq0,\ \sum_i a_i\leq1,\ u_i\in\sphere\right\}.
\label{eq:class}
\end{equation}
We refer to $u_i$ as atoms and omit zero-weight atoms. No separation or lower singular-value bound is imposed. The norm $\norm{h}_2$ denotes the $L^2(\N(0,I_d))$ norm unless a finite-dimensional space is specified. Positive homogeneity allows any nonzero hidden vector to be normalized by transferring its norm to the output coefficient. Every $f\in\Fk$ is continuous, convex, nonnegative, and $1$-Lipschitz, and $\E f(X)^2\leq1/2$.

For labels with $\E Y^2<\infty$, let
\begin{equation}
 g(X)=\E[Y\mid X],\qquad
 \mathcal V=\E[(Y-g(X))^2],\qquad
 \mathcal A_k=\inf_{f\in\Fk}\norm{g-f}_2^2.
 \label{eq:noise-decomposition}
\end{equation}
The risk $\mathcal R(h)=\E[(Y-h(X))^2]$ satisfies the exact orthogonal decomposition
\begin{equation}
 \mathcal R(h)=\mathcal V+\norm{g-h}_2^2,
 \qquad \mathrm{OPT}_k:=\inf_{f\in\Fk}\mathcal R(f)
                       =\mathcal V+\mathcal A_k.
 \label{eq:risk-decomposition}
\end{equation}
The infimum is attained. After padding with zero coefficients to $k$ units, the parameter space is the product of a compact coefficient simplex and unit spheres. The parameterization is continuous in Gaussian $L^2$, since the ReLU Lipschitz inequality gives
$\norm{\relu(u^\top X)-\relu(v^\top X)}_2\leq\norm{u-v}_2$.

\begin{theorem}[Agnostic learning by positive ReLU networks]
\label{thm:agnostic}
Suppose $B\geq1$ is a known bound with $\E Y^2\leq B$. For $0<\eps\leq1$ and $0<\delta<1$, there is a randomized algorithm using independent samples that, with probability at least $1-\delta$, returns
\begin{equation}
 H(x)=\sum_{r=1}^{m}\alpha_r\relu(v_r^\top x),\qquad
 m\leq Ck\log(2/\eps),\quad \alpha_r\geq0,\quad
 \sum_r\alpha_r\leq1,\quad v_r\in\sphere,
 \label{eq:agnostic-output}
\end{equation}
satisfying
\begin{equation}
 \mathcal R(H)\leq\mathcal V+
 Ck^3\log^{9/2}(2k/\eps)\mathcal A_k+\eps.
 \label{eq:agnostic-main}
\end{equation}
It uses at most $\poly(d,k,B,\eps^{-1},\log(2/\delta))$ samples and
real-arithmetic operations, with universal exponents.
The algorithm requires no knowledge of $\mathcal V$, $\mathcal A_k$, or $\mathrm{OPT}_k$.
The coefficient and width constraints hold for every output.
\end{theorem}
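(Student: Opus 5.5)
We will build the algorithm in three stages, following the outline in the introduction, and then bound the risk using the orthogonal decomposition \eqref{eq:risk-decomposition}. Fix a comparator $f^\star=\sum_i a_i\relu(u_i^\top x)\in\Fk$ attaining $\mathcal A_k=\norm{g-f^\star}_2^2$, and write $e=g-f^\star$.

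\textbf{Stage 1: a sound and complete coarse cover.} From samples we estimate implicit Hermite moments of $Y$, i.e. complex Hermite queries $\E[Y\,\mathrm{He}_\ell(X)[\cdot]]$, using only $\E Y^2\leq B$. Their population values are the moments of $f^\star$, which after normalization are positive semidefinite mixtures $\sum_i a_i c_\ell u_i^{\otimes\ell}$, plus the moments of $e$. The latter have Frobenius norm at most $\norm{e}_2$ by Hermite orthonormality. We then compress these moments with the orthogonal tensor-train procedure. Its total weighted projection error is the sum of the stage-wise errors. Next we run the sequential Gaussian-contraction decoder and apply the frequency test. The decoder returns a direction far from every atom with probability linear in the compression and estimation errors, so repetitions plus the frequency test give at most $k$ centers. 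Each atom of weight $a_i\gtrsim\poly(k)\,(\sqrt{\mathcal A_k}+\eps)$ then has a center within a constant angle, and every center lies near some atom (soundness).

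\textbf{Stage 2: multiscale refinement by Gaussian localization.} For each center we apply the exact localization identity, which reweights samples to a Gaussian of small variance around the center. This magnifies a neighborhood of the center. We rerun Stage~1 in localized coordinates and map the estimates back with the inverse-map estimate. Iterating over $L=O(\log(2/\eps))$ dyadic scales halves the angular error at each step. After each step we prune, which keeps at most $O(k)$ active centers per scale and hence $O(k\log(2/\eps))$ centers in total. The key quantitative point is that localization inflates the mean error $e$ by a factor polynomial in the scale. Consequently atom $u_i$ can be refined only down to an angle $\theta_i$ at which $a_i\theta_i^{3/2}$ is comparable to $\sqrt{\mathcal A_k}$ times polylog factors. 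This is where the cubic bound enters: the tangent approximation satisfies $\norm{\relu(u^\top X)-\relu(v^\top X)-\text{(linear correction)}}_2^2=O(\theta^3)$. I expect this step to be the main obstacle. One has to track, uniformly over scales, how $\norm{e}_2$ propagates through localization and through the decoder's linear failure bound, and show that the stopping scale yields a per-atom error $a_i^2\theta_i^3\lesssim k^2\log^{9/2}(2k/\eps)\,\mathcal A_k+\eps/k$. Summing over $k$ atoms and applying Cauchy--Schwarz should produce the $k^3$ factor.

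\textbf{Stage 3: convex fitting and conversion.} On the union of circular cones around all centers at all scales, we solve by empirical risk minimization a cone-constrained convex program over nonnegative combinations of the gated/tangent features. The program imposes total weight at most one and adds a penalty weighted by each cone's scale. The feasible set contains a tangent proxy of $f^\star$ with excess error bounded as in Stage~2, so the empirical minimizer is competitive with it. Uniform convergence for a bounded convex class with $\E Y^2\leq B$, amplified by median-of-means or by confidence amplification with a $\log(1/\delta)$ dependence, gives error $\eps/3$ with $\poly(d,k,B,1/\eps,\log(2/\delta))$ samples. Finally we convert the fitted predictor into a positive ReLU network with one unit per center, using $\relu(v^\top x)$ at the cone centers. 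The scale-weighted penalty bounds the conversion error by the penalty value, which is already charged in the objective. This preserves $\alpha_r\geq0$ and $\sum\alpha_r\leq1$ deterministically, and the width is at most $Ck\log(2/\eps)$. Combining these bounds with $\mathcal R(H)=\mathcal V+\norm{g-H}_2^2$ and $\norm{g-H}_2^2\leq 2\mathcal A_k+2\norm{f^\star-H}_2^2$ yields \eqref{eq:agnostic-main}. None of these steps uses $\mathcal V$ or $\mathcal A_k$, because the scales are fixed dyadically and pruning and stopping are driven by the data.
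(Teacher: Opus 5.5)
Your pipeline follows the paper's architecture: implicit Hermite moments, tensor-train compression, the decoder with a frequency test, exact localization, the cubic tangent bound, a scale-penalized cone program, and conversion. However, the closing claim that nothing depends on $\mathcal A_k$ is not justified, and this is a genuine gap.

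\emph{Choosing the cover threshold.} The decoder bounds the mass of outputs far from every atom only by a multiple of $\Gamma\asymp\sqrt k(t+1)^{9/4}(\zeta+\eta)$, where $\zeta=\sqrt{\mathcal A_k}$. The cover's frequency-test threshold must therefore be a large multiple of this quantity. In the paper this is $\gamma\geq C\sqrt k(t+1)^{9/4}\zeta$ initially, and $\gamma/\sqrt\rho$ after localization at radius $\rho$, because the mean error grows to $\zeta/\sqrt\rho$ while weights shrink only to $a_i\rho$.
\begin{itemize}
\item If the threshold is too small, spurious candidates pass the test and soundness fails. The greedy and pruned lists can then exceed $k$ centers, so the procedure must abort or violate the width cap, and completeness is lost.
\item If the threshold is set conservatively, the comparator error is of order $(k\gamma)^2$, which is not proportional to $\mathcal A_k$.
\end{itemize}
Your dyadic schedule is over angular radii and does not address this choice. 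The paper runs the whole dictionary construction and convex fit for every $\gamma$ in $\{\gamma_{\min}2^j\}\cup\{1\}$, adds the zero network, and selects among the converted networks using hold-out estimates of $\E[H(X)^2]-2\E[YH(X)]$, truncated at $\norm{X}\leq R$ so that only $\E Y^2\leq B$ is needed. Let $S=C\sqrt k(t+1)^{9/4}\sqrt{\mathcal A_k}$. When $S\leq1$, some grid value satisfies $S\leq\gamma\leq2\max\{S,\gamma_{\min}\}$. When $S>1$, the zero candidate already meets the bound. It is this selection step, not the angular schedule, that makes the algorithm oblivious to $\mathcal A_k$ and $\mathcal V$.

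\emph{Conversion, penalty, and the power of $k$.} Three further steps need correction.
\begin{itemize}
\item The output units must use the fitted cone vectors, two per center:
$H_q=\sum_j[\relu((q_j^+)^\top x)+\relu(-(q_j^-)^\top x)]$.
Two are needed because the cover is projective, so atoms near $-w_j$ require the opposite orientation. Then $H_q-G_q\geq0$ is supported on two wedges of angle at most $2\rho_j$, and $\norm{H_q-G_q}_2\leq3P(r)$ with $P(r)=\sum_{j,\sigma}\rho_j^{3/2}r_j^\sigma$. Placing units at the centers $w_j$, as you wrote, costs $\norm{q-\norm{q}w_j}/\sqrt2\asymp\norm{q}\rho_j$. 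That error is first order and destroys the match between the cubic rate and the refinement threshold $a_i\rho^{3/2}\geq\gamma$.
\item You do not specify the penalty, and it has to be $9P(r)^2$. Then $\norm{g-H_q}_2^2\leq2\norm{g-G_q}_2^2+18P(r)^2$ is at most twice the objective. A penalty linear in the masses charges the comparator $k\gamma$ rather than $(k\gamma)^2$, leaving a $\sqrt{\mathcal A_k}$ term.
\item Your exponent bookkeeping is off by a factor of $k$. The per-atom $L^2$ error is $a_i\rho_{j(i)}^{3/2}<\gamma\asymp\sqrt k(t+1)^{9/4}\sqrt{\mathcal A_k}$, and Minkowski over at most $k$ atoms gives $(k\gamma)^2\asymp k^3(t+1)^{9/2}\mathcal A_k$. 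With your per-atom squared error of order $k^2\log^{9/2}(2k/\eps)\mathcal A_k$, Cauchy--Schwarz would give $k^4$.
\end{itemize}
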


In particular, $H$ is continuous, convex, nonnegative, and $1$-Lipschitz. After increasing a universal constant, \eqref{eq:agnostic-main} implies the polynomial-factor agnostic guarantee
\begin{equation}
 \mathcal R(H)\leq Ck^3\log^5(2k/\eps)\mathrm{OPT}_k+\eps.
 \label{eq:original-agnostic}
\end{equation}
It also gives $\norm{H-g}_2^2\leq\eps$ when $g\in\Fk$, even with irreducible label noise. The following result provides a smaller output width and a sharper dependence on that noise in the conditional-mean setting.

\begin{theorem}[Learning a positive ReLU conditional mean]
\label{thm:conditional}
Suppose independent examples satisfy
\begin{equation}
 X\sim\N(0,I_d),\qquad \E[Y\mid X]=f(X),\quad f\in\Fk,
 \qquad \E[(Y-f(X))^2]\leq\sigma^2,
 \label{eq:conditional-model}
\end{equation}
where a bound $\sigma^2\geq0$ is supplied. For $\eps,\delta\in(0,1/10)$, a randomized algorithm returns, with probability at least $1-\delta$,
\begin{equation}
 h(x)=\sum_{j=1}^{m}c_j\relu(v_j^\top x),\qquad
 m\leq k+\min\{k,d\},\quad c_j\geq0,\quad
 \sum_jc_j\leq1,\quad v_j\in\sphere,
 \label{eq:conditional-output}
\end{equation}
with $\norm{h-f}_2^2\leq\eps$, using at most
\begin{equation}
 (1+\sigma^2)\poly(d,k,\eps^{-1},\log(1/\delta))
 \label{eq:conditional-cost}
\end{equation}
samples and real-arithmetic operations. The coefficient and width constraints hold for every output. If the final direction matrix has $s\leq k$ columns and rank $r$, the output width is at most $s+r$. Evaluating $h$ requires $O(dk)$ basic operations.
\end{theorem}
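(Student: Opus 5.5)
The plan follows the pipeline sketched in the introduction, specialized to the conditional-mean model where $g=f\in\Fk$ and $\mathcal A_k=0$.

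\textbf{Step 1: moments.} Since $\E[Y\mid X]=f(X)$, the label-weighted Hermite moments $\E[Y\,He_{2\ell}(X)]$ equal those of $f$. After normalization, the even-order moments are positive semidefinite mixtures $\sum_i a_i c_{2\ell}\,u_i^{\otimes 2\ell}$. I would estimate them only implicitly, through complex Hermite queries. Each query is an empirical average of $Y$ times a bounded-degree polynomial feature. Its variance is bounded by $\E Y^2\,\E p(X)^2\le(1+\sigma^2)\poly(d,k)$, so Chebyshev plus median-of-means gives accuracy $\eta$ with $(1+\sigma^2)\poly(d,k,1/\eta)\log(1/\delta)$ samples. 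This is the only place $\sigma^2$ enters, and it enters linearly, which matches \eqref{eq:conditional-cost}.

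\textbf{Step 2: sound cover.} The estimated moments are fed into the orthogonal tensor-train compression and the sequential direction decoder of Section~\ref{sec:robust-cover}. By the decoder's linear bound on the probability of returning a far direction, together with the frequency test, we obtain with high probability a cover of at most $k$ centers with two properties:
\begin{itemize}
\item every atom of weight at least $\tau$ is within angle $\theta_0$ of some center;
\item every center is within $\theta_0$ of some atom.
\end{itemize}
Because $\mathcal A_k=0$ here, there is no floor on the accuracy.

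\textbf{Step 3: refinement.} I would iterate Gaussian localization around each center. The exact change-of-measure sampling identity produces samples from the localized distribution. The local inverse map then converts a coarse estimate in localized coordinates into an estimate in the original coordinates with error shrunk by a constant factor. After $O(\log(1/\eps))$ rounds, the angular error drops to $\theta\le\poly(\eps/k)$. Pruning keeps at most $k$ centers at every round. Light atoms, of total weight at most $k\tau$, are handled by choosing $\tau\approx\eps/k$: an atom of weight $a$ left unrepresented costs at most $a^2/2$ in $L^2$, and a nearby center costs at most $a\theta$ by the Lipschitz bound $\norm{\relu(u^\top X)-\relu(v^\top X)}_2\le\norm{u-v}_2$.

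\textbf{Step 4: fitting and sparsification.} Let $V$ be the $d\times s$ matrix of final directions. On the dictionary $\{\relu(v_j^\top x)\}$, fit coefficients $c\in\coeffset$ by least squares; this is convex, and Gram entries and correlations are computed in closed form or from samples. The comparator that assigns each atom's weight to its nearest center has excess error $O(k\theta^2)+O(\eps)$. Uniform convergence over the compact coefficient simplex then makes the empirical minimizer $\eps$-good. The width claim $m\le s+r$ requires enriching the dictionary in a controlled way. I would use the identity $\relu(z)-\relu(-z)=z$: linear functions in $\range V$, of dimension $r$, are representable, and I would allow the antipodal units $\relu(-v_j^\top x)$ for a basis of size $r$. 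By Carathéodory in the function space spanned by the $s$ positive units and the $r$ extra units, exact sparsification (a vertex of the optimal face of the LP in the moment representation) leaves at most $s+r\le k+\min\{k,d\}$ nonzero positive coefficients with the same function, so the error is preserved. Evaluation of $h$ costs $O(dm)=O(dk)$.

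\textbf{Main obstacle.} I expect the hardest part to be Step 3: showing that localization contracts the error at a fixed rate uniformly, even when atoms are arbitrarily close. Near-duplicate atoms can pull a center toward their mixture. I would first try to argue that this merging is harmless, since nearby atoms can be merged at $L^2$ cost proportional to their angular gap, so soundness at scale $\theta$ suffices. Concretely, I would do induction over scales with a merge step before each localization.
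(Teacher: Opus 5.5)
Your pipeline (implicit Hermite moments, compression and decoder cover, localization refinement, simplex fit, sparsification) is the paper's, but Step 4 has a genuine gap.

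\textbf{Orientation and width.} The covers are only projective. Even-order moments see $u_i^{\otimes 2\ell}$, which is invariant under $u_i\mapsto-u_i$. So a center $v_j$ does not reveal whether the nearby atom is $\relu(v_j^\top x)$ or $\relu(-v_j^\top x)$, and both can occur (e.g.\ $f=\tfrac12\abs{u^\top x}$). Your fitting dictionary is all $\relu(v_j^\top x)$ plus antipodal units for a fixed basis of $\range V$. It need not contain a good comparator when $s>r$.

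For a counterexample, work in $\R^2$ with three pairwise non-parallel centers, basis $\{v_1,v_2\}$, and $f$ essentially $\relu(-v_3^\top x)$; arbitrarily light atoms at $v_1,v_2$ witness those centers. Across the line $v_j^\perp$, a combination with coefficients $c_j^\pm$ on $\relu(\pm v_j^\top x)$ has gradient jump $(c_j^++c_j^-)v_j$. Nonnegative coefficients cannot cancel, so an exact representation of $\relu(-v_3^\top x)$ would need $c_1^\pm=c_2^\pm=0$, which is impossible. By compactness of the mass-bounded cone, the $L^2$ gap is then a constant independent of $\eps$.

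The identity $\relu(z)-\relu(-z)=z$ produces linear functions only with signed coefficients. Your Carath\'eodory step is also vacuous, since the dictionary already has $s+r$ elements.

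The paper instead fits over all $2s$ oriented features $\relu(\pm v_j^\top x)$, where the heavy-atom comparator is available. It then sparsifies afterwards, in a coefficient-dependent way. Writing $h=\tfrac12\sum_jt_j\abs{v_j^\top x}+\tfrac12(Vz)^\top x$ with $\abs{z_j}\le t_j$, it moves $z$ within $\ker V$ on the unsaturated coordinates. This continues until at most $\operatorname{rank}V$ pairs use both orientations; each saturated pair uses one unit, of whichever sign.

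Your Carath\'eodory idea can be repaired in the same spirit. Apply it to the cone generated by all $2s$ features, whose span has dimension at most $s+r$. The mass is then automatically preserved, because for projectively distinct (pruned) centers the $t_j$ are determined by the even part of the function.

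\textbf{Finite-variance estimation and Step 3.} The variance bound in Step 1 is false: $\operatorname{Var}(Yp(X))\le\E[Y^2p(X)^2]$, not $\E Y^2\,\E p(X)^2$. Only the unconditional residual second moment is bounded, so the conditional variance may grow in $X$ fast enough that $\E[Y^2p(X)^2]=\infty$.

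The paper clips the multiplier at a level $M$. The clipped estimator has second moment at most $M^2(1/2+\sigma^2)$. Since $\E[\xi\mid X,Z]=0$, the clipping bias involves only $f$: it is at most $\sqrt{\E[f(X)^2\psi^2]}\sqrt{\Prob(\abs{\psi}>M)}$, independent of $\sigma$. This is what yields cost linear in $1+\sigma^2$. The same clipping is needed for the correlations $\E[Y\relu(v_j^\top X)]$ behind your Step 4 fit.

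In Step 3, the localized covers must run at threshold $\tau\rho$ with noise bound $\sigma^2/\rho$. Their outputs must also pass the filter $\abs{v^\top w}\ge2/3$ before $T_{w,\rho}^{-1}$ is applied. Otherwise the inverse map can amplify errors by $1/\rho$ and destroy soundness, and with it the $k$-center bound.

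With that filter, the near-duplicate obstacle you anticipate does not arise. Completeness is per heavy atom, soundness is per output (witnesses may be light), and pruning gives retained centers distinct witnesses. Close atoms simply share a center, and no merge step is needed. A merge at coarse scale would in any case cost order $\theta$ in $L^2$, far more than $\sqrt\eps$.
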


The conditional variance may be unbounded as a function of $X$; only the unconditional residual second moment is bounded. Section~\ref{sec:conditional} also gives parameterized projective direction recovery and reduces the confidence dependence in the sample count to a single factor $\log(1/\delta)$.

\paragraph{Computational model.}
Complexity is measured in a real-arithmetic model allowing exact real and complex arithmetic, comparisons, square roots, Gaussian sampling, and symmetric eigendecompositions. These operations suffice for the agnostic algorithm. Uniform sampling and the localization acceptance test can be implemented using Gaussian squares, as described in Section~\ref{sec:localization}. The conditional-mean algorithm additionally evaluates the Gaussian ReLU kernel, including its inverse-cosine term; the construction in Appendix~\ref{sec:alternative} also evaluates exponentials. Adaptive estimation uses independent samples at each stage. Prescribed polynomial upper bounds on sample counts and matrix and list sizes ensure polynomial running time for every outcome. If a bound would be exceeded, the algorithm terminates with the zero network or the empty direction set.

\paragraph{Notation.}
For unit vectors define the projective metric
\begin{equation}
 D(u,v)=\min\{\norm{u-v}_2,\norm{u+v}_2\}
       =\sqrt{2(1-\abs{u^\top v})}.
 \label{eq:projective}
\end{equation}
This is a metric on directions modulo sign. We fix $\theta=10^{-3}$ throughout the cover and refinement arguments. Constants denoted by $C,c$ are universal and may change from line to line. Sample counts are rounded up to integers. Logarithmic factors in complexity bounds may be replaced by $\log(2+\cdot)$ to keep them positive. Tensor norms and inner products are Frobenius norms and inner products; the polynomial contractions are defined in Section~\ref{sec:moments}.

\input{moment_cover}
\input{localization}
\input{conditional}
\input{agnostic}

\section{Discussion}
The agnostic guarantee separates irreducible label noise from approximation error in the conditional mean. Only the approximation error is multiplied by a factor depending on width and accuracy. When the conditional mean belongs to the target class, a specialized algorithm achieves width $k+\min\{k,d\}$ and complexity linear in one plus the residual second moment. Its sparsification argument uses pairs of ReLU units with opposite orientations along the same direction. The vectors obtained from the agnostic convex program need not occur in such pairs, so the same sparsification argument does not directly apply to the agnostic output.

Positivity is essential to the positive semidefinite moment representation and the decoder's soundness argument. The Gaussian assumption provides the Hermite identities, exact localization, and cubic tangent approximation, while the absence of hidden biases makes the network class invariant under the localization maps. Natural directions for further work include reducing the polynomial exponents and extending the analysis to signed output weights, hidden biases, or non-Gaussian inputs.

\appendix
\input{alternative_cover}

\clearpage
\bibliographystyle{plainnat}
\bibliography{references}
\end{document}

%% file: moment_cover.tex
\section{Observable moments and ordered tensor compression}
\label{sec:moments}

We construct a direction cover whose accuracy depends on the population
discrepancy
$\zeta=\norm{g-f}_{L^2}$, where $g(x)=\E[Y\mid X=x]$ and
$f(x)=\sum_i a_i\relu(u_i^\top x)\in\mathcal F_k$ is a fixed comparator.
The sample and operation bounds improve when $g=f$.
We omit zero-weight atoms throughout and allow the atom set and output
list to be empty.

Our moment estimates build on the implicit-moment framework of
\citet{diakonikolas2025implicit}. We equip ordered tensor spaces with
the Frobenius inner product. For a real order-$q$ tensor
$T$, write
\[
 p_T(z)=\sum_{i_1,\ldots,i_q}T_{i_1\cdots i_q}z_{i_1}\cdots z_{i_q}.
\]
The polynomial depends only on $S=\Sym T$, where symmetrization is an
orthogonal projection. The tensor-train representation allows us to
evaluate $p_T$ without forming its $d^q$ coefficients.
Polynomial and Hermite contractions are bilinear in the tensor
coefficients, even at complex arguments; norms and adjoints use the
Hermitian convention.

\subsection{Hermite moments and finite-variance estimation}

Let $\mathsf H_q$ be the order-$q$ probabilists' Hermite tensor, and let
$Z$ be an independent standard Gaussian vector. The generating function
gives
\begin{equation}
 \E_Z p_T(X+\ii Z)=\inner{T}{\mathsf H_q(X)},\qquad
 \left\lVert\frac{\inner{T}{\mathsf H_q}}{\sqrt{q!}}\right\rVert_{L^2}
 =\norm{\Sym T}_{\Frob}\leq\norm{T}_{\Frob}.
 \label{eq:ordered-hermite}
\end{equation}
For $q=2j\geq2$, put
\begin{equation}
 c_{2j}=\E\left[\relu(G)\frac{\mathsf H_{2j}(G)}{\sqrt{(2j)!}}\right]
 =\frac{(-1)^{j-1}(2j-3)!!}{\sqrt{2\pi}\sqrt{(2j)!}},
 \qquad |c_{2j}|^{-1}\leq C(j+1)^{5/4}.
 \label{eq:merged-hermite-coefficient}
\end{equation}
Here $G\sim\N(0,1)$ and $(-1)!!=1$. Integration by parts, using
$\relu''=\delta_0$, gives the coefficient; central-binomial-coefficient
estimates give $|c_{2j}|\asymp(j+1)^{-5/4}$. Gaussian conditioning
then yields
\begin{equation}
 \frac{\E[f(X)\inner{T}{\mathsf H_{2j}(X)}]}
      {c_{2j}\sqrt{(2j)!}}
 =\sum_i a_i\inner{T}{u_i^{\otimes 2j}}.
 \label{eq:merged-atomic-moment}
\end{equation}

\begin{lemma}[Gaussian moment and clipping bounds]
\label{lem:moment-signal}
Identity \eqref{eq:ordered-hermite} also holds for complex tensors,
with Hermitian norms. For every symmetric, possibly complex order-$q$
tensor $S$ and every real unit vector $u$,
\begin{align}
 \E|p_S(X+\ii Z)|^2
 &=2^q q!\norm{S}_{\Frob}^2,
 \label{eq:merged-complex-wick}\\
 \E[\norm{X}^2|p_S(X+\ii Z)|^2]
 &=(d+q)2^q q!\norm{S}_{\Frob}^2,
 \label{eq:merged-radial-wick}\\
 \E[(u^\top X)^2|p_S(X+\ii Z)|^2]
 &=2^q q!\left(\norm{S}_{\Frob}^2
       +q\norm{S\mathbin{\lrcorner}u}_{\Frob}^2\right),
 \label{eq:merged-directional-wick}\\
 \E\left[f(X)^2\frac{|p_S(X+\ii Z)|^2}{q!}\right]
 &\leq\frac{q+1}{2}\,2^q\norm{S}_{\Frob}^2.
 \label{eq:merged-full-signal-bound}
\end{align}
Let $T$ be a real order-$q$ tensor with $\norm{T}_{\Frob}\leq1$, and set
$S=\Sym T$ and $\psi=\operatorname{Re}p_T(X+\ii Z)/\sqrt{q!}$. Then
\begin{align}
 \E\psi^2&\leq2^q,\qquad
 \E\psi^4\leq2^{4q}, \label{eq:merged-query-moments}\\
 \E[f(X)^2\psi^2]&\leq\frac{q+1}{2}\,2^q.
 \label{eq:merged-signal-moment}
\end{align}
In the conditional-mean model $Y=f(X)+\xi$, $\E[\xi\mid X]=0$,
$\E\xi^2\leq\sigma^2$, clipping $\psi$ at level $M>0$ gives
\begin{align}
 \abs{\E[Y\operatorname{clip}_M(\psi)]-\E[Y\psi]}
 &\leq \frac{2^q\sqrt{(q+1)/2}}{M},
 \label{eq:merged-signal-clipping}\\
 \E[Y^2\operatorname{clip}_M(\psi)^2]
 &\leq M^2(1/2+\sigma^2).
 \label{eq:merged-linear-noise}
\end{align}
\end{lemma}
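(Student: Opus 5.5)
The plan is to treat $W=X+\ii Z$ as a standard circular complex Gaussian vector. Its covariances are $\E[WW^\top]=0$ and $\E[W\overline W^\top]=2I_d$. All the identities then follow from Wick's formula.

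\emph{Complex tensors.} The extension of \eqref{eq:ordered-hermite} to complex $T$ is immediate. Both sides are $\mathbb{C}$-linear in $T$, so the identity holds for real and imaginary parts separately. The norm statement uses that Hermite tensors of fixed order are orthogonal with the Hermitian inner product.

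\emph{Identity \eqref{eq:merged-complex-wick}.} Expand $\E[p_S(W)\overline{p_S(W)}]$ by Wick. Only pairings of each $W$ factor with a $\overline W$ factor survive. There are $q!$ such pairings, each contributing $2^q\inner{S}{S}$ because $S$ is symmetric.

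\emph{Identity \eqref{eq:merged-directional-wick}.} Write $u^\top X=(u^\top W+u^\top\overline W)/2$. The terms $(u^\top W)^2|p_S|^2$ and $(u^\top \overline W)^2|p_S|^2$ have unequal numbers of $W$ and $\overline W$ factors, so their expectations vanish. This leaves
\[
\E[(u^\top X)^2|p_S|^2]=\tfrac12\E|p_{u\otimes S}(W)|^2=\tfrac12\,2^{q+1}(q+1)!\,\norm{\Sym(u\otimes S)}_\Frob^2 .
\]
Here I would invoke the standard identity
\[
\norm{\Sym(u\otimes S)}_\Frob^2=\frac{\norm{S}_\Frob^2+q\norm{S\mathbin{\lrcorner}u}_\Frob^2}{q+1},
\]
valid for unit $u$. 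It follows by averaging over positions of $u$: a diagonal term gives $\norm{S}^2$, and each of the $q$ off-diagonal placements gives $\norm{S\mathbin{\lrcorner}u}^2$. This identity is the step I expect to need the most care, because of the index bookkeeping.

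\emph{Identity \eqref{eq:merged-radial-wick}.} Sum \eqref{eq:merged-directional-wick} over $u=e_1,\dots,e_d$ and use $\sum_l\norm{S\mathbin{\lrcorner}e_l}_\Frob^2=\norm{S}_\Frob^2$.

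\emph{Bound \eqref{eq:merged-full-signal-bound}.} Since $\sum a_i\leq1$, Jensen gives $f(X)^2\leq\sum_ia_i\relu(u_i^\top X)^2$. The map $(X,Z)\mapsto(-X,-Z)$ preserves the joint law and multiplies $p_S$ by $(-1)^q$. Hence
\[
\E[\relu(u^\top X)^2|p_S|^2]=\tfrac12\E[(u^\top X)^2|p_S|^2].
\]
Then apply \eqref{eq:merged-directional-wick} with $\norm{S\mathbin{\lrcorner}u}\leq\norm{S}$.

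\emph{Bounds on $\psi$.} Note $p_T=p_S$, $\norm{S}\leq\norm{T}\leq1$, and $\psi^2\leq|p_S(W)|^2/q!$. This gives $\E\psi^2\leq2^q$ and, from \eqref{eq:merged-full-signal-bound}, also \eqref{eq:merged-signal-moment}. Hypercontractivity would lose a factor $9^q$ in the fourth moment, so I would compute it exactly instead. We have $p_S(W)^2=p_{S\otimes S}(W)$, so \eqref{eq:merged-complex-wick} at order $2q$ gives
\[
\E|p_S|^4\leq2^{2q}(2q)!\norm{S}^4 .
\]
Therefore $\E\psi^4\leq 4^q\binom{2q}{q}\leq16^q$.

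\emph{Clipping.} Because $Z$ is independent and $\E[\xi\mid X]=0$, we have $\E[\xi\mid X,Z]=0$, so
\[
\E[Y(\operatorname{clip}_M\psi-\psi)]=\E[f(X)(\operatorname{clip}_M\psi-\psi)].
\]
Pointwise, $|\operatorname{clip}_M\psi-\psi|\leq\psi^2/M$. Applying Cauchy–Schwarz with \eqref{eq:merged-signal-moment} and $\E\psi^2\leq2^q$ bounds the difference by $2^q\sqrt{(q+1)/2}/M$.

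Finally, $\E[Y^2\operatorname{clip}_M(\psi)^2]\leq M^2\E Y^2$. The cross term in $\E Y^2$ vanishes, so $\E Y^2=\E f^2+\E\xi^2\leq1/2+\sigma^2$.
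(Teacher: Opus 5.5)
Your proposal is correct and follows essentially the same route as the paper: complex Wick pairings, the $\Sym(u\otimes S)$ norm identity combined with circular symmetry, summation over an orthonormal basis, weighted Cauchy--Schwarz with the $(X,Z)\mapsto(-X,-Z)$ symmetry, the order-$2q$ identity applied to $p_S^2$, and noise orthogonality plus Cauchy--Schwarz for the clipping bias. The only differences are cosmetic: the fourth-moment step should formally apply the symmetric identity to $\Sym(S\otimes S)$, using $\norm{\Sym(S\otimes S)}_{\Frob}\leq\norm{S}_{\Frob}^2$. Also, your pointwise bound $\abs{\operatorname{clip}_M\psi-\psi}\leq\psi^2/M$ replaces the paper's indicator-plus-Markov step and gives the same constant.
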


\begin{proof}
Complex Gaussian pairings give, for a symmetric order-$q$ tensor $S$,
\[
 \E|p_S(X+\ii Z)|^2=2^q q!\norm{S}_{\Frob}^2.
\]
Each holomorphic coordinate pairs with one conjugate coordinate,
contributing a factor of two; symmetry makes the $q!$ contractions
equal. The Hermite identities extend to complex coefficients by
linearity and polarization.
Apply the same identity to
$p_S^2=p_{\Sym(S\otimes S)}$, and use
$\norm{\Sym(S\otimes S)}_{\Frob}\leq\norm{S}_{\Frob}^2$. Dividing by
$(q!)^2$ bounds the fourth moment by
$2^{2q}\binom{2q}{q}\leq2^{4q}$.

For a real unit vector $u$, circular symmetry and the same pairing
identity give
\[
 \E[(u^\top X)^2|p_S(X+\ii Z)|^2]
 =2^q q!\left(\norm{S}_{\Frob}^2
       +q\norm{S\mathbin{\lrcorner}u}_{\Frob}^2\right)
 \leq(q+1)2^q q!\norm{S}_{\Frob}^2.
\]
To verify the equality, apply the pairing identity to
$\Sym(u\otimes S)$ and use
\[
 \norm{\Sym(u\otimes S)}_{\Frob}^2
 =\frac{\norm{S}_{\Frob}^2+
          q\norm{S\mathbin{\lrcorner}u}_{\Frob}^2}{q+1}.
\]
Summing \eqref{eq:merged-directional-wick} over an orthonormal basis
of $\R^d$ proves \eqref{eq:merged-radial-wick}, since the squared
norms of the coordinate contractions sum to $\norm{S}_{\Frob}^2$.
Weighted Cauchy--Schwarz gives
$f(X)^2\leq\sum_i a_i\relu(u_i^\top X)^2$. Under
$(X,Z)\mapsto(-X,-Z)$, $|p_S(X+\ii Z)|^2$ is unchanged. Replacing
$(u_i^\top X)^2$ by $\relu(u_i^\top X)^2$ therefore halves its
expectation, proving \eqref{eq:merged-full-signal-bound} and hence
\eqref{eq:merged-signal-moment}.

Because $Z$ is independent of $(X,Y)$, the noise has zero expectation
against every clipped multiplier depending on $(X,Z)$. The clipping
bias therefore depends only on $f$. Cauchy--Schwarz and Markov's
inequality bound it by
\[
 \E[|f(X)\psi|\mathbf1_{\{|\psi|>M\}}]
 \leq\sqrt{\E[f(X)^2\psi^2]}\,
       \sqrt{\Prob(|\psi|>M)}
 \leq\frac{2^q\sqrt{(q+1)/2}}{M}.
\]
Finally, conditional zero-mean noise gives
$\E Y^2=\E f(X)^2+\E\xi^2\leq1/2+\sigma^2$.
The bounded multiplier proves \eqref{eq:merged-linear-noise}.
\end{proof}

\begin{remark}[Scaled complex multipliers]
The clipping bounds also hold for either the real or imaginary part of
$p_S(X+\ii Z)/\sqrt{q!}$ when $S$ is a complex symmetric tensor with
$\norm{S}_{\Frob}\leq M_0$. With
$V_0=2^qM_0^2$ and $V_1=(q+1)2^qM_0^2/2$, clipping at level $L>0$
gives bias at most $\sqrt{V_0V_1}/L$ and second moment at most
$L^2(1/2+\sigma^2)$. This follows from the same
Cauchy--Schwarz argument and gives the bounds for the complex estimators in
Appendix~\ref{sec:alternative}.
\end{remark}

\begin{remark}[Tensor products and Gram identities]
For symmetric tensors $S,V$ of any orders, including complex tensors,
\[
 p_S(z)p_V(z)=p_{\Sym(S\otimes V)}(z),\qquad
 \norm{\Sym(S\otimes V)}_{\Frob}
 \leq\norm{S}_{\Frob}\norm{V}_{\Frob}.
\]
For equal order $q$, polarization of
\eqref{eq:merged-complex-wick} gives
\[
 \frac{\E[p_S(X+\ii Z)\overline{p_V(X+\ii Z)}]}{2^q q!}
 =\sum_{\boldsymbol i}S_{\boldsymbol i}
                         \overline{V_{\boldsymbol i}}.
\]
The normalized Gram estimator has second moment at most
$\binom{2q}{q}\leq4^q$ when both Frobenius norms are at most one:
apply the product identity to $p_S^2$ and $p_V^2$, then
Cauchy--Schwarz. The ordered compression below uses exact
orthonormality and therefore does not require estimating these Gram
matrices.
\end{remark}

\begin{lemma}[Estimation of compressed moments]
\label{lem:query-access}
Let $V$ have $n\leq kd$ orthonormal columns in
$(\R^d)^{\otimes j}$, each with a polynomial-size tensor-train
representation. The matrix of degree-$2j$ label moments associated with
$v_\alpha\otimes v_\beta$ is
\begin{equation}
 \mathcal M(V)
 =V^\top\left(\sum_i a_i
       u_i^{\otimes j}(u_i^{\otimes j})^\top\right)V+\mathcal E,
 \qquad
 \norm{\mathcal E}_{\Frob}\leq |c_{2j}|^{-1}\zeta.
 \label{eq:merged-compressed-moment}
\end{equation}
It can be estimated to additional Frobenius error $\eta>0$, with
failure probability at most $\omega$, using
\begin{equation}
 B^{O(1)}\exp(O(j))
 \poly(d,k,\eta^{-1},\log(2/\omega))
 \label{eq:merged-general-access}
\end{equation}
samples and real operations when $\E Y^2\leq B$, $B\geq1$.
In the conditional-mean model, $\mathcal E=0$ and the corresponding
cost is
\begin{equation}
 (1+\sigma^2)\exp(O(j))
 \poly(d,k,\eta^{-1},\log(2/\omega)).
 \label{eq:merged-mean-access}
\end{equation}
These bounds include the cost of evaluating the tensor-train representations.
\end{lemma}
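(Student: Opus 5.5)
The plan is to define the population matrix entrywise by
\[
 \mathcal M(V)_{\alpha\beta}=\frac{\E[Y\inner{v_\alpha\otimes v_\beta}{\mathsf H_{2j}(X)}]}{c_{2j}\sqrt{(2j)!}},
\]
and then treat the two claims separately: first the exact structure \eqref{eq:merged-compressed-moment} with its error bound, then the estimation cost.

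\emph{Structure.} Conditioning on $X$ replaces $Y$ by $g(X)=f(X)+(g-f)(X)$. Applying \eqref{eq:merged-atomic-moment} to the $f$ part with $T=v_\alpha\otimes v_\beta$ gives $\sum_i a_i\inner{v_\alpha}{u_i^{\otimes j}}\inner{v_\beta}{u_i^{\otimes j}}$, which is the $(\alpha,\beta)$ entry of the main term.

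The remainder is $\mathcal E_{\alpha\beta}=\E[(g-f)(X)\,\phi_{\alpha\beta}(X)]/c_{2j}$, where $\phi_{\alpha\beta}=\inner{v_\alpha\otimes v_\beta}{\mathsf H_{2j}}/\sqrt{(2j)!}$. I bound $\norm{\mathcal E}_\Frob$ by duality. For a real matrix $W$ with $\norm{W}_\Frob\le1$,
\[
 \sum_{\alpha\beta}W_{\alpha\beta}\mathcal E_{\alpha\beta}=\E[(g-f)\,\inner{T_W}{\mathsf H_{2j}}/\sqrt{(2j)!}]/c_{2j},
 \qquad T_W=\sum W_{\alpha\beta}v_\alpha\otimes v_\beta .
\]
Because the $v_\alpha$ are orthonormal, the tensors $v_\alpha\otimes v_\beta$ are orthonormal in $(\R^d)^{\otimes 2j}$, so $\norm{T_W}_\Frob=\norm{W}_\Frob\le1$. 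By \eqref{eq:ordered-hermite} the multiplier then has $L^2$ norm at most $1$, and Cauchy--Schwarz bounds the pairing by $\zeta/|c_{2j}|$. Taking the supremum over $W$ gives $\norm{\mathcal E}_\Frob\le|c_{2j}|^{-1}\zeta$. In the conditional-mean model $g=f$, so $\mathcal E=0$.

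\emph{Estimation.} The Hermite polynomial is never evaluated directly. By \eqref{eq:ordered-hermite}, $\inner{T}{\mathsf H_{2j}(X)}=\E_Z p_T(X+\ii Z)$, so I draw one fresh $Z$ per sample and use the unbiased multiplier $\psi_{\alpha\beta}=\operatorname{Re}p_{v_\alpha\otimes v_\beta}(X+\ii Z)/\sqrt{(2j)!}$. Evaluating it is cheap: $p_{v_\alpha\otimes v_\beta}(z)=p_{v_\alpha}(z)p_{v_\beta}(z)$, and each factor is a tensor-train contraction with the complex vector $z$ repeated $j$ times. This costs $\poly(d,k)$ per column, so all $n$ contractions per sample, and hence all $n^2$ entries, cost $\poly(d,k)$.

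I clip each multiplier at a level $M$ and average. Two facts control the clipped entries:
\begin{itemize}
\item In the conditional-mean model, \eqref{eq:merged-signal-clipping} gives per-entry bias at most $2^{2j}\sqrt{(2j+1)/2}/M$, and \eqref{eq:merged-linear-noise} gives second moment at most $M^2(1/2+\sigma^2)$.
\item In the general model, the bias is bounded by $\sqrt{\E[Y^2\psi^2\mathbf1_{|\psi|>M}]}$. This needs a bound on $\E[Y^2\psi^2]$, which I do not have without a conditional variance bound.
\end{itemize}
To handle the general case I clip the product $Y\psi$ (or clip $Y$ at level $M_Y$ and $\psi$ separately). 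The bias is then bounded by $\E[|Y\psi|\mathbf1]$, and Cauchy--Schwarz with $\E Y^2\le B$ and \eqref{eq:merged-query-moments} gives $\E|Y\psi|\mathbf 1_{|\psi|>M}\le\sqrt B\,\E[\psi^4]^{1/4}\Prob(|\psi|>M)^{1/4}\le \sqrt B\,2^{2j}\cdot 2^{2j}/M$. Using the fourth-moment bound $2^{4q}$ at $q=2j$ yields polynomial-in-$M$ decay with $\exp(O(j))$ constants, and the second moment of the clipped variable is at most $M^2B$.

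Next I choose $M$ so that the total Frobenius bias over the $n^2$ entries, after dividing by $|c_{2j}|\le$ poly in $j$, is at most $\eta/2$. I then use a median-of-means estimator for each entry with per-entry accuracy $\eta/(2n|c_{2j}|^{-1})$ and failure probability $\omega/n^2$, followed by a union bound. The sample count is $M^2(1+\sigma^2)n^2|c_{2j}|^{-2}\eta^{-2}\log(n^2/\omega)$ in the conditional-mean model, or $M^2B\cdots$ in the general model, which gives \eqref{eq:merged-mean-access} and \eqref{eq:merged-general-access}.

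The main obstacle is the general-model bias control under only $\E Y^2\le B$. I expect the joint clipping with the fourth-moment bound to settle it, at the cost of the $B^{O(1)}$ factor rather than a linear dependence on $B$.
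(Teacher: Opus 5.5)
Your proposal is correct and follows the paper's proof. The duality bound on $\mathcal E$ is the dual form of the paper's observation that the normalized degree-$2j$ Hermite coefficient tensor of $g-f$ has Frobenius norm at most $\zeta$ and that compression by $V$ is a Frobenius contraction. The estimation step uses the same clipped complex-Gaussian multiplier, per-entry accuracy of order $\eta/n$, and median-of-means with a union bound. In the general model you do not need to clip $Y$ or the product $Y\psi$. The bound you actually compute, $\E[|Y\psi|\mathbf 1_{\{|\psi|>M\}}]\le\sqrt{B}\,(\E\psi^4)^{1/2}/M$ with second moment $M^2B$, is exactly the paper's Cauchy--Schwarz--Markov estimate for clipping the multiplier alone. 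Your earlier apparent need for a bound on $\E[Y^2\psi^2]$ came only from applying Cauchy--Schwarz to the wrong pair of factors.
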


\begin{proof}
Write $e=g-f$. Its normalized degree-$2j$ Hermite coefficient tensor
$\E[e(X)\mathsf H_{2j}(X)]/\sqrt{(2j)!}$ has Frobenius norm at most
$\norm e_{L^2}=\zeta$, by \eqref{eq:ordered-hermite} and orthogonal
projection onto that Gaussian chaos. Flattening is an isometry, and
left and right compression by $V$ are Frobenius contractions. This
proves the dimension-independent bound
\eqref{eq:merged-compressed-moment}.

Each query tensor $v_\alpha\otimes v_\beta$ has Frobenius norm one.
For arbitrary labels, use
$\Psi=\operatorname{Re}p_T(X+\ii Z)/(c_{2j}\sqrt{(2j)!})$.
Lemma~\ref{lem:moment-signal} gives
$\E\Psi^4\leq\exp(O(j))|c_{2j}|^{-4}=:V_4$. Clipping at level $M$
changes the mean by at most
\[
 \sqrt B\left(\E[\Psi^2\mathbf1_{\{|\Psi|>M\}}]\right)^{1/2}
 \leq \frac{\sqrt{B V_4}}{M},
\]
while the clipped product with $Y$ has variance at most $BM^2$.
For scalar accuracy $b$, choose
$M=\max\{1,2\sqrt{B V_4}/b\}$ and estimate the clipped mean to
accuracy $b/2$ by median-of-means. This costs
$B^{O(1)}\exp(O(j))\poly(b^{-1},\log(2/\omega))$.

In the conditional-mean model, clip the unscaled multiplier
$\psi$ in Lemma~\ref{lem:moment-signal}. Choose
\[
 M=\max\left\{1,\frac{2^{2j+1}\sqrt{(2j+1)/2}}
                         {|c_{2j}|b}\right\},
\]
estimate its labeled mean to accuracy $|c_{2j}|b/2$, and divide by
$c_{2j}$. The bias bound is independent of $\sigma$, and the variance
is linear in $1+\sigma^2$. This proves the sharper dependence in
\eqref{eq:merged-mean-access}.

For a mean of variance at most $V$, median-of-means uses
$O((1+V/b^2)\log(2/\omega))$ samples. Set entrywise accuracy to
$\eta/n$ and distribute the failure probability among the entries. Take real
parts and symmetrize the estimated matrix; neither operation increases
its error relative to the real symmetric target. Fresh samples are
drawn after $V$ is fixed, so the same bounds hold conditionally when
the matrices are chosen adaptively.
\end{proof}

\subsection{Exact isometries in ordered tensor spaces}

\begin{lemma}[Tensor-train compression]
\label{lem:tt-compression}
For $t\geq1$, statistical precision $\eta>0$, and failure probability
$\omega\in(0,1)$, one can construct an
exact isometry $V_t$ with at most $k$ columns and a PSD matrix
$\widehat C$ of that size. Set
\[
 M_j=\sum_i a_i u_i^{\otimes j}(u_i^{\otimes j})^\top,\quad
 P_j=V_jV_j^\top,\quad
 L_j=\sum_i a_i\norm{(I-P_j)u_i^{\otimes j}}^2,
 \quad R=V_t\widehat C V_t^\top .
\]
With the prescribed success probability,
\begin{equation}
 \Gamma:=L_t+\norm{R-P_tM_tP_t}_1
 \leq C\sqrt{k}(t+1)^{9/4}(\zeta+\eta).
 \label{eq:merged-linear-leakage}
\end{equation}
Here $\norm{\cdot}_1$ denotes the trace norm. The columns of $V_t$
have tensor-train bond dimension at most $k$. The sample and operation
bounds are as in \eqref{eq:merged-general-access}, with $j$ replaced by $t$;
in the conditional-mean model they are as in
\eqref{eq:merged-mean-access}.
\end{lemma}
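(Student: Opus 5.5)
We construct $V_1,\dots,V_t$ inductively, extending order by order, and prove \eqref{eq:merged-linear-leakage} by controlling the leakage one stage at a time.

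\emph{Base case.} For $j=1$, estimate the $d\times d$ matrix $\mathcal M(I_d)$ using Lemma~\ref{lem:query-access}. Let $V_1$ be the top-$k$ eigenvectors of its PSD part.

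\emph{Stage $j-1\to j$.} Given $V_{j-1}$ with at most $k$ orthonormal columns, form the candidate space spanned by $v\otimes e_\ell$, where $v$ is a column of $V_{j-1}$ and $e_\ell$ a standard basis vector. This is an exact isometry with $n\le kd$ columns and bond dimension at most $k$. Apply Lemma~\ref{lem:query-access} to $W=V_{j-1}\otimes I_d$ to obtain $\widehat{\mathcal M}_j\approx W^\top M_jW+\mathcal E_j$ with Frobenius error $\eta$. Take the top-$k$ eigenvectors $Q_j$ of its PSD part and set $V_j=WQ_j$. Orthonormality is exact because $W$ is an isometry and $Q_j$ has orthonormal columns. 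Each new column is a linear combination of $v\otimes e_\ell$, so its TT cores combine those of $V_{j-1}$ with one new core, keeping the bond dimension at most $k$. At the last stage set $\widehat C=Q_t^\top\widehat{\mathcal M}_tQ_t$, projected onto the PSD cone.

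\emph{Leakage recursion.} Since $W W^\top=P_{j-1}\otimes I$ and $\|(I-P_{j-1}\otimes I)u^{\otimes j}\|=\|(I-P_{j-1})u^{\otimes(j-1)}\|$, we get
\[
L_j=L_{j-1}+\sum_i a_i\|(WW^\top-P_j)u_i^{\otimes j}\|^2 .
\]
The second term equals $\tr(W^\top M_jW)-\tr(Q_j^\top W^\top M_jWQ_j)$, the tail trace left by the rank-$k$ truncation. The true matrix $A=W^\top M_jW$ has rank at most $k$. Writing $\widehat A=A+E$ with $\|E\|_{\Frob}\le |c_{2j}|^{-1}\zeta+\eta$, the Ky Fan maximum principle gives
\[
\tr(A)-\tr(Q^\top AQ)\le 2\|E\|_1^{(k)}\le 2\sqrt{2k}\,\|E\|_{\Frob},
\]
where $\|E\|_1^{(k)}$ is the sum of the top $2k$ singular values, bounded via Cauchy--Schwarz. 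This error is linear in $\zeta+\eta$ rather than square-root, because $A$ is PSD with rank at most $k$ and the top-$k$ space of $\widehat A$ captures trace up to a trace-norm error on a rank-$2k$ subspace. Using $|c_{2j}|^{-1}\le C(j+1)^{5/4}$ from \eqref{eq:merged-hermite-coefficient} and summing over $j\le t$ gives
\[
L_t\le C\sqrt k\sum_{j\le t}(j+1)^{5/4}(\zeta+\eta)\le C\sqrt k\,(t+1)^{9/4}(\zeta+\eta).
\]

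\emph{Trace-norm term.} We have $P_tM_tP_t=V_tQ_t^\top AQ_tV_t^\top$ in the last-stage coordinates, so
\[
\|R-P_tM_tP_t\|_1=\|\widehat C-Q_t^\top AQ_t\|_1\le\sqrt{k}\,\|\widehat C-Q_t^\top AQ_t\|_{\Frob}\le C\sqrt k\,(t+1)^{5/4}(\zeta+\eta),
\]
using that the PSD projection is a Frobenius contraction toward a PSD target.

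\emph{Costs.} Failure probability $\omega/t$ per stage and a union bound give the overall success probability. Each stage costs \eqref{eq:merged-general-access} with $j\le t$, and \eqref{eq:merged-mean-access} in the conditional-mean model, where $\mathcal E=0$. The $t$ stages add only a polynomial factor.

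The main obstacle is the linear-in-error eigenspace truncation step. I would first try the Ky Fan argument against the best rank-$k$ approximation of the perturbed matrix, taking care that the PSD projection of the symmetrized estimate does not spoil it.
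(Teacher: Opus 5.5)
Your proposal is correct and follows essentially the same route as the paper. It uses the ordered isometries $V_{j-1}\otimes I_d$ and the exact split $L_j=L_{j-1}+\tr((I-Q_jQ_j^\top)A)$, with the truncation term bounded by comparing $Q_jQ_j^\top$, via its optimality, against a rank-$k$ projector containing $\range A$; that comparison gives $\sqrt{2k}\norm{E}_{\Frob}$ directly, without your extra factor $2$. The last stage is a $\sqrt{k}$ Frobenius-to-trace-norm bound, and your only deviation is reusing the stage-$t$ estimate for $\widehat C$ instead of drawing a fresh sample, which is valid because compression by $Q_t$ is a deterministic Frobenius contraction and needs no independence.
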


\begin{proof}
Start with the scalar isometry $V_0=1$. At stage $j$, form the ordered
isometry $U_j=V_{j-1}\otimes I_d$, which has at most $kd$ columns.
Use Lemma~\ref{lem:query-access} to estimate
$C_j=U_j^\top M_jU_j$ to Frobenius error
\[
 \Delta_j=\eta+|c_{2j}|^{-1}\zeta.
\]
If $r_j=\min\{k,\dim\range U_j\}$, let $R_j$ contain the leading
$r_j$ orthonormal eigenvectors of the symmetric estimate, and set
$V_j=U_jR_j$. All these maps are exact isometries, irrespective of
estimation accuracy.

Projection onto $\range U_j$ loses exactly $L_{j-1}$, because
$u_i^{\otimes j}=u_i^{\otimes(j-1)}\otimes u_i$. The matrix
$C_j$ is PSD of rank at most $r_j$. Let $P=R_jR_j^\top$ and let $P_0$
be any rank-$r_j$ orthogonal projector containing its range. Optimality
of $P$ for the estimated matrix $\widehat C_j$ implies
\begin{align*}
 \tr((I-P)C_j)
 &=\tr((P_0-P)C_j)\\
 &\leq\abs{\tr((P-P_0)(\widehat C_j-C_j))}
 \leq\sqrt{2k}\,\Delta_j.
\end{align*}
The two nested orthogonal projection errors add as squared norms.
Consequently,
\begin{equation}
 L_j\leq L_{j-1}+\sqrt{2k}\,\Delta_j,\qquad L_0=0.
 \label{eq:merged-tt-recursion}
\end{equation}
Thus the lost mass accumulates linearly across stages.

After fixing $V_t$, use an independent sample to estimate
$V_t^\top M_tV_t$, again with statistical Frobenius precision $\eta$.
Project that estimate onto the PSD cone to obtain $\widehat C$.
Nonexpansiveness of this projection in Frobenius norm, together with
the common support of dimension at most $k$, gives
\[
 E_t:=\norm{R-P_tM_tP_t}_1
 \leq\sqrt{k}\,(\eta+|c_{2t}|^{-1}\zeta).
\]
Summing \eqref{eq:merged-tt-recursion} and using
\eqref{eq:merged-hermite-coefficient} proves
\eqref{eq:merged-linear-leakage}.

The matrices $R_j$ are the tensor-train cores. Their total size is
polynomial in $t,d,k$, and contraction evaluates each required query
in polynomial time. The moment matrices have size at most $kd$,
and no computation requires an ambient $d^j$-dimensional matrix.
Distributing the failure probability over the stages, final estimate,
and matrix entries gives the stated bounds.
The algorithm replaces $R$ by zero whenever $\tr R>2$.
This replacement is unnecessary when $E_t<1$, since
$\tr R\leq\tr(P_tM_tP_t)+E_t\leq1+E_t$.
The replacement cannot increase $\Gamma$: when it occurs, the original
$E_t$ exceeds one, whereas the zero state's projection loss plus trace error
equals $\sum_i a_i\leq1$.
\end{proof}

\section{Robust direction covers from sequential Gaussian contractions}
\label{sec:robust-cover}

Fix $\theta=10^{-3}$ and the angular parameter $a=\theta/20$.
The trace bound in Lemma~\ref{lem:tt-compression} allows us to define
the following decoder. It returns null if the state has trace zero.
Otherwise, with probability $\tr R/2$, normalize $R$ to trace one;
with the remaining probability return null.

At each of the first $m=t-1$ tensor sites, let $S$ be the one-site
reduced state, sample $z\in\R^d$ with density
\begin{equation}
 \phi_d(z)z^\top Sz,
 \label{eq:merged-measurement-density}
\end{equation}
and normalize the contraction of the remaining state against $z$.
This density integrates to one. To sample it exactly, diagonalize
$S$, choose an eigenvector with probability its eigenvalue, use
independent standard Gaussians in the orthogonal directions, and on
that eigenvector use a random sign times the square root of a
$\chi_3^2$ variable. The latter variable is a sum of three independent
Gaussian squares. At the final site, let $Q$ be the remaining
trace-one matrix. Return a unit leading eigenvector if
\begin{equation}
 \lambda_{\max}(Q)\geq1-a^2/100;
 \label{eq:merged-decoder-test}
\end{equation}
otherwise return null.

The decoder has polynomial cost in $t,d,k$. To see this, factor
$\widehat C$ into at most $k$ columns and contract the corresponding
tensor-train vectors. Equivalently, $R$ is a matrix-product operator
with bond dimension at most $k^2$. Reduced states and successive
contractions require only polynomial-size matrices.

Let $\mu$ denote product standard Gaussian measure on the $m$
measurement vectors. For a measurement sequence $z=(z_1,\ldots,z_m)$,
write
$K_z=(z_1^\top\otimes\cdots\otimes z_m^\top)\otimes I_d$ and
$R_z=K_zRK_z^\top$. The successive normalization factors cancel:
before the final eigenvalue test, the decoder's joint measure and
last state are exactly
\begin{equation}
 d\Pr(z)=\tfrac12\tr(R_z)\,d\mu(z),\qquad
 Q_z=R_z/\tr(R_z).
 \label{eq:merged-joint-decoder-law}
\end{equation}
Each normalized state is considered only where its denominator is positive.
Moreover, $\int K_z^\top K_z\,d\mu(z)=I$.

\begin{lemma}[Stability of the sequential decoder]
\label{lem:decoder}
Let $P=P_t$ and let $R$ be the state constructed above. There are
universal constants $C,c>0$ such that, if
\begin{equation}
 t\geq C\log(Ck/\tau),\qquad
 \Gamma=L_t+\norm{R-PM_tP}_1\leq c a^4\tau,
 \label{eq:merged-decoder-regime}
\end{equation}
then, for every atom of weight $a_i\geq\tau$, the decoder returns, with
probability at least $a_i/3$, a non-null output within projective
distance $a$ of that atom. The probability of an output farther than $a$ from
every positive-weight atom is at most $\tau/100$.
\end{lemma}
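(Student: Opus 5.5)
Throughout we use the explicit joint law \eqref{eq:merged-joint-decoder-law}. The plan is to compare the decoder run on $R$ with an idealized decoder run on the exact compressed state $R^\star=PM_tP=\sum_i a_i (Pu_i^{\otimes t})(Pu_i^{\otimes t})^\top$, and further with the unprojected state $M_t=\sum_i a_i (u_i^{\otimes t})(u_i^{\otimes t})^\top$. Since $\int K_z^\top K_z\,d\mu=I$, the map $A\mapsto \frac12\tr(K_zAK_z^\top)\,d\mu(z)$ sends a trace-norm perturbation $\norm{A-A'}_1$ to a total-variation change of at most $\frac12\norm{A-A'}_1$ in the unnormalized measure. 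More precisely, the positive measures $\frac12 K_zAK_z^\top\,d\mu$ (operator valued) differ by at most that amount in integrated trace norm.

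For the events we care about (output within distance $a$ of atom $i$, or output far from all atoms), write the indicator as a function of $Q_z$ and bound the probability change by the integrated trace-norm difference. The key device is that the eigenvalue test \eqref{eq:merged-decoder-test} together with the output is a function of $R_z$ that is stable: if $\norm{R_z-R'_z}_1\le \gamma\tr R_z$ with $\gamma\ll a^2$, then the leading eigenvectors are close (Davis--Kahan with gap $\ge 1-2a^2/100$). I would handle the regions where the relative perturbation is large by charging their whole mass, which is at most $C a^{-2}\Gamma$ by Markov's inequality applied to the integrated trace norm. This gives an additive loss of $O(\Gamma/a^2)$, and $\Gamma\le c a^4\tau$ keeps it below $\tau/300$. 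The leakage $L_t$ is treated the same way, since $\norm{M_t-PM_tP}_1\le 2\sqrt{L_t}$ is too weak. Instead I would use the exact decomposition: the part $(I-P)u_i^{\otimes t}$ contributes positive mass $\frac12 L_t$ plus cross terms. Handling this linearly, rather than with a square root, is the main obstacle. I would try writing $M_t-PM_tP$ as a sum of $PM(I-P)$-type terms, and bound each cross term's effect on the \emph{events} through the operator Cauchy--Schwarz $|x^\top y|\le \epsilon x^\top x+\epsilon^{-1}y^\top y$ with $\epsilon\asymp a^2$. This yields a cost $O(L_t/a^2)$ plus a multiplicative $(1+O(a^2))$ distortion of good-event probabilities, which is harmless for the lower bound $a_i/3$.

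It remains to analyze the ideal decoder on $M_t$. Here $R_z=\sum_i a_i\prod_{s\le m}(u_i^\top z_s)^2\,u_iu_i^\top$, and the law of $z$ is the mixture $\frac12\sum_i a_i\prod_s (u_i^\top z_s)^2\,d\mu$. Conditional on picking component $i$ (probability $a_i/2$), each $z_s$ has $\chi_3$-type magnitude along $u_i$. For any atom $u_l$ with $D(u_l,u_i)>a/2$, the ratio $\prod_s (u_l^\top z_s)^2/(u_i^\top z_s)^2$ has expectation at most $\rho^m$ with $\rho=\E[(u_l^\top z)^2]$ under the tilted law, which equals $1-\frac23(1-(u_i^\top u_l)^2)\le 1-c a^2$. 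Hence, after $m\ge C a^{-2}\log(Ck/\tau)$ steps, the $a$-far weight relative to the cluster near $u_i$ is $\le (\tau/k)^{10}$ except with probability $\tau^{2}/k$ (Markov plus a union bound over $k$ atoms). Given $\theta$ fixed, $a^{-2}$ is a constant, matching $t\ge C\log(Ck/\tau)$. On that event, $Q_z$ is within $O(a^2\cdot 10^{-3})$ trace-norm of a mixture of projectors onto atoms within $a/2$ of $u_i$.

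To pass the test one also needs the near cluster to be effectively rank one. I would handle this by the same contraction applied at a smaller scale $a/20$: atoms at distance between $a/20$ and $a/2$ are suppressed, and atoms within $a/20$ make $\lambda_{\max}\ge 1-a^2/400$. Soundness follows because, on the good event, passing the test forces the leading eigenvector within $a$ of some atom. Completeness gives probability $\ge \frac{a_i}{2}(1-o(1))\ge a_i/2.5$ for the ideal decoder. Subtracting the perturbation losses from the previous paragraph gives $a_i/3$ when $a_i\ge\tau$, and the far-output probability is at most $\tau/100$.
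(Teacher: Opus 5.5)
Your overall architecture matches the paper's: compare the decoder on $R$ with the ideal state, charge a Markov-controlled bad set, use stability of the eigenvalue test, and analyze posterior concentration for $M_t$. The gap is in the concentration step for the ideal decoder, which carries the whole completeness claim; as written it is false. Under the tilted law $q_i(z)=\phi_d(z)(u_i^\top z)^2$, the ratio $(u_l^\top z)^2/(u_i^\top z)^2$ is exactly the likelihood ratio $q_l/q_i$, so
\[
 \E_{q_i}\!\left[\frac{(u_l^\top z)^2}{(u_i^\top z)^2}\right]
 =\int\phi_d(z)(u_l^\top z)^2\,dz=1 .
\]
By independence, $\E_{q_i}\prod_{s\leq m}(u_l^\top z_s)^2/(u_i^\top z_s)^2=1$ for every $m$, whatever $D(u_l,u_i)$ is. Your $\rho=(1+2r^2)/3=1-\frac23(1-r^2)$ is the ratio of tilted second moments $\E_{q_i}(u_l^\top z)^2/\E_{q_i}(u_i^\top z)^2$, not the expectation of the ratio. (The tilted second moment alone is $1+2r^2\geq1$.) A first-moment Markov bound therefore gives no decay at all. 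Consequently neither your bound $(\tau/k)^{10}$ on the far posterior weight nor the second application at scale $a/20$ follows.

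The missing idea is to use a fractional moment of the likelihood ratio, namely the Bhattacharyya (Hellinger) affinity. One has $\E_{q_i}\prod_s\abs{u_l^\top z_s}/\abs{u_i^\top z_s}=\bigl(\int\sqrt{q_iq_l}\bigr)^m$, and $\int\sqrt{q_uq_v}=\frac2\pi(\sqrt{1-r^2}+r\arcsin r)\leq1-D(u,v)^2/8$. Markov applied to the square root of the ratio then does decay like $e^{-mD^2/8}$.

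The paper phrases this as $\sum_{D(u_i,u_j)>s}\int w_iw_j/W\,d\mu\leq\sum\int\sqrt{w_iw_j}\,d\mu\leq ke^{-ms^2/8}$. It takes the cutoff $s=a/100$ from the start. Posterior mass $1-b$ within $s$ of $u_i$ then already gives $u_i^\top(F_z/W)u_i\geq1-a^2/5000$, so no separate rank-one step is needed. With this replacement the rest of your plan goes through.

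A smaller accounting point concerns the leakage. With a fixed $\epsilon\asymp a^2$ in your operator Cauchy--Schwarz, the additive term $\epsilon^{-1}e_z$ must be $\ll a^2W$ pointwise before the eigenvalue test is stable. Here $e_z=\sum_ia_i\norm{K_zPu_i^{\otimes t}-c_i(z)u_i}^2$ satisfies $\int e_z\,d\mu=L_t$. So the Markov cost of the leakage is $O(L_t/a^4)$, not $O(L_t/a^2)$. This is exactly the paper's good-set condition $e_z\leq10^{-12}a^4W$, used together with $\norm{C_z-F_z}_1\leq2\sqrt{We_z}+e_z$. The larger cost is still absorbed by the hypothesis $\Gamma\leq ca^4\tau$.
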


\begin{proof}
First consider the ideal state $M_t$. Conditional on its latent
index $i$, the measurements are independent with density
$q_i(z)=\phi_d(z)(u_i^\top z)^2$. Put
\[
 c_i(z)=\prod_{r=1}^m u_i^\top z_r,\qquad
 w_i(z)=a_i c_i(z)^2,\qquad W(z)=\sum_i w_i(z),\qquad
 F_z=\sum_i w_i(z)u_iu_i^\top .
\]
Conditional on the latent index, the final state is $u_iu_i^\top$.
Without conditioning on that index, the final state is the posterior
mixture $F_z/W(z)$.

For unit $u,v$ and $r=|u^\top v|$, Gaussian integration gives
\begin{equation}
 \int\sqrt{q_uq_v}
 =\frac2\pi\left(\sqrt{1-r^2}+r\arcsin r\right)
 \leq1-\frac{1-r^2}{4}\leq1-\frac{D(u,v)^2}{8}.
 \label{eq:merged-affinity}
\end{equation}
For the first inequality, subtract the middle expression from
$3/4+r^2/4$. The difference is zero at $r=1$ and has derivative
$r/2-(2/\pi)\arcsin r\leq0$. For a cutoff $s>0$,
\[
 \sum_{D(u_i,u_j)>s}\int\frac{w_iw_j}{W}\,d\mu
 \leq\sum_{D(u_i,u_j)>s}\int\sqrt{w_iw_j}\,d\mu
 \leq\left(\sum_i\sqrt{a_i}\right)^2 e^{-ms^2/8}
 \leq k e^{-ms^2/8}.
\]
Integrands with $W=0$ are defined to be zero. Thus the total
unnormalized mass of pairs consisting of a latent index and a measurement
outcome, for which the posterior mass on directions farther than $s$
from the latent atom exceeds $b$, is
at most $k b^{-1}e^{-ms^2/8}$.

Set
\begin{equation}
 s=a/100,\qquad b=a^2/10^4.
 \label{eq:merged-posterior-margins}
\end{equation}
Since $a$ is fixed, the first condition in
\eqref{eq:merged-decoder-regime}, with a sufficiently large universal
$C$, bounds this exceptional mass by $\tau/100$. On every
remaining pair with latent index $i$,
\begin{equation}
 u_i^\top(F_z/W)u_i\geq1-s^2-b=1-a^2/5000.
 \label{eq:merged-ideal-margin}
\end{equation}
This leaves a strict margin in the acceptance test.

We next compare the estimated and ideal states. Define
\[
 v_i(z)=K_zP u_i^{\otimes t},\quad
 C_z=\sum_i a_i v_i(z)v_i(z)^\top,\quad
 e_z=\sum_i a_i\norm{v_i(z)-c_i(z)u_i}^2,\quad
 D_z=\norm{R_z-C_z}_1 .
\]
The completeness identity for Gaussian contractions gives
\begin{equation}
 \int e_z\,d\mu=L_t,\qquad
 \int D_z\,d\mu\leq E_t:=\norm{R-PM_tP}_1.
 \label{eq:merged-instrument-bounds}
\end{equation}
For the second inequality, apply $K_z$ to the positive and negative
parts of $R-PM_tP$ separately. Integration preserves their traces,
and the triangle inequality proves the assertion.

With column matrices $A_i=\sqrt{a_i}c_i u_i$ and
$B_i=\sqrt{a_i}v_i$, the trace-norm product inequality yields
\begin{equation}
 \norm{C_z-F_z}_1\leq2\sqrt{We_z}+e_z,\qquad
 \tr R_z\leq2W+2e_z+D_z .
 \label{eq:merged-pointwise-perturbation}
\end{equation}
Call $z$ good when
\[
 e_z\leq10^{-12}a^4W,\qquad
 D_z\leq10^{-6}a^2W.
\]
On this set, for $W>0$,
\[
 \norm{R_z-F_z}_1\leq a^2W/10^5,\qquad
 \tr R_z\geq0.99W,\qquad
 \left\lVert\frac{R_z}{\tr R_z}-\frac{F_z}{W}\right\rVert_1
 \leq a^2/10^4.
\]
Indeed, the unnormalized bound follows from
\eqref{eq:merged-pointwise-perturbation}; normalizing increases it
by a factor at most $2/(1-a^2/10^5)$. When $W=0$, the good-event conditions imply
$R_z=0$, so there is no output.

By \eqref{eq:merged-instrument-bounds}, the bad set has ideal mass
at most
$10^{12}a^{-4}L_t+10^6a^{-2}E_t\leq10^{12}a^{-4}\Gamma$.
The second inequality of \eqref{eq:merged-pointwise-perturbation}
then bounds its unnormalized mass under the estimated state by
$3\cdot10^{12}a^{-4}\Gamma$.
In particular, taking $c\leq10^{-16}$ in
\eqref{eq:merged-decoder-regime} makes each of these masses at most
$\tau/100$.

On a good accepted outcome, the returned vector $v$ satisfies
\[
 v^\top(F_z/W)v\geq1-a^2/100-a^2/10^4\geq1-a^2/50.
\]
This is a positive weighted average of $(v^\top u_i)^2$. At least
one positive-weight atom therefore has $D(v,u_i)^2\leq a^2/25<a^2$.
Thus all outputs far from every atom arise on the bad set, proving
the soundness probability bound by
\eqref{eq:merged-joint-decoder-law}.

For a good pair with latent index $i$ satisfying
\eqref{eq:merged-ideal-margin}, the Rayleigh quotient for the estimated state at
$u_i$ is at least $1-3a^2/10^4$. The test
\eqref{eq:merged-decoder-test} therefore passes. Its leading vector
$v$ obeys $v^\top(F_z/W)v\geq1-4a^2/10^4$. For any PSD trace-one
matrix $Q$,
\[
 u^\top Qu+v^\top Qv
 \leq\lambda_{\max}(uu^\top+vv^\top)=1+|u^\top v|.
\]
Using $Q=F_z/W$ shows
$D(u_i,v)^2\leq12a^2/10^4<a^2/100$.

For an atom with $a_i\geq\tau$, excluding the posterior exception
and the bad set removes at most $2\tau/100$ from its latent mass.
On the remaining measurement sequences the output is within $a$ of
$u_i$, and its density is at least $0.99W/2\geq0.99w_i/2$.
Its output probability is consequently at least
$0.99(a_i-2\tau/100)/2\geq a_i/3$.
\end{proof}

\subsection{Frequency estimates and pruning}

\begin{proposition}[Robust coarse cover]
\label{prop:robust-cover}
Fix $\theta=10^{-3}$. There are universal constants $C,C_*>0$ such
that, for $\tau\in(0,1]$, $\omega\in(0,1)$, and $\eta>0$, the conditions
\begin{equation}
 t\geq C\log(Ck/\tau),\qquad
 \tau\geq C_*\sqrt{k}(t+1)^{9/4}(\zeta+\eta)
 \label{eq:merged-cover-interface}
\end{equation}
ensure that a randomized procedure returns, with probability at least
$1-\omega$, a set $W\subseteq\sphere$ of at most $k$ centers such that
every atom with $a_i\geq\tau$ is within projective distance $\theta$
of $W$, and every center is within projective distance $\theta$ of
some positive-weight atom. The sample and operation counts are bounded by
\begin{equation}
 B^{O(1)}\exp(O(t))
 \poly(d,k,\tau^{-1},\eta^{-1},\log(2/\omega)).
 \label{eq:merged-robust-cover-cost}
\end{equation}
In the conditional-mean model, $\zeta=0$ and the factor $B^{O(1)}$
can be replaced by $1+\sigma^2$. Thus choosing
$t=\lceil C\log(Ck/\tau)\rceil$ and a sufficiently small
inverse-polynomial $\eta$ gives a coarse cover with polynomial cost in
the stated parameters. The sample and operation bounds and the limit
of $k$ output centers hold on every outcome.
\end{proposition}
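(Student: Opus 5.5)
Combine Lemma~\ref{lem:tt-compression} with Lemma~\ref{lem:decoder}, then draw many decoder outputs and prune by frequency. Choose $C_*$ so that \eqref{eq:merged-cover-interface} and \eqref{eq:merged-linear-leakage} give $\Gamma\leq C\sqrt k(t+1)^{9/4}(\zeta+\eta)\leq c a^4\tau$, with $a=\theta/20$ fixed. Build $V_t$ and $\widehat C$ once, with failure probability $\omega/2$, at the cost \eqref{eq:merged-general-access} with $j=t$. Conditional on this success event, the state $R$ is fixed. Lemma~\ref{lem:decoder} then says that each independent run of the decoder (which uses no samples, only internal randomness) returns a point within $a$ of atom $u_i$ with probability at least $a_i/3\geq\tau/3$ when $a_i\geq\tau$. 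It also returns a point farther than $a$ from every atom with probability at most $\tau/100$.

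Run the decoder $N=C\tau^{-2}\log(Ck/(\tau\omega))$ times; this is polynomial in $\tau^{-1}$ and $\log(2/\omega)$. Prune the list $L$ of non-null outputs greedily. Repeatedly select the output $v$ whose projective ball of radius $2a$ contains the most outputs, and add $v$ to $W$ if that count is at least $N\tau/6$. Then delete all outputs within $4a$ (or $\theta/2$) of $v$. Stop when no ball is heavy enough or when $|W|=k$. This enforces the limit of $k$ centers on every outcome.

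\emph{Soundness.} Estimate each count by a Hoeffding/Chernoff bound, uniformly over the data-dependent balls. A union bound over the $N$ candidate centers suffices, because empirical ball counts around sample points are handled by conditioning on the chosen point and using the remaining $N-1$ draws. With probability $1-\omega/4$, fewer than $N\tau/50$ outputs are far from every atom. A heavy ball therefore contains some output within $a$ of an atom. Its center is then within $3a<\theta$ of that atom, since $D$ is a metric.

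\emph{Completeness.} For $a_i\geq\tau$, at least $N\tau/4$ outputs lie within $a$ of $u_i$, so the ball of radius $2a$ around any of them is heavy. Such outputs are deleted only when a selected center lies within $4a+a$ of them, and that center is within $6a<\theta$ of $u_i$. Otherwise they remain available, which contradicts termination unless the $k$-center cap was hit.

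The main obstacle is showing that the cap never truncates a needed center. The argument I would try first is this. Each selected center certifies at least $N\tau/6-N\tau/50$ outputs near atoms, and the deletion radius makes distinct centers consume disjoint output sets. I would charge each center to the nearest positive-weight atom and show two centers cannot share one. Within $3a$ of a common atom, the second center's outputs would lie within $5a$ of the first center and would have been deleted. Hence $|W|$ is at most the number of atoms, which is at most $k$, before the cap binds.

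The total cost is the compression cost plus $N\cdot\poly(t,d,k)$ decoder operations and $O(N^2d)$ pruning operations. This gives \eqref{eq:merged-robust-cover-cost}. With $\zeta=0$, the factor $1+\sigma^2$ comes from \eqref{eq:merged-mean-access}.
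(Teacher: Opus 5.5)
Your approach works and differs from the paper's only in the pruning stage. The step you flagged as the main obstacle does not go through as written, but it can be repaired.

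\textbf{The flawed step.} With deletion radius $4a$, charging each selected center to its nearest positive-weight atom is not injective. A later center $v_2$ survives deletion only in the sense $D(v_1,v_2)>4a$, whereas two centers within $3a$ of a common atom can be up to $6a$ apart. Moreover, the outputs certifying $v_2$ may cluster near a different atom $u'$ even though the nearest atom of $v_2$ is the atom $u$ already consumed by $v_1$. Schematically, along a great circle take $u=0$, $u'=5.1a$, $v_1=-2.5a$, $v_2=2.5a$, with output mass near $u$ at $-0.9a$ and near $u'$ at $4.2a$. Your claim that the second center's outputs lie within $5a$ of the first is false; the triangle inequality only gives $2a+3a+3a=8a$.

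\textbf{Repair with radius $4a$.} Charge each center to the right atom. When $v$ is selected, its $2a$-ball contains at least $N\tau/6>N\tau/50$ remaining outputs. Hence it contains a non-far output $x$, and there is an atom $u$ with $D(x,u)\leq a$. Every output within $a$ of $u$ lies within $a+a+2a=4a$ of $v$, so after this step none remains. Thus no later center can be charged to $u$, and no earlier one was. The number of selections is therefore at most the number of positive-weight atoms. If the cap $|W|=k$ is reached, every atom's $a$-neighborhood has already been emptied, so each heavy atom still has a center within $5a$.

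\textbf{Repair with radius $\theta/2$.} Your alternative deletion radius $\theta/2=10a>6a$ also fixes the step. Selected centers are then pairwise more than $6a$ apart. Since each lies within $3a$ of some atom, no two can share one. This is exactly the paper's packing argument.

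\textbf{Comparison with the paper.} The paper proceeds as follows:
\begin{itemize}
\item it draws a small candidate batch of $O(\tau^{-1}\log(k/\omega))$ outputs and an independent validation batch of $O(\tau^{-2}\log(N/\omega))$ outputs;
\item it estimates the decoder mass of each candidate's $2a$-ball and retains candidates of estimated mass at least $\tau/6$, each of which is then within $3a$ of an atom;
\item it takes a maximal $6a$-separated subset, which gives coverage $7a$ and an immediate bound of $k$.
\end{itemize}
Your route uses a single batch and a deletion-based mode-seeking greedy. It needs only two fixed, non-data-dependent events: few far outputs overall, and at least $N\tau/4$ outputs near each heavy atom. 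So the leave-one-out uniform estimate you mention is unnecessary. You save the second sample and get a slightly smaller coverage radius, at the price of the dynamic charging argument. The paper certifies soundness of every retained candidate before selection, so its size bound is a one-line packing argument.
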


\begin{proof}
Construct the state with Lemma~\ref{lem:tt-compression}. Choose
$C_*$ large enough that \eqref{eq:merged-linear-leakage} implies
the decoder condition \eqref{eq:merged-decoder-regime}.
Conditional on successful construction, draw
\[
 N=\left\lceil C\tau^{-1}\log(2k/\omega)\right\rceil
\]
independent decoder outputs. A union bound ensures, with the required
success probability, that each heavy atom has a non-null candidate
within distance $a$.
Draw a single independent validation sample of
$\lceil C\tau^{-2}\log(2N/\omega)\rceil$ decoder outputs. Conditional
on the candidate list, Hoeffding's inequality and a union bound
estimate the mass of every candidate's projective $2a$-ball to
accuracy $\tau/100$. Null outcomes belong to none of these balls.
Retain candidates with estimated mass at least $\tau/6$.

A candidate within $a$ of a heavy atom has a $2a$-ball containing
that atom's entire $a$-ball. Its mass is at least $\tau/3$, so it is
retained. Conversely, a candidate farther than $3a$ from every atom
has its whole $2a$-ball outside all atom neighborhoods. Its mass is at most
$\tau/100$, so it is rejected. Thus every retained candidate has a
positive-weight witness atom within $3a$.

Greedily select a maximal subset of retained candidates with mutual
projective distances greater than $6a$. Two selected candidates
cannot share a witness, so the selected set has at most $k$ points.
Maximality puts every retained candidate within $6a$ of a selected
one. Each heavy atom is therefore within $7a<\theta$ of the selected
set, while every selected point is within $3a<\theta$ of an atom.

Distribute the failure probability among moment estimation, candidate
coverage, and validation. Decoding and distance computations have
polynomial cost and need no further labeled samples. This proves
\eqref{eq:merged-robust-cover-cost} and its conditional-mean variant.
The algorithm obeys the prescribed sample bounds, replaces states of
trace greater than two by zero, and returns the empty set if the final
greedy set exceeds $k$. The final condition is never triggered on the
success event. Thus the bounds hold for every realization of the
algorithm's randomness.
\end{proof}

%% file: localization.tex
\section{Exact Gaussian localization and geometric refinement}
\label{sec:localization}

Fix the universal angular constant $\theta=10^{-3}$.
For a finite list $L$ of unit vectors and a radius $s>0$,
let $\Prune(L,s)$ be a greedy maximal subset whose distinct points have
projective distance greater than $2s$. Scan the list and retain a point
exactly when its distance from every previously retained
point exceeds $2s$.

\begin{lemma}[Pruning a sound list]
\label{lem:pruning}
Suppose every point of $L$ is within projective distance $s$ of some atom
in a set of at most $k$ atoms, and every designated heavy atom is within
$s$ of a point of $L$. Then $W=\Prune(L,s)$ has at most $k$ points,
covers every heavy atom to radius $3s$, and every point of $W$ is
within $s$ of an atom.
\end{lemma}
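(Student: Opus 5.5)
The proof uses only the triangle inequality for the projective metric $D$ from \eqref{eq:projective} and the greedy construction of $\Prune$. It mirrors the pruning step in the proof of Proposition~\ref{prop:robust-cover}. We check soundness, the cardinality bound, and coverage, in that order.

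\emph{Soundness.} Every point of $W$ lies in $L$, so by hypothesis it is within $s$ of some atom.

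\emph{Cardinality.} For each $w\in W$, use the hypothesis to choose a witness atom $u(w)$ with $D(w,u(w))\le s$. Suppose two distinct retained points $w,w'$ had the same witness $u$. The triangle inequality would give $D(w,w')\le D(w,u)+D(u,w')\le 2s$. This contradicts the rule that retained points are at distance greater than $2s$ from one another, which is guaranteed because each newly retained point is compared with all earlier ones. Hence the witness map $w\mapsto u(w)$ is injective into a set of at most $k$ atoms, so $|W|\le k$.

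\emph{Coverage.} Fix a heavy atom $u$. By hypothesis there is $x\in L$ with $D(x,u)\le s$. If $x$ was retained, we are done. Otherwise the scan rejected $x$, which happens only if some earlier retained point $w$ satisfies $D(x,w)\le 2s$. The triangle inequality then gives $D(u,w)\le D(u,x)+D(x,w)\le 3s$. In either case $u$ is within $3s$ of $W$.

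The only point needing care is that $D$ is a genuine metric on directions modulo sign, so that the triangle inequality applies. The paper asserts this after \eqref{eq:projective}. To verify it directly, note that $D$ is the quotient metric of the Euclidean metric under $u\sim -u$, and the antipodal map is an isometry of the Euclidean metric. Beyond this the argument is routine; no probabilistic or analytic step is involved.
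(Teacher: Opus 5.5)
Your proof is correct and follows the paper's argument essentially verbatim. Soundness comes from $W\subseteq L$, the cardinality bound from the injective witness map (two retained points sharing a witness would be within $2s$), and coverage at radius $3s$ from the greedy rejection rule (maximality) together with the triangle inequality for $D$, which you also verify directly and the paper takes for granted.
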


\begin{proof}
Assign a witness atom to each retained point. Two retained points cannot
share a witness, because their mutual distance would then be at most
$2s$. Hence $|W|\leq k$. Maximality means that every point of $L$ is
within $2s$ of a retained point. The triangle inequality therefore gives
a retained point within $3s$ of each heavy atom. Soundness is preserved
because $W\subseteq L$.
\end{proof}

\subsection{An exact change of Gaussian measure}

Fix $w\in\sphere$ and $\rho\in(0,1]$, and write
\begin{equation}
 T=T_{w,\rho}=\rho ww^\top+I-ww^\top,\qquad
 A_{w,\rho}(x)=
 \exp\!\left[-\frac{(\rho^{-2}-1)(w^\top x)^2}{2}\right].
 \label{cm:local-map}
\end{equation}
Draw an independent example $(X,Y)$ from the original distribution.
Conditional on $X$, accept it with probability $A_{w,\rho}(X)$ using
randomness independent of the label.
This is the Gaussian soft-localization identity used in
\citet[Lemma~14]{diakonikolas2021agnostic}.

\begin{lemma}[Exact Gaussian localization]
\label{lem:localization}
Suppose $X\sim\N(0,I_d)$, $\E[Y^2]\leq B$, and
$g(x)=\E[Y\mid X=x]$. Let
\[
 f(x)=\sum_i a_i\relu(u_i^\top x),\qquad
 a_i\geq0,\quad \sum_i a_i\leq1,\quad u_i\in\sphere,
\]
be a comparator with $\norm{g-f}_{L^2}\leq\zeta$.
The acceptance probability in \eqref{cm:local-map} is $\rho$.
Conditional on acceptance, $X\sim\N(0,T^2)$ and
$Z=T^{-1}X\sim\N(0,I_d)$. In these coordinates the label mean is $g(TZ)$,
and the transformed comparator is
\begin{equation}
 f(Tz)=\sum_i a_i\norm{Tu_i}_2
       \relu\!\left(\frac{(Tu_i)^\top z}{\norm{Tu_i}_2}\right).
 \label{cm:transformed-comparator}
\end{equation}
Its total weight is at most one, and each individual weight is at least
$a_i\rho$. Moreover,
\begin{equation}
 \norm{g(TZ)-f(TZ)}_{L^2(Z)}\leq\frac{\zeta}{\sqrt\rho},
 \qquad
 \E[Y^2\mid\text{accepted}]\leq\frac B\rho.
 \label{cm:localized-label-bounds}
\end{equation}
If $g=f$ and $\E[(Y-f(X))^2]\leq\sigma^2$, the accepted model additionally
satisfies
\begin{equation}
 \E[Y\mid Z,\text{accepted}]=f(TZ),\qquad
 \E[(Y-f(TZ))^2\mid\text{accepted}]\leq\frac{\sigma^2}{\rho}.
 \label{cm:localized-conditional-noise}
\end{equation}
\end{lemma}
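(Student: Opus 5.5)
The plan is a direct change-of-variables computation, followed by a single density-ratio bound that gives all the $L^2$ estimates.

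\textbf{Acceptance probability and accepted law.} I would first compute the acceptance probability. With $\phi_d$ the standard Gaussian density, the unnormalized accepted density is $\phi_d(x)A_{w,\rho}(x)$. This is proportional to $\exp[-(w^\top x)^2/(2\rho^2)-\norm{(I-ww^\top)x}^2/2]$, i.e. to the density of $\N(0,T^2)$, since $T^{-2}=\rho^{-2}ww^\top+I-ww^\top$. Integrating gives total mass $\det T=\rho$, which is the acceptance probability. Hence, conditional on acceptance, $X\sim\N(0,T^2)$ and $Z=T^{-1}X\sim\N(0,I_d)$.

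\textbf{Label mean and transformed comparator.} Acceptance uses randomness independent of $Y$ given $X$, so the conditional law of $Y$ given $X$ is unchanged. The accepted label mean is therefore $g(X)=g(TZ)$. Since $T$ is symmetric, $u_i^\top Tz=(Tu_i)^\top z$, and positive homogeneity of ReLU gives \eqref{cm:transformed-comparator}. The eigenvalues of $T$ lie in $[\rho,1]$, so $\rho\leq\norm{Tu_i}_2\leq1$. This gives individual weights at least $a_i\rho$ and total weight at most $\sum_i a_i\leq1$.

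\textbf{Density-ratio bounds.} The key tool is the density ratio between the accepted law of $X$ and $\N(0,I_d)$, which equals $A_{w,\rho}(x)/\rho\leq1/\rho$. Hence, for any nonnegative $h$, $\E[h(X)\mid\text{accepted}]\leq\rho^{-1}\E h(X)$. I would apply this in three places. With $h=(g-f)^2$ it gives $\norm{g(TZ)-f(TZ)}_{L^2(Z)}^2\leq\zeta^2/\rho$. With $h(x)=\E[Y^2\mid X=x]$, using that acceptance is conditionally independent of $Y$, it gives the second-moment bound $B/\rho$. In the case $g=f$, the same argument applies with $h(x)=\E[(Y-f(X))^2\mid X=x]$. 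This yields the noise bound $\sigma^2/\rho$, together with $\E[Y\mid Z,\text{accepted}]=f(TZ)$ from the mean computation above.

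\textbf{Main point of care.} The only delicate step is being explicit that conditioning on acceptance does not change the conditional law of $Y$ given $X$. This holds because the acceptance coin depends on $X$ only, with randomness independent of the label. That fact justifies transferring every conditional-expectation function through the density ratio. The rest is routine Gaussian algebra.
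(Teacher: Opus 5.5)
Your proposal is correct and follows the paper's proof essentially step for step: you complete the quadratic form to $x^\top T^{-2}x$ with $\det T=\rho$, use the label-independence of the acceptance coin together with positive homogeneity and the spectrum $[\rho,1]$ of $T$, and bound every conditional moment by the density ratio $A_{w,\rho}/\rho\leq1/\rho$. The only cosmetic difference is that you route the moment bounds through $h(x)=\E[\,\cdot\mid X=x]$, whereas the paper applies $\E[U\mid\text{accepted}]=\E[U A_{w,\rho}(X)]/\rho$ directly to nonnegative $U(X,Y)$.
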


\begin{proof}
Multiplying the standard Gaussian density by $A_{w,\rho}(x)$
changes its quadratic form to
\[
 x^\top[I+(\rho^{-2}-1)ww^\top]x=x^\top T^{-2}x.
\]
Since $\det T=\rho$, the resulting unnormalized density integrates to
$\rho$, and its normalization is the density of $\N(0,T^2)$. The coordinate
change $Z=T^{-1}X$ thus gives a standard Gaussian.

Acceptance depends only on $X$ and independent auxiliary randomness, so
it preserves the conditional label law at each input. Consequently the
accepted label mean is $g(TZ)$. Positive homogeneity of ReLU yields
\eqref{cm:transformed-comparator}. The singular values of $T$ lie in
$[\rho,1]$, and hence
$\rho\leq\norm{Tu_i}_2\leq1$, proving the weight assertions.

For every nonnegative integrable random variable $U=U(X,Y)$,
\begin{equation}
 \E[U\mid\text{accepted}]
 =\frac{\E[U A_{w,\rho}(X)]}{\rho}
 \leq\frac{\E U}{\rho}.
 \label{cm:acceptance-moment-bound}
\end{equation}
Apply this first to $U=(g(X)-f(X))^2$ and then to $U=Y^2$ to obtain
\eqref{cm:localized-label-bounds}. In the conditional-mean model,
apply it also to $U=(Y-f(X))^2$. Together with the conditional-mean
identity above, this proves \eqref{cm:localized-conditional-noise}.
\end{proof}

The acceptance test can be implemented using only Gaussian sampling,
arithmetic and comparisons. If $E\sim\chi^2_2$ is independent of the
example, accept when
$E\geq(\rho^{-2}-1)(w^\top X)^2$; its survival probability is exactly
\eqref{cm:local-map}. The variable $E$ is the sum of squares of two
independent standard Gaussians.
When a uniform random variable is needed, two independent
$\chi^2_2$ variables $E_1,E_2$ give
$E_1/(E_1+E_2)\sim\operatorname{Unif}[0,1]$.

To obtain up to $M$ accepted examples, limit the number of
proposals from the original distribution to
\begin{equation}
 \left\lceil\frac{8(M+\log(1/\omega))}{\rho}\right\rceil.
 \label{cm:rejection-budget}
\end{equation}
The number accepted among this many independent proposals has a binomial
distribution with mean at least
$8(M+\log(1/\omega))$. A Chernoff lower-tail bound therefore makes the
probability of obtaining fewer than $M$ examples at most $\omega$.
Couple this procedure to the infinite sequence of accepted examples,
which is i.i.d.\ from the transformed model. Whenever the procedure
obtains the required number of examples, they agree with the initial
segment of that sequence. Estimation bounds therefore apply to the
i.i.d.\ sequence, with an additional failure event for insufficient
acceptances.

Every center and every round uses fresh samples from the original
distribution. Thus $\rho^{-1}$ is the rejection-sampling overhead at the
current scale. This overhead and the factors in
\eqref{cm:localized-label-bounds} and
\eqref{cm:localized-conditional-noise} depend only on that scale and do
not accumulate along a refinement path.

\subsection{Contraction of the inverse map on a cap}

\begin{lemma}[Local inverse contraction]
\label{lem:inverse}
Suppose $D(u,w)\leq\rho\leq1/10$, and put
$v_u=Tu/\norm{Tu}_2$. Then $|v_u^\top w|>0.70$. Every projective
$\theta$-approximation $v$ to $v_u$ therefore passes the filter
$|v^\top w|\geq2/3$.

Conversely, suppose $v$ passes this filter and
$v_i=Tu_i/\norm{Tu_i}_2$ satisfies $D(v,v_i)\leq\theta$. Then
\begin{equation}
 D\!\left(\frac{T^{-1}v}{\norm{T^{-1}v}_2},u_i\right)
 \leq3\rho\theta.
 \label{cm:inverse-bound}
\end{equation}
\end{lemma}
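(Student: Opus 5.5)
The plan is to reduce everything to planar trigonometry in the span of $w$ and the relevant vector. Since $D$ is invariant under sign flips, we may replace $u$ by $-u$ and assume $u^\top w\geq0$. Write $u=\cos\varphi\,w+\sin\varphi\,e$ with $e\perp w$ a unit vector and $\varphi\in[0,\pi/2]$. Then $D(u,w)=2\sin(\varphi/2)\leq\rho$, which gives $\sin\varphi\leq D(u,w)\leq\rho$ and $\cos\varphi\geq1-\rho^2/2\geq0.995$. Because $Tu=\rho\cos\varphi\,w+\sin\varphi\,e$,
\[
 v_u^\top w=\frac{\rho\cos\varphi}{\sqrt{\rho^2\cos^2\varphi+\sin^2\varphi}}
 =\bigl(1+\tan^2\varphi/\rho^2\bigr)^{-1/2}.
\]
Using $\tan\varphi\leq\rho/0.995$, this is at least $(1+1.0102)^{-1/2}>0.70$. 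For the filter, pick the sign of $v$ so that $\norm{v-v_u}_2=D(v,v_u)\leq\theta$. Cauchy--Schwarz then gives $|v^\top w|\geq0.70-\theta\geq2/3$.

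For the converse, again fix signs so that $v^\top w\geq2/3$ and $\norm{v-v_i}_2\leq\theta$. This gives $v_i^\top w\geq2/3-\theta$. Let $N(x)=T^{-1}x/\norm{T^{-1}x}_2$. Since $T^{-1}Tu_i=u_i$ and $\norm{Tu_i}_2>0$, we have $N(v_i)=u_i$, so it suffices to bound $\norm{N(v)-N(v_i)}_2$. I would integrate the derivative of $N$ along the shortest great-circle arc from $v_i$ to $v$. This arc has length at most $2\arcsin(\theta/2)\leq1.0001\,\theta$. Along it, $x$ is a unit vector with $w^\top x\geq2/3-\theta$, because the arc stays within $\theta$ of $v$. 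For a tangent vector $h\perp x$, write $y=T^{-1}x=\rho^{-1}(w^\top x)w+x_\perp$. The derivative is $dN(h)=P_{y^\perp}T^{-1}h/\norm{y}_2$, with $T^{-1}h=\rho^{-1}(w^\top h)w+h_\perp$. Since $\norm{y}_2\geq\rho^{-1}w^\top x$, each of the two pieces is of order $\rho$:
\begin{itemize}
\item the $h_\perp$ piece contributes at most $\rho\norm{h}_2/(w^\top x)$;
\item the $w$ piece contributes at most $\rho^{-1}|w^\top h|\,\norm{x_\perp}_2/\norm{y}_2^2$.
\end{itemize}
In the second bound, $\norm{P_{y^\perp}w}_2=\norm{x_\perp}_2/\norm{y}_2$. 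Tangency $h\perp x$ lets us trade $|w^\top h|$ against $\norm{x_\perp}_2\norm{h_\perp}_2/(w^\top x)$. Integrating over the arc yields $\norm{N(v)-N(v_i)}_2\leq K\rho\theta$, and $D\leq\norm{\cdot}_2$ gives \eqref{cm:inverse-bound} once $K\leq3$.

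The main obstacle is this constant. A crude triangle-inequality bound, with $w^\top x\geq0.666$ and $\norm{x_\perp}_2\leq0.75$, gives roughly $\rho(1.5+1.7)$, which slightly exceeds $3$. I would first sharpen it by exact two-dimensional geometry. The map $N$ restricted to the unit sphere depends only on the angle to $w$ together with the orthogonal direction. A unit $x$ at angle $\psi$ to $w$ is sent to angle $\arctan(\rho\tan\psi)$, and the orthogonal direction is unchanged. Hence the radial Lipschitz factor is $\rho\sec^2\psi/(1+\rho^2\tan^2\psi)\leq\rho\sec^2\psi$. The transverse (azimuthal) factor is $\sin\psi'/\sin\psi\leq\rho\sec\psi$, where $\psi'$ is the image angle. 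Since these two directions are orthogonal, combining them in quadrature with $\cos\psi\geq0.666$ gives a factor of at most about $\rho\cdot\sec^2\psi\leq2.26\rho$. Multiplying by the arc length $1.0001\,\theta$ then stays below $3\rho\theta$.
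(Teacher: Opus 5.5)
Your proof is correct. The first part matches the paper. For the converse you take a different route.

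You bound the differential of $N(x)=T^{-1}x/\norm{T^{-1}x}_2$ on the cap $w^\top x\geq2/3-\theta$. You use polar coordinates about $w$, in which $N$ acts by $\tan\psi\mapsto\rho\tan\psi$ with fixed azimuth. You then integrate along the great-circle arc from $v_i$ to $v$. This requires checking that the whole arc stays in the cap, and it costs the arc-versus-chord factor $1.0001$.

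The paper uses only the two endpoints. In slope (gnomonic) coordinates $v\mapsto v_\perp/(v^\top w)$, the inverse map is exactly multiplication by $\rho$. Nonnegativity of the Gram determinant of $v,v_i,w$ gives $\norm{v_\perp/a-(v_i)_\perp/b}_2\leq\norm{v-v_i}_2/(ab)$. The map $x\mapsto(w+x)/\sqrt{1+\norm{x}_2^2}$ back to the sphere is $1$-Lipschitz. Together these give $\rho\theta/(ab)<3\rho\theta$, with no path and no calculus.

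The constants essentially coincide: $1/(ab)\approx2.25$ versus your $\sec^2\psi_{\min}\cdot1.0001\approx2.26$. The Gram-determinant inequality is a two-point form of the fact that the gnomonic chart stretches by at most $\sec^2\psi$. Your version has the advantage of exhibiting the exact local Lipschitz constant $\rho/(\cos^2\psi+\rho^2\sin^2\psi)$. The paper's version is shorter and needs the filter conditions only at $v$ and $v_i$.

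Three small remarks:
\begin{itemize}
\item The differential is diagonal in the orthonormal radial and azimuthal frames at $x$ and at $N(x)$. Its operator norm is therefore the maximum of the two factors, namely $\rho\sec^2\psi$, not a sum in quadrature. Even the quadrature value $\rho\sqrt{\sec^4\psi+\sec^2\psi}\approx2.71\rho$ would still stay below $3\rho\theta$ after multiplying by the arc length.
\item After you flip the sign of $v_i$, you get $N(v_i)=\pm u_i$ rather than $u_i$. This is harmless because $D$ is projective.
\item The crude two-piece derivative bound you discuss first can simply be dropped once you use the exact polar form.
\end{itemize}
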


\begin{proof}
Orient $u$ so that $u^\top w\geq0$. The distance assumption gives
$u^\top w\geq1-\rho^2/2\geq0.995$ and
$\norm{u-(u^\top w)w}_2\leq\rho$. Hence
\[
 v_u^\top w
 =\frac{\rho(u^\top w)}
 {\sqrt{\rho^2(u^\top w)^2+\norm{u-(u^\top w)w}_2^2}}
 \geq\frac{0.995}{\sqrt2}>0.70.
\]
A projective $\theta$-neighbor has absolute inner product with $w$
at least $0.70-\theta=0.699>2/3$.

For the second assertion orient $v$ toward $w$, and choose the sign of its
witness so that $\norm{v-v_i}_2\leq\theta$. Write
$a=v^\top w$ and $b=v_i^\top w$. Then
$a\geq2/3$ and $b\geq2/3-\theta>0$. With
$v_\perp=v-(v^\top w)w$, the inverse direction map is
\begin{equation}
 F_\rho(v)=\frac{T^{-1}v}{\norm{T^{-1}v}_2}
 =\frac{w+\rho v_\perp/a}
 {\sqrt{1+\rho^2\norm{v_\perp/a}_2^2}}.
 \label{cm:inverse-map}
\end{equation}
Let $c=v^\top v_i$. The nonnegative determinant of the Gram matrix of
$v,v_i,w$ gives $a^2+b^2-2abc\leq1-c^2$, and thus
\[
 \norm{bv-av_i}_2^2\leq1-c^2
 \leq2(1-c)=\norm{v-v_i}_2^2.
\]
Since $bv-av_i=bv_\perp-a(v_i)_\perp$, it follows that
\[
 \left\lVert\frac{v_\perp}{a}
 -\frac{(v_i)_\perp}{b}\right\rVert_2
 \leq\frac{\norm{v-v_i}_2}{ab}.
\]
The map $G(x)=(w+x)/\sqrt{1+\norm{x}_2^2}$, for $x\perp w$, is
$1$-Lipschitz: for $h\perp w$ its derivative obeys
\[
 \norm{DG(x)h}_2^2
 =\frac{\norm{h}_2^2}{1+\norm{x}_2^2}
 -\frac{\inner{x}{h}^2}{(1+\norm{x}_2^2)^2}
 \leq\norm{h}_2^2.
\]
Apply this to the slope coordinates in \eqref{cm:inverse-map}. Because
$1/[(2/3)(2/3-\theta)]<3$ for $\theta=10^{-3}$,
\[
 \norm{F_\rho(v)-F_\rho(v_i)}_2
 \leq\frac{\rho\theta}{ab}<3\rho\theta.
\]
The vector $F_\rho(v_i)$ is an orientation of $u_i$, proving the result.
\end{proof}

%% file: conditional.tex
\section{Learning in the conditional-mean model}
\label{sec:conditional}

We prove Theorem~\ref{thm:conditional} under the conditional-mean model
\begin{equation}
 \begin{gathered}
 X\sim\N(0,I_d),\qquad \E[Y\mid X]=f(X),\qquad
 \E[(Y-f(X))^2]\leq\sigma^2,\\
 f(x)=\sum_{i=1}^{k_0}a_i\relu(u_i^\top x),\qquad
 k_0\leq k,\quad a_i\geq0,\quad
 \sum_i a_i\leq1,\quad u_i\in\sphere.
 \end{gathered}
 \label{cm:conditional-model}
\end{equation}
Zero-weight atoms are omitted. The conditional variance of the labels
given $X$ may be unbounded as a function of $X$.
Positivity and the mass constraint imply
$\E[f(X)^2]\leq1/2$, and hence $\E[Y^2]\leq1/2+\sigma^2$.

\subsection{Direction covers in the conditional-mean model}

\begin{corollary}[Direction covers with linear noise dependence]
\label{cm:cover}
For $\tau,\omega\in(0,1)$, the cover procedure of
Proposition~\ref{prop:robust-cover}, applied to
\eqref{cm:conditional-model}, returns a list of at most $k$ unit vectors.
With probability at least $1-\omega$, every atom of weight at least
$\tau$ lies within projective distance $\theta$ of the list, and every
listed point lies within $\theta$ of a positive-weight atom.
The numbers of samples and real-arithmetic operations are at most
\[
 (1+\sigma^2)\poly(d,k,\tau^{-1},\log(2/\omega)).
\]
The size bound holds on every outcome.
\end{corollary}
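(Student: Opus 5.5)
The plan is to specialize Proposition~\ref{prop:robust-cover} to the conditional-mean model~\eqref{cm:conditional-model}, taking the target $f$ itself as the comparator so that $g=f$ and $\zeta=0$. With $\zeta=0$, the second condition in~\eqref{eq:merged-cover-interface} becomes $\tau\geq C_*\sqrt{k}(t+1)^{9/4}\eta$. We fix $t=\lceil C\log(Ck/\tau)\rceil$, which meets the first condition, and then set
\[
\eta=\frac{\tau}{C_*\sqrt{k}(t+1)^{9/4}}.
\]
This $\eta$ is inverse-polynomial in $k$ and $\tau^{-1}$, up to polylogarithmic factors that are themselves polynomially bounded. Under these choices the proposition yields both properties on an event of probability at least $1-\omega$: every atom with $a_i\geq\tau$ lies within projective distance $\theta$ of the list, and every center lies within $\theta$ of a positive-weight atom.

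For the cost, we use the conditional-mean variant of~\eqref{eq:merged-robust-cover-cost}. It traces back through Lemma~\ref{lem:tt-compression} to the bound~\eqref{eq:merged-mean-access} of Lemma~\ref{lem:query-access}, where $B^{O(1)}$ is replaced by $(1+\sigma^2)$. The remaining factor is $\exp(O(t))\poly(d,k,\tau^{-1},\eta^{-1},\log(2/\omega))$. We have
\[
\exp(O(t))=\exp\bigl(O(\log(Ck/\tau))\bigr)=\poly(k,\tau^{-1}),
\]
and $\eta^{-1}$ is polynomial in $k$ and $\tau^{-1}$, so the total is $(1+\sigma^2)\poly(d,k,\tau^{-1},\log(2/\omega))$. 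The validation and decoding steps of the proposition draw only decoder outputs and use no labels. Their cost is therefore polynomial in $\tau^{-1}$, $\log(2/\omega)$, and the decoder's polynomial cost in $t,d,k$.

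The step needing the most care is the claim that the sample counts scale linearly in $1+\sigma^2$, not with a larger power as a generic $B^{O(1)}$ would allow. We obtain this from the clipping analysis in Lemma~\ref{lem:moment-signal}. By~\eqref{eq:merged-signal-clipping}, the bias of the clipped multiplier $\psi$ depends only on $f$, since the noise is orthogonal to every function of $(X,Z)$. By~\eqref{eq:merged-linear-noise}, the second moment of the clipped product is at most $M^2(1/2+\sigma^2)$, with $M$ independent of $\sigma$. Median-of-means then uses $O((1+V/b^2)\log(2/\omega))$ samples, which is linear in $1+\sigma^2$. The number of estimated entries, stages, and validation draws is polynomial and carries no $\sigma$-dependence, so the linear dependence survives the union bounds.

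The size bound on every outcome is immediate from the algorithm's safeguards. The algorithm returns the empty set if the greedy set exceeds $k$, replaces states of trace greater than two by zero, and enforces prescribed sample caps. The bound of at most $k$ unit vectors and the complexity bound therefore hold deterministically, and only the covering and soundness properties require the success event.
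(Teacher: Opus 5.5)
Your proposal is correct and follows essentially the same route as the paper: you take $\zeta=0$, choose $t=\lceil C\log(Ck/\tau)\rceil$ and $\eta\asymp\tau/(\sqrt{k}(t+1)^{9/4})$ to satisfy the interface of Proposition~\ref{prop:robust-cover}, and get the linear $(1+\sigma^2)$ dependence from the conditional-mean estimator, whose clipping levels do not depend on $\sigma$. The paper's proof makes the same observations, namely that decoding and validation use no labels and that $\exp(O(t))$ and $\eta^{-1}$ are polynomial in $k,\tau^{-1}$, and it likewise takes the size bound directly from the proposition's safeguards.
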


\begin{proof}
The population moment perturbation in Proposition~\ref{prop:robust-cover}
is zero. Choose
\[
 t=\left\lceil C\log(Ck/\tau)\right\rceil,\qquad
 \eta=\frac{c\tau}{\sqrt{k}(t+1)^{9/4}},
\]
where $C$ is sufficiently large and $c$ sufficiently small to satisfy
the conditions of that proposition.
Use the conditional-mean estimator in Lemma~\ref{lem:query-access},
with the signal bounds from Lemma~\ref{lem:moment-signal}.
The required multiplier norms, clipping levels, matrix dimensions and
inverse precisions depend on $d,k,\tau,\omega$, but not on $\sigma$.
The conditional-mean estimator has sample cost linear in
$1+\sigma^2$, and the computations on each example have the same
noise-independent bounds. All subsequent tensor contractions, decoder
draws and validation calculations use the already estimated state and
have polynomial cost independent of $\sigma$.
Since $\exp(O(t))$ and $\eta^{-1}$ are polynomial in $k,\tau^{-1}$,
the stated complexity follows. The coverage, soundness and size
guarantees follow from Proposition~\ref{prop:robust-cover}.
\end{proof}

Denote this specialization by $\Coarse_0(\tau)$, with the current
noise second-moment bound and an assigned failure probability on each
call. Its soundness witnesses need not be heavy.

\subsection{The refinement algorithm}

For target squared prediction error $\eps\in(0,1/10)$, set
\begin{equation}
 \theta=10^{-3},\qquad
 a_0=\frac{\sqrt\eps}{100k},\qquad
 \rho_*=\frac{\sqrt\eps}{100},\qquad
 \rho_0=3\theta.
 \label{cm:parameters}
\end{equation}
Call an original atom heavy if $a_i\geq a_0$.
Proceed as follows.
\begin{enumerate}
\item Run $\Coarse_0(a_0)$ on original examples, and set
      $W=\Prune(L,\theta)$ and $\rho=\rho_0$.
\item If $\rho\leq\rho_*$, proceed to step 4. Otherwise, perform the
      following operation for each $w\in W$.
      Use the sampler of Lemma~\ref{lem:localization} with
      $T_{w,\rho}$ on fresh original examples. In the accepted
      coordinates $Z=T_{w,\rho}^{-1}X$, run $\Coarse_0(a_0\rho)$ with
      noise bound $\sigma^2/\rho$. Retain only outputs satisfying
      $|v^\top w|\geq2/3$ and map each to
      $T_{w,\rho}^{-1}v/\norm{T_{w,\rho}^{-1}v}_2$.
\item Merge these mapped lists into $L'$, put $s=3\theta\rho$, replace
      $W$ by $\Prune(L',s)$, and replace $\rho$ by $9\theta\rho$.
      Return to step 2.
\item Include both ReLU orientations of the final centers, fit
      nonnegative coefficients with sum at most one, and perform the
      exact sparsification below.
\end{enumerate}
If $W$ is empty, refinement may terminate immediately and the output is
the zero predictor.

Every adaptive query is fixed before independent samples are drawn to
estimate it. The sample bounds for moment estimation, decoder sampling,
validation and rejection sampling are fixed in advance from the problem
parameters. The compression rank and the size of every cover or pruned
list are restricted to $k$; if a sampling bound is exceeded or a rank or
list would exceed its allowed size, the algorithm terminates and returns
the zero network. The coefficient fit remains in its feasible simplex
at every iteration, and every completed fit undergoes exact
sparsification. Thus the output satisfies the sign, mass and width
constraints on every outcome, including estimation failures.

\subsection{The refinement invariant and direction recovery}

Initially Corollary~\ref{cm:cover} and Lemma~\ref{lem:pruning} give at most
$k$ centers, completeness radius $\rho_0=3\theta$, and soundness radius
$\rho_0/3=\theta$. Suppose a current list has at most $k$ centers,
covers every original heavy atom to radius $\rho\leq1/10$, and is
sound to radius $\rho/3$.

An original heavy atom is within $\rho$ of at least one current center.
In that center's localized model, its weight is at least $a_0\rho$ by
Lemma~\ref{lem:localization}. Corollary~\ref{cm:cover} supplies a
projective $\theta$-neighbor of its transformed direction.
Lemma~\ref{lem:inverse} ensures that this neighbor passes the filter and
maps back within $s=3\theta\rho$ of the original atom. Conversely,
every retained coarse output has some transformed witness atom.
The same inverse lemma maps it within $s$ of an original atom, even if
that witness is light.

The merged list is therefore complete for heavy atoms and sound for
all outputs at radius $s$. Pruning leaves at most $k$ centers and
establishes the new radii
\begin{equation}
 \rho_{\rm new}=3s=9\theta\rho=0.009\rho,\qquad
 \text{soundness radius }s=\rho_{\rm new}/3.
 \label{cm:contraction}
\end{equation}
This proves the invariant inductively and gives
$O(\log(1/\rho_*))$ rounds. If an empty list occurs on the simultaneous
success event, there are no heavy atoms, so the zero comparison
predictor below applies. All light atoms remain in the moment
calculations and transformed models; they are omitted only when
constructing the comparison predictor for the error bound.

\begin{proposition}[Parameterized recovery of projective directions]
\label{cm:directions}
Under \eqref{cm:conditional-model}, for every weight threshold
$\tau\in(0,1)$, radius $r\in(0,1/10)$ and failure probability
$\delta\in(0,1/10)$, the cover-and-refinement procedure returns a set
$W$ of at most $k$ unit vectors such that, with probability at least
$1-\delta$:
\begin{enumerate}
\item every atom with $a_i\geq\tau$ is within projective distance $r$
      of $W$;
\item every point of $W$ is within projective distance $r/3$ of some
      positive-weight atom.
\end{enumerate}
The numbers of samples and real-arithmetic operations are at most
\[
 (1+\sigma^2)\poly(d,k,\tau^{-1},r^{-1},\log(1/\delta)).
\]
\end{proposition}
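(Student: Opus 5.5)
We run the refinement loop of the previous subsection with the heavy-atom threshold $a_0$ replaced by $\tau$ and the stopping radius $\rho_*$ replaced by a value tied to $r$, and read off the invariant \eqref{cm:contraction}. Concretely, step 1 runs $\Coarse_0(\tau)$ and prunes at radius $\theta$. If $r\geq 3\theta$ we stop here: Corollary~\ref{cm:cover} and Lemma~\ref{lem:pruning} give completeness radius $3\theta\leq r$ and soundness radius $\theta\leq r/3$. Otherwise we iterate steps 2--3 with $\Coarse_0(\tau\rho)$ and stop at the first round whose completeness radius satisfies $\rho\leq r$. The soundness radius is then $\rho/3\leq r/3$, which gives both conclusions. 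Since $\rho$ shrinks by the factor $0.009$ per round, the number of rounds is $J=O(\log(1/r))$, and the final $\rho$ is at least $0.009r$.

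Correctness goes by the induction already written out. Assume the current list has at most $k$ centers, covers every atom with $a_i\geq\tau$ to radius $\rho\leq 1/10$, and is sound to radius $\rho/3$. A $\tau$-heavy atom lies within $\rho$ of some center $w$. By Lemma~\ref{lem:localization} its localized weight is at least $\tau\rho$, so the localized call $\Coarse_0(\tau\rho)$ returns a $\theta$-neighbor of its transformed direction. By Lemma~\ref{lem:inverse} that neighbor passes the filter and maps back within $3\theta\rho$ of the atom. Soundness follows from the converse half of Lemma~\ref{lem:inverse}, applied to an arbitrary (possibly light) transformed witness. Pruning via Lemma~\ref{lem:pruning} then restores the bound of $k$ centers with radii $(9\theta\rho, 3\theta\rho)$. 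The hypothesis $\rho\leq 1/10$ holds from the start because $\rho_0=3\theta$.

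For failure probability, there are at most $1+kJ$ calls to $\Coarse_0$ and at most $kJ$ rejection samplers. We assign each failure probability $\delta/(2(1+kJ))$ and take a union bound. This produces only a factor $\log(kJ/\delta)$, which is polynomial in $\log(1/\delta)$ and logarithmic in $k$ and $1/r$.

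The main obstacle is the cost accounting, specifically keeping the dependence on $1+\sigma^2$ linear and the dependence on $r$ polynomial. At scale $\rho$, Lemma~\ref{lem:localization} gives a noise bound of $\sigma^2/\rho$ and a weight threshold of $\tau\rho$. Corollary~\ref{cm:cover} then costs $(1+\sigma^2/\rho)\poly(d,k,(\tau\rho)^{-1},\log)$ accepted samples, and \eqref{cm:rejection-budget} multiplies this by $\rho^{-1}$. Since $\rho\geq 0.009r$, this is at most $(1+\sigma^2)\poly(d,k,\tau^{-1},r^{-1},\log(1/\delta))$. These overheads do not accumulate along a refinement path, because every round draws fresh original samples. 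Summing over at most $k$ centers and $J$ rounds keeps the bound polynomial. The arithmetic in filtering, inverse mapping and pruning is $O(dk^2)$ per round, which is negligible.
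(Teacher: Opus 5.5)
Your proposal is correct and follows the paper's proof essentially step for step: you run steps 1--3 with $a_0=\tau$ and stopping radius $r$, preserve the completeness-to-soundness ratio $3$ via \eqref{cm:contraction}, control each localized call using $\rho>r$ (threshold $\tau\rho$, noise bound $\sigma^2/\rho$, rejection overhead $\rho^{-1}$), and take a union bound over the $O(k\log(1/r))$ cover calls and rejection stages, each run on fresh samples. The only slip is cosmetic: pruning a merged list of up to $k^2$ candidates against up to $k$ retained centers costs $O(dk^3)$ per round rather than $O(dk^2)$, which is still negligible.
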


\begin{proof}
Use the first three algorithm steps with $a_0=\tau$ and $\rho_*=r$.
The initial list has completeness radius $\rho_0$ and soundness radius
$\rho_0/3$. Every refinement preserves this ratio by
\eqref{cm:contraction}, so stopping at $\rho\leq r$ gives both
conclusions. If the initial radius already satisfies $\rho_0\leq r$,
the same conclusion holds without a localized call.

Before any localized call, $\rho>r$. Its weight threshold is
$\tau\rho>\tau r$, its noise bound is
$\sigma^2/\rho<\sigma^2/r$, and its rejection overhead is
$\rho^{-1}<r^{-1}$. Corollary~\ref{cm:cover} and
\eqref{cm:rejection-budget} therefore bound each call by the claimed
polynomial. There are $O(\log(1/r))$ rounds and at most $k$ calls per
round. Allocate the total failure probability among these calls and
their rejection-sampling stages, condition on previously constructed
objects,
and apply their guarantees to fresh samples. A union bound proves the
probability statement. The prediction analysis below uses the same
sample bounds and independence argument.
\end{proof}

\subsection{Nonnegative regression and exact sparsification}
\label{cm:regression}

Let $v_1,\ldots,v_s$, with $s\leq k$, be the final centers. Include both
orientations and denote the resulting $m=2s\leq2k$ features by
$\phi_j(x)=\relu(\widetilde v_j^\top x)$, where the
$\widetilde v_j$ range over $v_1,-v_1,\ldots,v_s,-v_s$. Fit the
coefficients over the simplex
\begin{equation}
 \coeffset=\left\{c\in\R^m:c_j\geq0,\ \sum_jc_j\leq1\right\}.
 \label{cm:simplex}
\end{equation}
If $s=0$, take the zero predictor.

\paragraph{A comparison network.}
For each heavy atom choose a closest correctly oriented feature, assign
its true weight to that feature, and add weights when several atoms
choose the same feature. Omit light atoms. The resulting
$h_{\rm comp}$ has coefficients in \eqref{cm:simplex}.

For real $a,b$,
\[
 [\relu(a)-\relu(b)]^2+
 [\relu(-a)-\relu(-b)]^2\leq(a-b)^2.
\]
Pairing $X$ with $-X$ and using $\E[XX^\top]=I_d$ gives
\begin{equation}
 \norm{\relu(u^\top X)-\relu(v^\top X)}_{L^2}
 \leq\frac{\norm{u-v}_2}{\sqrt2}.
 \label{cm:relu-lipschitz}
\end{equation}
Also $\norm{\relu(u^\top X)}_{L^2}=1/\sqrt2$ for unit $u$. Since
there are at most $k$ omitted atoms, each of weight below $a_0$,
the final cover gives
\begin{equation}
 \norm{f-h_{\rm comp}}_{L^2}
 \leq\frac{\rho_*+ka_0}{\sqrt2}
 <\frac{\sqrt\eps}{40}.
 \label{cm:comparison}
\end{equation}
If the output list is empty on the success event, all atoms are light
and the same inequality holds with $h_{\rm comp}=0$.

\paragraph{Estimating the regression coefficients.}
For two unit directions of inner product $r$, the Gaussian ReLU kernel is
\begin{equation}
 K(r)=\frac{\sqrt{1-r^2}+(\pi-\arccos r)r}{2\pi}.
 \label{cm:relu-kernel}
\end{equation}
Indeed, integrate the product of the positive parts of the two Gaussian
projections in polar coordinates in their two-dimensional span; the radial integral
and the angular interval on which both projections are positive give
the displayed expression. The endpoint values follow by continuity.
Thus the feature Gram matrix is exactly
$G_{ab}=K(\widetilde v_a^\top\widetilde v_b)$.

Set $b_j=\E[Y\phi_j(X)]$. Use fresh original samples and clipped
feature multipliers to obtain
\begin{equation}
 \norm{\widehat b-b}_\infty\leq\eps/100.
 \label{cm:correlations}
\end{equation}
The following estimator applies under \eqref{cm:conditional-model}.
For a clipping level $M>0$, put
$\phi_{j,M}=\min\{\phi_j,M\}$. Since
$\E[\relu(u^\top X)^4]=3/2$, positivity and Cauchy--Schwarz give
\[
 \E[f(X)^2\phi_j(X)^2]\leq3/2,\qquad
 \E[\phi_j(X)^2]=1/2.
\]
Writing $Y=f(X)+\xi$, the conditional mean of $\xi$ is zero, so clipping
introduces no noise bias. Consequently,
\begin{align}
 |\E[Y\phi_{j,M}(X)]-b_j|
 &\leq
 \sqrt{\E[f(X)^2\phi_j(X)^2]}\,
 \sqrt{\Prob(\phi_j(X)>M)}
 \leq\frac{\sqrt3}{2M},\label{cm:correlation-bias}\\
 \E[Y^2\phi_{j,M}(X)^2]
 &\leq M^2(1/2+\sigma^2).\label{cm:correlation-variance}
\end{align}
Choose $M=100\sqrt3/\eps$. Estimating each clipped correlation to
statistical error $\eps/200$ leaves at most $\eps/200$ clipping bias.
Median-of-means estimation uses
$O((1+\sigma^2)\eps^{-4}\log(2m/\omega))$ samples per feature, with
simultaneous failure probability at most $\omega$.

Minimize
\begin{equation}
 \widehat Q(c)=c^\top Gc-2\widehat b^\top c,\qquad c\in\coeffset,
 \label{cm:quadratic}
\end{equation}
to additive optimization error $\eps/100$. Projected gradient descent
suffices: $G\succeq0$,
$\norm{G}_{\op}\leq\tr G=m/2$, and $\coeffset$ has bounded Euclidean
diameter. The standard smooth convex descent bound therefore gives
$\poly(k,\eps^{-1})$ real operations, including projection onto the
simplex. The argument requires no lower bound on the smallest eigenvalue
of $G$.

Let $Q(c)=c^\top Gc-2b^\top c$. The conditional-mean identity gives
\[
 \norm{\textstyle\sum_jc_j\phi_j-f}_{L^2}^2
 =Q(c)+\norm{f}_{L^2}^2.
\]
On the feasible simplex, \eqref{cm:correlations} implies
$|\widehat Q(c)-Q(c)|\leq2\eps/100$. Comparison with the coefficients
of $h_{\rm comp}$, including the optimization error, yields
\begin{equation}
 \norm{h-f}_{L^2}^2
 \leq\norm{h_{\rm comp}-f}_{L^2}^2+\frac{5\eps}{100}
 <\frac{\eps}{1600}+\frac{5\eps}{100}<\eps.
 \label{cm:final-risk}
\end{equation}

\begin{proposition}[Width reduction without prediction error]
\label{cm:sparsify}
Any nonnegative combination of the $2s$ features
$\relu(v_j^\top x),\relu(-v_j^\top x)$ can be represented with at most
$s+\operatorname{rank}(V)$ nonzero ReLU units, where
$V=[v_1\,\cdots\,v_s]$. This representation can be computed in polynomially
many real operations and preserves the represented function and the
total coefficient mass exactly.
\end{proposition}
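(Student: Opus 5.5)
The argument rests on the identity $\relu(z)-\relu(-z)=z$. Given coefficients $c_j^+,c_j^-\geq0$ on $\relu(v_j^\top x)$ and $\relu(-v_j^\top x)$, put $m_j=\min\{c_j^+,c_j^-\}$. Then
\[
 c_j^+\relu(v_j^\top x)+c_j^-\relu(-v_j^\top x)
 =(c_j^+-m_j)\relu(v_j^\top x)+(c_j^--m_j)\relu(-v_j^\top x)
 +m_j\bigl(\relu(v_j^\top x)+\relu(-v_j^\top x)\bigr).
\]
The first two terms involve at most one ReLU per direction, so they use at most $s$ units. The last term equals $m_j\abs{v_j^\top x}$, which is not linear. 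I would therefore not absorb it. Instead I would shift mass between the two orientations by a linear combination that vanishes identically. Write $\relu(-z)=\relu(z)-z$. The combination is then $\sum_j(c_j^++c_j^-)\relu(v_j^\top x)-\big(\sum_jc_j^-v_j\big)^\top x$. The linear part is where the rank bound enters.

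The concrete plan is to view the combination as $\sum_j c_j^+\relu(v_j^\top x)+\sum_j c_j^-\relu(-v_j^\top x)$. Adding $\lambda_j\relu(v_j^\top x)-\lambda_j\relu(-v_j^\top x)=\lambda_j v_j^\top x$ for any real $\lambda$ with $V\lambda=0$ changes neither the function nor the total mass $\sum_j(c_j^++c_j^-)$. The new coefficients are $c^+ +\lambda$ and $c^- -\lambda$. The feasible set
\[
 \{(p,q)\in\R^{2s}:p,q\geq0,\ p+q=c^++c^-,\ V(p-q)=Vc^+-Vc^-\}
\]
is a polytope. Every point in it represents the same function with the same mass, since $p_j\relu(v_j^\top x)+q_j\relu(-v_j^\top x)=(p_j+q_j)\relu(v_j^\top x)-q_jv_j^\top x$. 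It is nonempty because it contains $(c^+,c^-)$.

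Next I would take a vertex of this polytope. After eliminating $q=c^++c^--p$, the variables are $p\in\R^s$. The constraints are $0\leq p_j\leq c_j^++c_j^-$, together with the linear system $V(2p)=V(2c^+)$, which has rank $r=\operatorname{rank}(V)$. At a vertex, at least $s-r$ of the box constraints are tight. Each tight box constraint makes one of $p_j,q_j$ zero. So at most $r$ indices have both $p_j,q_j>0$, and the total number of nonzero units is at most $s+r$. Directions with $c_j^++c_j^-=0$ contribute nothing and cause no trouble.

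The main obstacle is computing such a vertex exactly in polynomially many real operations. I would proceed by standard support reduction. While the set $J$ of indices with $0<p_j<c_j^++c_j^-$ has $|J|>r$, the columns $\{v_j\}_{j\in J}$ are linearly dependent. An eigendecomposition or Gaussian elimination then gives a nonzero $\lambda$ supported on $J$ with $V\lambda=0$. Move $p$ along $\lambda$ by the largest step that keeps $p$ in the box. This makes at least one coordinate of $J$ hit a bound and preserves all equalities. Each step shrinks $|J|$, so at most $s$ iterations suffice, each costing $\poly(d,k)$ operations. Exactness holds because only linear-algebra operations and exact ratio tests are used. Finally, drop the zero units; the output consists of features $\relu(\pm v_j^\top x)$ with nonnegative coefficients summing to the original mass.
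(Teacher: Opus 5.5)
Your proposal is correct and is essentially the paper's argument. Your box coordinates $p_j\in[0,t_j]$ with $Vp$ fixed are just $p=(t+z)/2$ for the paper's $z=c^+-c^-$ with $t$ and $Vz$ fixed; the paper phrases the invariance through the even--odd split $\tfrac12\sum_j t_j|v_j^\top x|+\tfrac12(Vz)^\top x$ rather than $\relu(-z)=\relu(z)-z$, and your support-reduction loop along a null vector of $V$ supported on the unsaturated indices is exactly the paper's procedure, which already gives the $s+\operatorname{rank}(V)$ count without the vertex discussion.
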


\begin{proof}
Write the coefficients in pair $j$ as $c_j^+,c_j^-\geq0$, and put
$t_j=c_j^++c_j^-$ and $z_j=c_j^+-c_j^-$. Then $|z_j|\leq t_j$ and
\begin{equation}
 h(x)=\frac12\sum_{j=1}^s t_j|v_j^\top x|
       +\frac12(Vz)^\top x.
 \label{cm:even-odd-sparsification}
\end{equation}
Keep $t$ and $Vz$ fixed. Let $I=\{j:|z_j|<t_j\}$ be the unsaturated
coordinates. If $|I|>\operatorname{rank}(V)$, compute a nonzero vector
$a$ supported on $I$ with $Va=0$. Move from $z$ along $z+\lambda a$
until the first boundary $|z_j|=t_j$ is reached. A nonzero feasible step
exists because the moving coordinates are strictly interior to their
intervals. Previously saturated coordinates do not move. Thus each
step reduces $|I|$ by at least one, and after at most $s$ steps there
are at most $\operatorname{rank}(V)$ unsaturated coordinates.

Recover $c_j^\pm=(t_j\pm z_j)/2$. A saturated positive-mass coordinate
uses one unit, an unsaturated coordinate uses two, and a zero-mass
coordinate uses none. The total is at most
$s+\operatorname{rank}(V)$. Equation~\eqref{cm:even-odd-sparsification}
and $\sum_jt_j$ are unchanged. Nullspace computation and movement to
the first box boundary take polynomially many real operations.
\end{proof}

Since $s\leq k$, applying Proposition~\ref{cm:sparsify} gives output
width at most $k+\min\{k,d\}$ without changing the prediction error or
total coefficient mass. Evaluation of the resulting network takes
$O(dk)$ operations.

\subsection{Complexity and success probability}
\label{cm:complexity}

The maximum number of refinement rounds is
\begin{equation}
 L=\max\left\{0,
 \left\lceil\frac{\log(3\theta/\rho_*)}
 {\log(1/(9\theta))}\right\rceil\right\}
 =O(\log(1/\eps)).
 \label{cm:rounds}
\end{equation}
There is one initial cover call and at most $kL$ localized calls, at
most $kL$ rejection-sampling stages, and one final correlation-estimation
stage. Allocate failure probability at most
$\delta/[4(2+2kL)]$ to each cover, each rejection-sampling stage and the final
estimation stage. Within a cover, distribute its assigned probability
among the moment estimates, decoder samples and validation
events in Proposition~\ref{prop:robust-cover}. The sum is below
$\delta$.

For each adaptive call, condition on everything constructed before the
call and use fresh independent data and auxiliary randomness.
The corresponding conditional failure guarantee is uniform over the
previously fixed queries and transformations. Summing these
conditional bounds proves the unconditional union bound. On this
simultaneous success event, the refinement invariant also shows that
the list-size limit cannot cause early termination.

Before a localized call, $\rho>\rho_*$, and consequently
\begin{equation}
 \tau_{\rm call}=a_0\rho>a_0\rho_*=\frac{\eps}{10^4k},
 \qquad
 \rho^{-1}<100\eps^{-1/2}.
 \label{cm:worst-scales}
\end{equation}
The initial threshold $a_0$ is also inverse-polynomial in $k,\eps^{-1}$.
The cover degree is $O(\log(k/\tau_{\rm call}))$, so its exponential
degree factor and all required inverse statistical precisions are
polynomial in $k,\eps^{-1}$. By Corollary~\ref{cm:cover}, the number of
labeled examples required for each estimate is linear in $1+\sigma^2$
on the original distribution, or in $1+\sigma^2/\rho$ after localization.
The proposal bound \eqref{cm:rejection-budget} adds a
factor $\rho^{-1}$. Equation~\eqref{cm:worst-scales} absorbs both
localization factors into a polynomial in $\eps^{-1}$.

The total cost is the sum of the costs of the at most $1+kL$ cover
calls. Each call draws fresh examples directly from the original
distribution, so the rejection overhead is incurred only once per call.
Median-of-means estimation and the failure allocation contribute
polynomial dependence on $\log(1/\delta)$, with logarithms in the
other parameters absorbed into their polynomial bound. Final fitting,
kernel evaluation and exact sparsification have polynomial cost.
The total numbers of samples and real-arithmetic operations are therefore
\[
 (1+\sigma^2)\poly(d,k,\eps^{-1},\log(1/\delta)).
\]
Equations~\eqref{cm:comparison} and \eqref{cm:final-risk} prove the
prediction guarantee, and Proposition~\ref{cm:sparsify} proves the
width guarantee. This completes the proof of
Theorem~\ref{thm:conditional}.

\begin{corollary}[Confidence amplification with logarithmic sample overhead]
\label{cm:confidence}
For a fixed universal polynomial $P$, the sample count in
Theorem~\ref{thm:conditional} can be bounded by
\[
 (1+\sigma^2)P(d,k,\eps^{-1})\log(1/\delta),
\]
with the same prediction error, output width and coefficient
constraints. The real-operation count remains
$(1+\sigma^2)\poly(d,k,\eps^{-1},\log(1/\delta))$.
\end{corollary}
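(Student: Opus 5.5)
The approach is standard boosting of confidence: run the algorithm of Theorem~\ref{thm:conditional} several times at a constant failure probability, then select among the candidates with a held-out validation set. Fix $\delta_0=1/10$ and accuracy $\eps/16$. Run the algorithm $N=\lceil C\log(1/\delta)\rceil$ times on independent data. Each run uses $(1+\sigma^2)P_0(d,k,\eps^{-1})$ samples, because $\log(1/\delta_0)$ is a constant. Each run returns a network $h_\ell$ in the simplex-constrained class with width at most $k+\min\{k,d\}$; these constraints hold on every outcome. Independently across runs, $\Prob(\norm{h_\ell-f}_2^2\leq\eps/16)\geq 9/10$. By Chernoff, with probability at least $1-\delta/2$ more than $N/2$ of the candidates are good. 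Since the output will be one of the $h_\ell$, the width and coefficient constraints are inherited automatically.

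The selection rule will be a median-style tournament that needs only pairwise distances, not risks. Risk estimates would require controlling noisy labels, whereas distances between candidates are label-free. The quantity $\norm{h_\ell-h_{\ell'}}_2^2$ can be computed exactly, with no samples, from the Gaussian ReLU kernel \eqref{cm:relu-kernel}:
\[
\norm{h_\ell-h_{\ell'}}_2^2=\sum_{a,b}\gamma_a\gamma_b K(\widetilde v_a^\top\widetilde v_b)
\]
over the combined signed coefficient list. Select $\hat h=h_{\ell^*}$, where $\ell^*$ minimizes the median over $\ell'$ of $\norm{h_\ell-h_{\ell'}}_2$. On the event that a majority is good, the following holds.
\begin{itemize}
\item Any good $h_\ell$ is within $\sqrt\eps/2$ of every other good candidate, so its median distance is at most $\sqrt\eps/2$.
\item The selected $h_{\ell^*}$ is therefore within $\sqrt\eps/2$ of more than half the candidates, and hence of at least one good candidate $h_g$.
\end{itemize}
The triangle inequality then gives $\norm{h_{\ell^*}-f}_2\leq\sqrt\eps/2+\sqrt\eps/4<\sqrt\eps$. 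So the prediction error $\eps$ is preserved. The total sample count is $N(1+\sigma^2)P_0(d,k,\eps^{-1})=(1+\sigma^2)P(d,k,\eps^{-1})\log(1/\delta)$.

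The step requiring most care is making the sample bound of a single run genuinely free of $\delta$ and keeping it valid on every outcome. This relies on the fact from Section~\ref{cm:complexity} that all sample budgets are fixed in advance and polynomial in $\log(1/\delta_0)$, which is now a constant. It also relies on the runs using disjoint fresh samples, so that their success events are independent; the internal adaptive conditioning argument is unaffected. The operation count is $N$ times a polynomial plus $O(N^2)$ kernel distance computations, each costing $\poly(k,d)$. This is $(1+\sigma^2)\poly(d,k,\eps^{-1},\log(1/\delta))$, as claimed; the $N^2$ term is why only the sample count, and not the operation count, is linear in $\log(1/\delta)$.
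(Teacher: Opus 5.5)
Your proposal is correct and follows the paper's proof: independent runs at constant confidence on fresh samples, exact label-free pairwise distances from the ReLU kernel \eqref{cm:relu-kernel}, and a majority/pigeonhole selection whose output inherits the width and coefficient constraints. The only substantive difference is the selection rule; two minor technical fixes are also needed.

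\begin{itemize}
\item \textbf{Selection rule.} The paper runs at accuracy $\eps/9$ and selects any output with a strict majority of outputs within the known threshold $2\sqrt\eps/3$. Your minimum-median rule does not need a threshold.
\item \textbf{Choice of $\delta_0$.} Take $\delta_0=1/20$ rather than $1/10$, since Theorem~\ref{thm:conditional} is stated for $\delta\in(0,1/10)$.
\item \textbf{Even $N$.} Under the lower-median convention with $N$ even, $h_{\ell^*}$ is only guaranteed to be within $\sqrt\eps/2$ of at least $N/2$ candidates, not more than half. The argument still goes through: $N/2$ neighbours plus strictly more than $N/2$ good candidates exceed $N$, so by pigeonhole a good candidate lies in the neighbourhood.
\end{itemize}
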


\begin{proof}
Run the learner independently $R=O(\log(1/\delta))$ times with target
squared error $\eps/9$ and fixed failure probability $1/20$.
Any run that terminates early returns the zero network, which satisfies
every output constraint. A Chernoff bound implies that, with probability at least
$1-\delta$, a strict majority of runs produce predictors within
$\sqrt\eps/3$ of $f$ in $L^2$. Those successful outputs have pairwise
distance at most $2\sqrt\eps/3$.

Using \eqref{cm:relu-kernel}, compute all pairwise squared $L^2$
distances between the outputs exactly. Select any output with a
strict majority of the $R$ outputs, counting itself, within distance
$2\sqrt\eps/3$. On the majority-success event such an output exists:
every successful output has this property. Any selected output has a
successful output in its neighborhood, since both the neighborhood
and the successful set contain a strict majority. Its distance from
$f$ is therefore at most $\sqrt\eps$ by the triangle inequality.
If no eligible output exists, return zero.

The selected predictor is one of the original outputs. It therefore
satisfies the same width, sign and mass constraints even if the selected
run was not itself successful.
Each run uses at most $(1+\sigma^2)P(d,k,\eps^{-1})$ samples for a
fixed polynomial $P$, and the comparisons require no further samples.
Their real-operation cost is $O(dk^2R^2)$.
\end{proof}

%% file: agnostic.tex
\section{Agnostic learning by positive ReLU networks}
\label{sec:agnostic}

The proof of Theorem~\ref{thm:agnostic} combines the robust direction
cover with penalized least squares over local tangent predictors. The
penalty controls the error when the fitted predictor is converted into
a positive ReLU network. The algorithm requires the supplied
second-moment bound $B$, but no knowledge of $g=\E[Y\mid X]$,
$\mathcal V$, $\mathcal A_k$, or the approximation error of a comparator.

Throughout this section, fix
\begin{equation}
 \theta=10^{-3},\qquad
 \eps_{\mathrm{int}}=\frac{\eps}{1000},\qquad
 \rho_*=\eps_{\mathrm{int}}^{1/3},\qquad
 \gamma_{\min}=\frac{\sqrt{\eps_{\mathrm{int}}}}{100k},\qquad
 b=\eps_{\mathrm{int}}.
 \label{ag:internal-parameters}
\end{equation}
Here $b$ is the tolerance for fitting and validation. Let $C_{\rm cov}$
be a universal constant satisfying the degree and perturbation
requirements of Proposition~\ref{prop:robust-cover}. Fix a universal
$C_{\rm noise}\geq4C_{\rm cov}$, and use moment degree
\begin{equation}
 t=\left\lceil C_{\rm deg}\log(2k/\eps_{\mathrm{int}})\right\rceil,
 \label{ag:degree}
\end{equation}
where $C_{\rm deg}$ is sufficiently large to meet that proposition's
degree requirement for every threshold at least $\gamma_{\min}$.
These constants are fixed independently of the problem parameters; the
constants in the complexity and approximation bounds may depend on them.

\subsection{A multiscale direction dictionary}
\label{ag:dictionary-section}

Fix a comparator $f=\sum_i a_i\relu(u_i^\top x)\in\mathcal F_k$, write
$g=f+e$, and put $\zeta=\norm{e}_{L^2}$. We first analyze the construction
for a parameter $\gamma\in[\gamma_{\min},1]$ satisfying
\begin{equation}
 C_{\rm noise}\sqrt{k}(t+1)^{9/4}\zeta\leq\gamma.
 \label{ag:valid-guess}
\end{equation}
This condition is needed only for the analysis. Set
\begin{equation}
 \eta_0=\frac{\gamma}{C_{\rm noise}\sqrt{k}(t+1)^{9/4}}.
 \label{ag:matrix-precision}
\end{equation}
Apply the coarse-cover procedure with threshold $\gamma$ and statistical
matrix error at most $\eta_0$. The perturbation bound in
Proposition~\ref{prop:robust-cover} is at most
$2C_{\rm cov}\gamma/C_{\rm noise}\leq\gamma/2$.
Prune the resulting list with parameter $\theta$ as in
Lemma~\ref{lem:pruning}. The list $W_0$ has at most $k$ centers and covers
every atom of weight at least $\gamma$ to radius
$\rho_0=3\theta$. Add the centers in $W_0$ to the dictionary, recording
radius $\rho_0$ for each.

Suppose that the current list has coverage radius $\rho\leq\rho_0$.
For each center $w$, apply the exact localization of
Lemma~\ref{lem:localization}. The transformed mean is
$g(TZ)=f(TZ)+e(TZ)$, and the change-of-measure identity gives
\begin{equation}
 \norm{e(TZ)}_{L^2}\leq\frac{\zeta}{\sqrt{\rho}},
 \qquad
 \E[Y^2\mid\mathrm{accepted}]\leq\frac{B}{\rho},
 \qquad
 a_i\|Tu_i\|\geq a_i\rho .
 \label{ag:localized-error}
\end{equation}
Indeed, the acceptance density is bounded by one and has integral
$\rho$, which proves the first two inequalities. Apply the
coarse-cover procedure to the localized distribution with threshold
and matrix precision
\begin{equation}
 \tau=\frac{\gamma}{\sqrt{\rho}},
 \qquad
 \eta_\rho\leq\frac{\eta_0}{\sqrt{\rho}}.
 \label{ag:local-threshold}
\end{equation}
If $\tau>1$, return an empty list for this call. Otherwise
\eqref{ag:valid-guess} and \eqref{ag:localized-error} imply the robust
cover condition. The threshold exceeds $\gamma_{\min}$, so
degree \eqref{ag:degree} suffices.

Retain the directions satisfying $|v^\top w|\geq2/3$ and map each back
by $T^{-1}v/\|T^{-1}v\|$. Lemma~\ref{lem:inverse} implies that every
mapped direction is within $3\theta\rho$ of an atom. An original atom
satisfying
\begin{equation}
 a_i\geq\gamma\rho^{-3/2}
 \label{ag:heavy-at-scale}
\end{equation}
has transformed weight at least $\tau$. If it is within $\rho$ of a
current center, the inverse lemma ensures that its coarse approximation
passes the filter and maps back within $3\theta\rho$.
Merge the mapped lists and prune at separation $6\theta\rho$. The next
list has at most $k$ centers and covers the atoms in
\eqref{ag:heavy-at-scale} to the new radius
\begin{equation}
 \rho_{\rm new}=9\theta\rho.
 \label{ag:radius-contraction}
\end{equation}
The set of atoms satisfying \eqref{ag:heavy-at-scale} shrinks as the
radius decreases, so each was covered by the preceding list. This
establishes the coverage invariant. Add the new centers to the
dictionary with radius $\rho_{\rm new}$.

Stop when the current radius satisfies $\rho^3\leq\eps_{\rm int}$
or the current list is empty. This is equivalent to $\rho\leq\rho_*$
and uses only arithmetic and comparisons; the algorithm need not
compute $\rho_*$. If $\rho_0^3\leq\eps_{\rm int}$, no localized call is
made. Impose the prescribed bounds on the coarse-cover computation and
rejection sampling, and a maximum list size of $k$. If any bound is exceeded,
return the zero predictor. Under \eqref{ag:valid-guess}, all bounds are
satisfied on the simultaneous success event. If no bound is exceeded,
the construction produces the multiscale dictionary
\begin{equation}
 \mathcal D_\gamma=\{(w_j,\rho_j):1\leq j\leq s\},
 \qquad
 s\leq Ck\log(2/\eps),\qquad 0<\rho_j\leq\rho_0.
 \label{ag:dictionary}
\end{equation}
Every localization uses fresh examples from the original distribution,
not previously accepted examples. Before a localized call,
$\rho>\rho_*$, so both the sampling overhead $\rho^{-1}$ and the label
moment bound $B/\rho$ remain polynomially bounded.

\subsection{Cubic tangent approximation and multiscale assignment}
\label{ag:tangent-section}

For unit vectors $u,w$, choose $v\in\{u,-u\}$ with
$\|v-w\|=D(u,w)$, and define
\begin{equation}
 T_{u,w}(x)=\frac12u^\top x+
             \frac12v^\top x\,\operatorname{sign}(w^\top x),
 \qquad \operatorname{sign}(0)=0.
 \label{ag:tangent}
\end{equation}
The residual is everywhere nonnegative:
\begin{equation}
 \relu(u^\top x)-T_{u,w}(x)
 =\frac12\bigl(|v^\top x|-v^\top x\,\operatorname{sign}(w^\top x)\bigr).
 \label{ag:tangent-residual}
\end{equation}
Off the Gaussian-null hyperplane $w^\top x=0$, it equals
$|v^\top x|\mathbf1_{\{(v^\top x)(w^\top x)<0\}}$.
If $\alpha\in[0,\pi/2]$ is the angle between $v$ and $w$, Gaussian polar
coordinates in their span give
\begin{equation}
 \|\relu(u^\top X)-T_{u,w}(X)\|_{L^2}^2
 =\frac{\alpha-\sin\alpha\cos\alpha}{\pi}
 \leq\frac{2\alpha^3}{3\pi}
 \leq D(u,w)^3.
 \label{ag:cubic-identity}
\end{equation}
There are two disagreement wedges of angle $\alpha$; integrating the
squared projection over them gives the equality. The first inequality
uses
$\alpha-\sin\alpha\cos\alpha=2\int_0^\alpha\sin^2 z\,dz\leq2\alpha^3/3$.
For the last inequality, use
$\alpha\leq(\pi/2)D(u,w)$ and $\pi^2/12<1$.
The collinear cases follow directly or by continuity.

To construct a feasible comparator, omit the atoms with $a_i<\gamma$
and assign each remaining atom to a nearest center at the last round
where the coverage invariant applies to it. Denote the retained index
set by $I$, the assigned center by $j(i)$, and the chosen orientation by
$\sigma_i\in\{+1,-1\}$. Thus
\begin{equation}
 \|\sigma_i u_i-w_{j(i)}\|\leq\rho_{j(i)}.
 \label{ag:assignment-angle}
\end{equation}
If an atom is no longer covered by the invariant after a round of radius
$\rho_{j(i)}$, the next threshold is
$\gamma\rho_{j(i)}^{-3/2}$, whence
$a_i\rho_{j(i)}^{3/2}<\gamma$. The same argument applies if the next list
is empty: on the success event, no atom then satisfies the next weight
threshold. Atoms covered by the invariant when the radius reaches
$\rho_*$ contribute at most $a_i\sqrt{\eps_{\rm int}}$.
Consequently,
\begin{equation}
 M_0:=\sum_{i\notin I}a_i\leq k\gamma,
 \qquad
 P_0:=\sum_{i\in I}a_i\rho_{j(i)}^{3/2}
       \leq k\gamma+\sqrt{\eps_{\rm int}}.
 \label{ag:assignment-budget}
\end{equation}
The bounds also cover an initially empty list. Minkowski's inequality,
\eqref{ag:cubic-identity}, and the unit-atom norm $1/\sqrt2$ imply
\begin{equation}
 \left\|f-\sum_{i\in I}a_iT_{u_i,w_{j(i)}}\right\|_{L^2}
 \leq M_0/\sqrt2+P_0
 \leq2k\gamma+\sqrt{\eps_{\rm int}}.
 \label{ag:tangent-comparator}
\end{equation}
The cubic exponent is essential: its $L^2$ exponent $3/2$ matches the
noise-dependent weight threshold in \eqref{ag:heavy-at-scale}.

\subsection{Cone constraints and conversion to positive networks}
\label{ag:cone-section}

For each center and radius in the dictionary, define the circular convex
cone
\begin{equation}
 \mathcal C_j=
 \{q=a w_j+z:\ z\perp w_j,\ a\geq0,\ \|z\|\leq2\rho_j a\}.
 \label{ag:cone}
\end{equation}
A unit vector $v$ with $\|v-w_j\|\leq\rho_j\leq1/10$ belongs to this cone:
$v^\top w_j\geq1-\rho_j^2/2$ and $\|v_\perp\|\leq\rho_j$.
Introduce variables $q_j^+,q_j^-\in\mathcal C_j$ and $r_j^+,r_j^-\geq0$
subject to
\begin{equation}
 \|q_j^\pm\|\leq r_j^\pm,\qquad
 \sum_{j,\sigma\in\{+,-\}}r_j^\sigma\leq1.
 \label{ag:feasible-set}
\end{equation}
These constraints define a compact convex set $K$. Associate with each
feasible point the tangent predictor and positive network
\begin{align}
 G_q(x)
 &=\frac12\sum_j
   \left[(q_j^+-q_j^-)^\top x
    +(q_j^++q_j^-)^\top x\,\operatorname{sign}(w_j^\top x)\right],
 \label{ag:gated-predictor}\\
 H_q(x)
 &=\sum_j\left[\relu((q_j^+)^\top x)+\relu(-(q_j^-)^\top x)\right].
 \label{ag:positive-predictor}
\end{align}
For every feasible point, $|G_q(x)|\leq\|x\|$ and
$0\leq H_q(x)\leq\|x\|$. Normalizing each nonzero vector in
\eqref{ag:positive-predictor} yields at most $2s$ unit directions with
nonnegative coefficients summing to at most one. Hence $H_q$ is
continuous, convex, nonnegative, and $1$-Lipschitz.

The tangent predictor is linear in its parameters. Define the
nonnegative linear penalty
\begin{equation}
 P(r)=\sum_{j,\sigma}\rho_j^{3/2}r_j^\sigma .
 \label{ag:radius-penalty}
\end{equation}

\begin{lemma}[Uniform conversion into a positive network]
\label{ag:conversion}
For every $(q,r)\in K$,
\begin{equation}
 H_q(x)-G_q(x)\geq0\quad\text{for every }x,\qquad
 \|H_q-G_q\|_{L^2}\leq A_0 P(r),\qquad A_0=3.
 \label{ag:conversion-bound}
\end{equation}
\end{lemma}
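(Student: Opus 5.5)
The argument reduces to a per-unit comparison. For $q\in\mathcal C_j$ and $\sigma\in\{+1,-1\}$, the contribution of the pair $(q_j^+,q_j^-)$ to $G_q$ splits as
\[
 \tfrac12(q_j^+)^\top x\,\bigl(1+\operatorname{sign}(w_j^\top x)\bigr)
 +\tfrac12(q_j^-)^\top x\,\bigl(\operatorname{sign}(w_j^\top x)-1\bigr).
\]
The first term equals $(q_j^+)^\top x\,\mathbf 1_{\{w_j^\top x>0\}}$. The second equals $-(q_j^-)^\top x\,\mathbf 1_{\{w_j^\top x<0\}}$, up to the null hyperplane. It therefore suffices to compare $\relu(q^\top x)$ with the gated linear function $q^\top x\,\mathbf 1_{\{w^\top x>0\}}$ for $q\in\mathcal C_j$. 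The negative-orientation term is the same comparison applied at $-x$, since $\relu(-(q^-)^\top x)$ versus $(-q^-)^\top x\,\mathbf 1_{\{w^\top(-x)>0\}}$ is this comparison with $x\mapsto -x$.

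\textbf{Pointwise sign.} For any real $y$ and indicator $\mathbf 1$, we have $\relu(y)-y\mathbf 1\geq0$. If the indicator is one, the difference is $\relu(y)-y=\relu(-y)\geq0$; otherwise it is $\relu(y)\geq0$. Summing over $j$ and both orientations gives $H_q-G_q\geq0$ everywhere, with the hyperplane $w_j^\top x=0$ handled by the convention $\operatorname{sign}(0)=0$. On that hyperplane the gated terms vanish, so the difference is a sum of ReLUs and is again nonnegative.

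\textbf{$L^2$ bound for one unit.} The residual is $\relu(q^\top x)-q^\top x\,\mathbf 1_{\{w^\top x>0\}}=|q^\top x|\,\mathbf 1_{\{(q^\top x)(w^\top x)<0\}}$ almost surely. This is the same disagreement-wedge structure as in \eqref{ag:tangent-residual}. Write $q=\norm{q}\hat q$ and let $\alpha$ be the angle between $\hat q$ and $w$. Then \eqref{ag:cubic-identity}, applied with $u=v=\hat q$, gives
\[
 \norm{\text{residual}}_{L^2}^2=\norm{q}^2\,\frac{\alpha-\sin\alpha\cos\alpha}{\pi}\leq\frac{2\alpha^3}{3\pi}\norm{q}^2 .
\]
The cone constraint gives $\tan\alpha\leq2\rho_j$, so $\alpha\leq2\rho_j$. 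Hence the residual norm is at most $\sqrt{16/(3\pi)}\,\rho_j^{3/2}\norm{q}\leq 1.31\,\rho_j^{3/2}r_j^\pm$. If $q=0$ the residual vanishes, and $\alpha<\pi/2$ holds automatically because $\rho_j\leq1/10$.

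\textbf{Summation.} By Minkowski's inequality over the at most $2s$ units, $\norm{H_q-G_q}_{L^2}\leq1.31\sum_{j,\sigma}\rho_j^{3/2}r_j^\sigma\leq 3P(r)$. The only delicate point is the exact decomposition of $G_q$ into per-unit gated terms, including the sign convention on the null hyperplane and the orientation of $q_j^-$. Once that bookkeeping is checked, the cubic wedge identity already proved does the rest, and the constant $A_0=3$ has ample slack.
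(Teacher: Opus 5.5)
Your argument is essentially the paper's proof. Both split $H_q-G_q$ into one summand per vector $q_j^\pm$, identify the residual as $|q^\top x|$ on the disagreement wedges, and insert $\alpha\le\arctan(2\rho_j)\le2\rho_j$ into the cubic identity \eqref{ag:cubic-identity}. Both then sum by Minkowski, using $\sqrt{16/(3\pi)}<3$.

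There is one incorrect claim in your pointwise-sign step. On the hyperplane $\{w_j^\top x=0\}$, the convention $\operatorname{sign}(0)=0$ does not make the gated terms vanish, because $G_q$ still contains $\tfrac12(q_j^+-q_j^-)^\top x$ there. So your indicator form of the first term fails on that set, and the difference there is not a sum of ReLUs. Instead, each per-unit difference on the hyperplane equals $\relu(y)-\tfrac12y=\tfrac12|y|\ge0$, with $y=(q_j^+)^\top x$ or $y=-(q_j^-)^\top x$. The conclusion therefore survives.

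The clean fix, as in the paper, is to write every summand of $H_q-G_q$ exactly as $\tfrac12\bigl(|q^\top x|-q^\top x\,\operatorname{sign}(w_j^\top x)\bigr)$. This is nonnegative for all $x$ because $|\operatorname{sign}|\le1$. The $L^2$ bound is unaffected, since the hyperplane is Gaussian-null.
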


\begin{proof}
A nonzero $q\in\mathcal C_j$ has angle
$\alpha\leq\arctan(2\rho_j)\leq2\rho_j$ with $w_j$.
For either orientation, the corresponding summand of $H_q-G_q$ is
\[
 \tfrac12\left(|q^\top x|-q^\top x\,
                            \operatorname{sign}(w_j^\top x)\right)\geq0.
\]
With $\operatorname{sign}(0)=0$, this identity includes points on the
gate hyperplane; away from it the summand is
$|q^\top x|\mathbf1_{\{(q^\top x)(w_j^\top x)<0\}}$.
The zero-vector case is identically zero. The Gaussian calculation in
\eqref{ag:cubic-identity} gives squared norm at most
\[
 \|q\|^2\frac{2\alpha^3}{3\pi}
 \leq\frac{16}{3\pi}\|q\|^2\rho_j^3.
\]
Sum the $L^2$ bounds by Minkowski and use
$\|q_j^\sigma\|\leq r_j^\sigma$. Since
$\sqrt{16/(3\pi)}<3$, this proves the lemma.
\end{proof}

Cone-constrained convex formulations and conversion between gated and
ReLU models have precedents in
\citet{mishkin2022convex,dwaraknath2023ntk}.
The circular cones used here correspond to the refinement scales, and
\eqref{ag:conversion-bound} bounds the population error of the
conversion uniformly over the feasible set. The bound applies whether
or not the dictionary satisfies the coverage and soundness conditions.

\subsection{The penalized population objective}
\label{ag:objective-section}

Set $\lambda=A_0^2=9$ and consider
\begin{align}
 J(q,r)&=\E[(Y-G_q(X))^2]+\lambda P(r)^2,
 \label{ag:population-objective}\\
 F(q,r)&=J(q,r)-\mathcal V
        =\|g-G_q\|_{L^2}^2+\lambda P(r)^2.
 \label{ag:mean-objective}
\end{align}
Both objectives are convex in $(q,r)$ and have the same minimizers.
The conditional-mean decomposition and Lemma~\ref{ag:conversion} give
\begin{equation}
 \E[(Y-H_q(X))^2]
 =\mathcal V+\|g-H_q\|_{L^2}^2
 \leq\mathcal V+2F(q,r)
 =2J(q,r)-\mathcal V.
 \label{ag:objective-conversion}
\end{equation}
Thus the irreducible variance enters the risk bound with coefficient
one, without requiring an estimate of that variance.

For the assignment in \eqref{ag:assignment-angle}, define a feasible
point by
\begin{align}
 q_j^+&=\sum_{\substack{i\in I:j(i)=j\\\sigma_i=+1}}a_i u_i,
 &
 q_j^-&=\sum_{\substack{i\in I:j(i)=j\\\sigma_i=-1}}a_i(-u_i),
 \label{ag:comparison-vectors}\\
 r_j^+&=\sum_{\substack{i\in I:j(i)=j\\\sigma_i=+1}}a_i,
 &
 r_j^-&=\sum_{\substack{i\in I:j(i)=j\\\sigma_i=-1}}a_i .
 \label{ag:comparison-masses}
\end{align}
Each oriented unit vector lies in $\mathcal C_j$. Closure under
nonnegative linear combinations and the triangle inequality give
\eqref{ag:feasible-set}. Since $G_q$ is linear in the vectors $q$, it
equals the sum of the retained atoms' tangent predictors. Hence
\begin{equation}
 \|G_q-f\|_{L^2}\leq2k\gamma+\sqrt{\eps_{\rm int}},
 \qquad
 P(r)=P_0\leq k\gamma+\sqrt{\eps_{\rm int}}.
 \label{ag:feasible-comparator}
\end{equation}
Writing $A=\|g-f\|_{L^2}^2$ for this comparator and applying
$(a+b)^2\leq2a^2+2b^2$ gives
\begin{equation}
 F(q,r)\leq2A+34k^2\gamma^2+22\eps_{\rm int}.
 \label{ag:comparator-objective}
\end{equation}
Indeed, the two terms on the right of
$F\leq2A+2\|f-G_q\|_{L^2}^2+9P(r)^2$ contribute respectively
$16k^2\gamma^2+4\eps_{\rm int}$ and
$18k^2\gamma^2+18\eps_{\rm int}$.

\subsection{Finite-variance estimation of the convex objective}
\label{ag:estimation-section}

The parameter space has dimension $2s(d+1)$. The feasible set has
Euclidean diameter at most $2\sqrt2$, since
$\sum_{j,\sigma}(r_j^\sigma)^2\leq1$ and
$\sum_{j,\sigma}\|q_j^\sigma\|^2\leq1$.
The feature blocks for $G_q$ are
\begin{equation}
 \psi_j^+(X)=\tfrac12X(1+\operatorname{sign}(w_j^\top X)),
 \qquad
 \psi_j^-(X)=\tfrac12X(-1+\operatorname{sign}(w_j^\top X)).
 \label{ag:features}
\end{equation}
Their coordinates have magnitude at most $\|X\|$.
Population squared risk differs from
$\E[G_q^2]-2\E[YG_q]$ by the common constant $\E Y^2$.

Choose $R=\poly(d,\log(B/b))$ large enough that
\begin{equation}
 B\,\E[\|X\|^2\mathbf1_{\{\|X\|>R\}}]\leq c b^2,
 \label{ag:tail-cutoff}
\end{equation}
where $c>0$ is a sufficiently small universal constant. A Gaussian norm
tail bound supplies such an $R$. Put
$\psi_R=\mathbf1_{\{\|X\|\leq R\}}\psi$.
Estimate $\E[Y\psi_R]$ coordinatewise by median-of-means. Each coordinate
has variance at most $BR^2$, so a fourth label moment is unnecessary.
Using fresh unlabeled Gaussian samples, estimate the uncentered second
moment by the positive semidefinite matrix
\begin{equation}
 \widehat Q=\frac1N\sum_{n=1}^N
                \psi_R(X_n)\psi_R(X_n)^\top.
 \label{ag:empirical-gram}
\end{equation}
Its entries are bounded by $R^2$, so simultaneous entrywise estimation
has polynomial sample cost.

For every feasible point,
\[
 |\E[YG_q\mathbf1_{\{\|X\|>R\}}]|
 \leq\sqrt{B}\,
       \sqrt{\E[\|X\|^2\mathbf1_{\{\|X\|>R\}}]},
 \qquad
 \E[G_q^2\mathbf1_{\{\|X\|>R\}}]
 \leq\E[\|X\|^2\mathbf1_{\{\|X\|>R\}}].
\]
These bounds are uniform over $K$. Estimating the entries to accuracy
smaller than $b$ by a polynomial factor in the parameter dimension
therefore gives a convex quadratic estimate satisfying
\begin{equation}
 \sup_{(q,r)\in K}
 \left|\widehat J(q,r)-\bigl(J(q,r)-\E Y^2\bigr)\right|\leq b.
 \label{ag:uniform-estimate}
\end{equation}
The penalty is computed exactly. The estimation samples are independent
of those used to construct the dictionary. The Hessian of the estimated
quadratic part has operator norm at most a polynomial in $s,d,R$; the penalty Hessian
is $2\lambda aa^\top$ with entries
$a_{j,\sigma}=\rho_j^{3/2}$ and $\|a\|^2\leq2s$.
The gradient Lipschitz constant consequently has an explicit
polynomial bound.

\subsection{Convex optimization with an explicit linear oracle}
\label{ag:optimization-section}

For a linear objective $v^\top q+c r$ on one block, its minimum at
fixed $r\geq0$ is
\begin{equation}
 \min_{\substack{q\in\mathcal C_j\\\|q\|\leq r}}
       (v^\top q+c r)
 =r\bigl(c-\|\Pi_{\mathcal C_j}(-v)\|\bigr).
 \label{ag:block-oracle}
\end{equation}
For any closed convex cone $C$, Moreau's decomposition gives
$\max_{q\in C,\|q\|\leq1}z^\top q=\|\Pi_C z\|$:
the polar component has nonpositive inner product with $C$, and the
normalized cone projection attains this value. If the projection
vanishes, take $q=0$.

The projection in \eqref{ag:block-oracle} is explicit. Write a vector
to be projected as $z=a w_j+z_\perp$, put $u=\|z_\perp\|$ and
$\beta=2\rho_j$. Then
\begin{equation}
 \Pi_{\mathcal C_j}z=
 \begin{cases}
 z,&a\geq0,\ u\leq\beta a,\\
 0,&a+\beta u\leq0,\\
 h w_j+\beta h\,z_\perp/u,
    &\text{otherwise},\quad h=(a+\beta u)/(1+\beta^2).
 \end{cases}
 \label{ag:cone-projection}
\end{equation}
In the last case $u>0$. The cases with $u=0$ belong to the first two
alternatives. To verify the formula, rotational symmetry reduces the
projection to minimizing $(h-a)^2+(z'-u)^2$ over
$h\geq0$, $0\leq z'\leq\beta h$; in the third region the minimizer lies
on $z'=\beta h$.

To minimize a linear function over $K$, compute the coefficient in
\eqref{ag:block-oracle} for every block. If their minimum is negative,
put $r=1$ in a minimizing block, choose its minimizing $q$, and set all
other blocks to zero. If all coefficients are nonnegative, return the
origin. The resulting exact oracle uses polynomially many arithmetic,
comparison, and square-root operations.

Apply conditional gradient to $\widehat J$, starting at the origin, with
step sizes $2/(n+2)$ for $n=0,1,\ldots$. In particular the first step
has size one. The smoothness inequality and the diameter bound imply
objective gap $O(L/(n+1))$ after $n\geq1$ steps, where $L$ is the
polynomial bound on the gradient Lipschitz constant. The first full step
removes any dependence of this bound on the initial objective gap. Polynomially many
steps therefore yield optimization error at most $b$.
When \eqref{ag:uniform-estimate} holds, the resulting feasible point
satisfies
\begin{equation}
 F(\widehat q,\widehat r)
 \leq\inf_{(q,r)\in K}F(q,r)+3b.
 \label{ag:fitting-error}
\end{equation}
Subtracting the unknown constant $\mathcal V$ changes neither this
inequality nor the optimization procedure.
Define the output network $H_{\widehat q}$ by
\eqref{ag:positive-predictor}. Under \eqref{ag:valid-guess} and the
simultaneous success event,
\eqref{ag:objective-conversion}, \eqref{ag:comparator-objective}, and
\eqref{ag:fitting-error} give
\begin{equation}
 \E[(Y-H_{\widehat q}(X))^2]
 \leq\mathcal V+4A+68k^2\gamma^2+50\eps_{\rm int}.
 \label{ag:single-guess-risk}
\end{equation}
Every feasible fit satisfies the stated sign, mass, width, continuity,
convexity, and Lipschitz constraints, even if
\eqref{ag:valid-guess} or a statistical guarantee fails.

\subsection{Scale selection and the final risk bound}
\label{ag:selection-section}

Construct a dictionary and fit a predictor, subject to the prescribed
computational bounds, for each value in the dyadic grid
\begin{equation}
 \{\gamma_{\min}2^j:j\geq0,\ \gamma_{\min}2^j\leq1\}\cup\{1\},
 \label{ag:gamma-grid}
\end{equation}
and include the zero network among the candidates. The grid has
$O(\log(2k/\eps))$ values. Model selection is based on the risk of the
converted networks $H$. For each candidate, estimate
\begin{equation}
 \E[H(X)^2]-2\E[YH(X)]
 \label{ag:validation-score}
\end{equation}
to additive accuracy $b$ using fresh samples. The same cutoff
\eqref{ag:tail-cutoff}, the bound $0\leq H(X)\leq\|X\|$, a
finite-variance estimate for the truncated cross term, and a bounded
estimate for the truncated quadratic term suffice. A union bound gives
simultaneous estimates for all candidates. Minimizing the estimated
score selects a network whose risk is at most $2b$ above the smallest
candidate risk. Neither $\E Y^2$ nor $\mathcal V$ needs to be estimated.

\begin{proof}[Proof of Theorem~\ref{thm:agnostic}]
The parameterization of $\mathcal F_k$ by a compact weight simplex and
$k$ unit spheres is compact, and its map into Gaussian $L^2$ is
continuous. Hence a minimizer $f$ of $\|g-f\|_{L^2}^2$ exists.
Fix such a minimizer, so $A=\mathcal A_k$ and
$\zeta=\sqrt{\mathcal A_k}$. Set
\[
 S=C_{\rm noise}\sqrt{k}(t+1)^{9/4}\sqrt{\mathcal A_k}.
\]
If $S\leq1$, the grid contains a parameter satisfying
\eqref{ag:valid-guess} and
$\gamma\leq2\max\{S,\gamma_{\min}\}$. After validation,
\eqref{ag:single-guess-risk} implies
\begin{align}
 \E[(Y-H(X))^2]
 &\leq\mathcal V+4\mathcal A_k+
       272 C_{\rm noise}^2 k^3(t+1)^{9/2}\mathcal A_k
       +53\eps_{\rm int}\notag\\
 &\leq\mathcal V+Ck^3(t+1)^{9/2}\mathcal A_k+\eps.
 \label{ag:small-approximation}
\end{align}
Here $272k^2\gamma_{\min}^2=272\eps_{\rm int}/10^4$, the
validation error is $2b=2\eps_{\rm int}$, and
\eqref{ag:internal-parameters} absorbs their sum with the
$50\eps_{\rm int}$ term.

If $S>1$, the zero network satisfies
\begin{equation}
 \E Y^2=\mathcal V+\|g\|_{L^2}^2
 \leq\mathcal V+2\mathcal A_k+2\|f\|_{L^2}^2
 \leq\mathcal V+2\mathcal A_k+1.
 \label{ag:large-approximation}
\end{equation}
The condition $S>1$ gives
$k(t+1)^{9/2}\mathcal A_k>C_{\rm noise}^{-2}$. Increasing the
universal constant absorbs the additive one into
$Ck^3(t+1)^{9/2}\mathcal A_k$; validation adds only $2b<\eps$.
Since $t+1=O(\log(2k/\eps))$, both cases prove
\begin{equation}
 \E[(Y-H(X))^2]
 \leq\mathcal V+
       Ck^3\log^{9/2}(2k/\eps)\mathcal A_k+\eps.
 \label{ag:oracle-risk}
\end{equation}

To bound the complexity, note that every localized call has
$\rho^{-1}<\eps_{\rm int}^{-1/3}$, every nontrivial coarse threshold
is at least $\gamma_{\min}$, and the inverse matrix precisions can be
bounded by a fixed polynomial in $k,\eps^{-1}$. Equation
\eqref{ag:degree} makes $\exp(O(t))$ a fixed polynomial. There are
polynomially many grid values, at most $k$ localized calls per round, and
$O(\log(2/\eps))$ rounds. The per-call bound $B/\rho$, rejection
overhead, tensor calculations, objective estimation, convex fitting,
and validation therefore all have total sample and real-operation
cost polynomial in $d,k,B,\eps^{-1},\log(1/\delta)$.

Use a deterministic upper bound on the number of calls and estimates
to allocate their failure probabilities with total at most $\delta$.
Each adaptive call uses fresh samples after its parameters are fixed;
conditioning on the past and summing these conditional failure
probabilities justifies the union bound. Rejection sampling with the
prescribed sample limit is coupled to the infinite accepted sequence as in
Lemma~\ref{lem:localization}, counting an exceeded sampling bound as
failure. On the simultaneous success event, all estimates required for
the parameter used above are accurate. For every grid value, the
computation obeys the prescribed bounds and returns either a feasible
network or zero. Finally \eqref{ag:dictionary} and
\eqref{ag:positive-predictor} give
$m\leq2s\leq Ck\log(2/\eps)$ and total output weight at most one,
with the stated continuity, convexity, nonnegativity, and Lipschitz
properties.
\end{proof}

\begin{corollary}[Risk relative to the best comparator network]
\label{ag:opt-corollary}
Under the assumptions of Theorem~\ref{thm:agnostic}, put
$\mathrm{OPT}=\inf_{f\in\mathcal F_k}\E[(Y-f(X))^2]$.
The same positive-network output and complexity guarantees hold with
\begin{equation}
 \E[(Y-H(X))^2]\leq
 Ck^3\log^5(2k/\eps)\,\mathrm{OPT}+\eps .
 \label{ag:opt-bound}
\end{equation}
If $g\in\mathcal F_k$, then
$\|H-g\|_{L^2}^2\leq\eps$, irrespective of the size of the
irreducible variance, subject to the supplied second-moment bound $B$.
\end{corollary}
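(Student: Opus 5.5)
The plan is to derive both assertions of Corollary~\ref{ag:opt-corollary} directly from the bound \eqref{ag:oracle-risk} proved for Theorem~\ref{thm:agnostic}, using only the decomposition \eqref{eq:risk-decomposition}. We run the same algorithm, so the output constraints and complexity bounds are inherited unchanged; only the risk inequality needs comment.

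For \eqref{ag:opt-bound}, recall $\mathrm{OPT}=\mathrm{OPT}_k=\mathcal V+\mathcal A_k$ with both terms nonnegative. Since $k\geq1$ and $\eps\leq1$, we have $\log(2k/\eps)\geq\log2$. After enlarging the constant, we may therefore assume $C k^3\log^{9/2}(2k/\eps)\geq1$, and also $Ck^3\log^{9/2}(2k/\eps)\leq C'k^3\log^5(2k/\eps)$, the latter because $\log^{1/2}(2k/\eps)\geq\sqrt{\log 2}$. Then
\[
 \mathcal V+Ck^3\log^{9/2}(2k/\eps)\mathcal A_k
 \leq C'k^3\log^{5}(2k/\eps)(\mathcal V+\mathcal A_k),
\]
and adding $\eps$ gives \eqref{ag:opt-bound}. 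The extra half power of the logarithm is not needed; it simply gives slack in the constants. One could even keep the $9/2$ exponent.

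For the second claim, suppose $g\in\Fk$. Then $\mathcal A_k=0$, the infimum being attained at $f=g$. Thus \eqref{ag:oracle-risk} reads $\mathcal R(H)\leq\mathcal V+\eps$. By \eqref{eq:risk-decomposition}, $\mathcal R(H)=\mathcal V+\norm{g-H}_2^2$, which yields $\norm{H-g}_2^2\leq\eps$ no matter how large $\mathcal V$ is. The only dependence on the noise enters through $B\geq\E Y^2$ in the complexity bound.

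The only potential subtlety is whether the constant in \eqref{ag:oracle-risk} is uniform. In the case $S>1$ of the theorem's proof, the additive $1$ was absorbed into $Ck^3(t+1)^{9/2}\mathcal A_k$; this absorption uses $\mathcal A_k$ rather than $\mathcal V$, and it remains valid here. So no case analysis beyond citing \eqref{ag:oracle-risk} should be needed, and I expect no real obstacle.
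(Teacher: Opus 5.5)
Your proposal is correct and matches the paper's proof: you write $\mathrm{OPT}_k=\mathcal V+\mathcal A_k$, apply \eqref{ag:oracle-risk}, and absorb both the coefficient of $\mathcal V$ and the change of logarithmic exponent from $9/2$ to $5$ into the universal constant, using $\log(2k/\eps)\geq\log 2$; for $g\in\Fk$ you set $\mathcal A_k=0$ and subtract $\mathcal V$ via \eqref{eq:risk-decomposition}. You also check explicitly that $Ck^3\log^5(2k/\eps)$ can be made at least one, so that $\mathcal V$ is dominated, which the paper leaves implicit.
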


\begin{proof}
The conditional-mean decomposition gives
$\mathrm{OPT}=\mathcal V+\mathcal A_k$. Apply
\eqref{ag:oracle-risk} and absorb universal constants, including the
change from logarithmic exponent $9/2$ to $5$. If $g\in\mathcal F_k$,
then $\mathcal A_k=0$, and subtracting $\mathcal V$ proves the last
statement.
\end{proof}

\begin{corollary}[Gated-linear output]
\label{ag:gated-corollary}
With the same sample and operation bounds, the algorithm can also return
\begin{equation}
 G(x)=\ell^\top x+\sum_{j=1}^{s}b_j^\top x\,
                     \operatorname{sign}(w_j^\top x),
 \qquad
 s\leq Ck\log(2/\eps),\qquad
 \|\ell\|+\sum_j\|b_j\|\leq1,
 \label{ag:gated-output}
\end{equation}
with risk bounded by the right side of \eqref{ag:oracle-risk}, and hence
also by \eqref{ag:opt-bound}. This output can be discontinuous at its
gate hyperplanes.
\end{corollary}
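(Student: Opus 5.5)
The plan is to run the entire agnostic pipeline of Theorem~\ref{thm:agnostic} unchanged and, at the final step, output the gated predictor $G_{\widehat q}$ from \eqref{ag:gated-predictor} rather than the converted network $H_{\widehat q}$. Expanding \eqref{ag:gated-predictor}, we set $\ell=\frac12\sum_j(\widehat q_j^+-\widehat q_j^-)$ and $b_j=\frac12(\widehat q_j^++\widehat q_j^-)$. The triangle inequality and \eqref{ag:feasible-set} then give
\[
 \|\ell\|+\sum_j\|b_j\|\leq\sum_{j,\sigma}\|\widehat q_j^\sigma\|\leq\sum_{j,\sigma}r_j^\sigma\leq1.
\]
The number of gates is the dictionary size $s\leq Ck\log(2/\eps)$ from \eqref{ag:dictionary}. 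The representation has this form for every feasible point, so the format constraints hold on every outcome. On an aborted run we return $G=0$.

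For the risk bound, we use $\E[(Y-G_q(X))^2]=\mathcal V+\|g-G_q\|_{L^2}^2\leq\mathcal V+F(q,r)$, because the penalty is nonnegative. This is in fact better than \eqref{ag:objective-conversion}, since the conversion step is not needed. Under \eqref{ag:valid-guess}, combining \eqref{ag:fitting-error} with \eqref{ag:comparator-objective} gives $\mathcal V+2A+34k^2\gamma^2+25\eps_{\rm int}$, which is dominated by \eqref{ag:single-guess-risk}.

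The step that needs the most care is model selection. Validation must now score the gated candidates. We therefore estimate $\E[G(X)^2]-2\E[YG(X)]$ using the bound $|G(x)|\leq\|x\|$, the cutoff \eqref{ag:tail-cutoff}, a finite-variance median-of-means estimate for the truncated cross term, and a bounded estimate for the truncated square. These are exactly the ingredients already used for $H$. One point to check is that discontinuity at the gates causes no difficulty: the estimators use only the pointwise bound by $\|x\|$, and the hyperplanes are Gaussian-null. We also keep the zero predictor among the candidates.

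The case analysis in the proof of Theorem~\ref{thm:agnostic} then carries over verbatim. If $S\leq1$, a good grid value exists and gives \eqref{ag:small-approximation}. If $S>1$, the zero candidate gives \eqref{ag:large-approximation}. Together these yield \eqref{ag:oracle-risk}, and \eqref{ag:opt-bound} follows as in Corollary~\ref{ag:opt-corollary}.

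The sample and operation counts are unchanged, apart from the trivial cost of forming $\ell$ and $b_j$. If both outputs are wanted simultaneously, we run a separate validation for each.
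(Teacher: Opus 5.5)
Your proposal is correct and follows the paper's proof essentially step for step. You output the tangent predictor $G_{\widehat q}$ with $\ell=\tfrac12\sum_j(\widehat q_j^+-\widehat q_j^-)$ and $b_j=\tfrac12(\widehat q_j^++\widehat q_j^-)$, bound its risk directly by $\mathcal V+F(\widehat q,\widehat r)$ (which avoids the factor two from conversion), and reuse the truncated validation via $|G_q(x)|\leq\|x\|$ together with the same two cases for $\mathcal A_k$; your explicit bound $\mathcal V+2A+34k^2\gamma^2+25\eps_{\rm int}$ is a correct instance of the paper's remark that these bounds are no larger than those for the positive network.
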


\begin{proof}
Retain the tangent function of every fitted grid point and use
an independent validation sample to select among these functions and
zero. For each fitted point, take
$\ell=\tfrac12\sum_j(q_j^+-q_j^-)$ and
$b_j=\tfrac12(q_j^++q_j^-)$; the triangle inequality gives the norm
constraint in \eqref{ag:gated-output}. Since
$|G_q(X)|\leq\|X\|$, the same validation method applies.
For the parameter satisfying \eqref{ag:valid-guess},
$\E[(Y-G_{\widehat q}(X))^2]
\leq\mathcal V+F(\widehat q,\widehat r)$.
Use \eqref{ag:comparator-objective} and \eqref{ag:fitting-error}, then
the same two cases for $\mathcal A_k$. The resulting error bounds are
no larger than those established for the positive network.
\end{proof}

%% file: alternative_cover.tex
\section{Direction recovery by symmetric tensor compression and reweighting}
\label{sec:alternative}

We give a direction-cover construction for the conditional-mean model
\eqref{eq:conditional-model} using symmetric tensor compression and
complex Gaussian reweighting. The construction estimates Gram matrices
to obtain approximately orthonormal tensor bases, then applies random
reweightings to isolate neighborhoods of the hidden directions. Its
sample and computational complexity are polynomial in the model
parameters and linear in one plus the noise second moment.

Set
\begin{equation}
 \theta=10^{-3},\qquad \eta_*=\theta^2/100.
 \label{alt:parameters}
\end{equation}
We write $\zeta_G$ for the Gram-matrix error. Tensor polynomials,
defined in Section~\ref{sec:moments}, are represented by arithmetic
circuits that share subexpressions across linear combinations and
products. The scalar estimation argument in
Lemma~\ref{lem:query-access} applies to any such bounded-norm tensor
query after rescaling; orthonormality is needed there only for the
Frobenius bound on deterministic mean perturbations. Complex queries
are estimated through their real and imaginary parts. Unlabeled Gram
queries use the product identities following
Lemma~\ref{lem:moment-signal}.

\subsection{Stable symmetric tensor compression}

\begin{lemma}[Recursive tensor compression]
\label{alt:compression}
Given an integer $t\geq1$, $\kappa\in(0,1)$, and a failure probability
$\omega\in(0,1)$, one can construct a map $B$ with at most $k$ columns,
each a real symmetric order-$t$ tensor, such that, with probability at least
$1-\omega$,
\begin{equation}
 \norm{B^*B-I}_{\op}\leq\zeta_G\leq\kappa,
 \qquad
 \sum_i a_i\dist(u_i^{\otimes t},\range B)^2\leq\kappa^2.
 \label{alt:compression-guarantee}
\end{equation}
The column polynomials can be evaluated by arithmetic circuits with
shared subexpressions. The sample complexity and number of
real-arithmetic operations are at most
\begin{equation}
 (1+\sigma^2)\exp(O(t))\poly\!\left(d,k,\kappa^{-1},
                                      \log(1/\omega)\right).
 \label{alt:compression-cost}
\end{equation}
The empty map is permitted, with the Gram condition interpreted vacuously.
\end{lemma}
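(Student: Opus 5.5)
We will build $B$ one tensor order at a time, in the spirit of Lemma~\ref{lem:tt-compression}. The differences are that the columns are now symmetric tensors given by polynomial circuits, and orthonormality holds only approximately: it is certified by estimated Gram matrices rather than enforced exactly.

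\textbf{Stage structure.} Start from the scalar map $B_0=1$. At stage $j$, form the candidate family $U_j$ whose columns are $\Sym(b\otimes e_\ell)$, with $b$ ranging over the columns of $B_{j-1}$ and $e_\ell$ over the standard basis. This gives at most $kd$ columns, each evaluated as a circuit via $p_{\Sym(b\otimes e_\ell)}(z)=p_b(z)z_\ell$.
\begin{enumerate}
\item Estimate the Gram matrix $U_j^*U_j$ with unlabeled Gaussian samples, using the polarized pairing identity and the second-moment bound $4^j$ from the remark on tensor products.
\item Whiten on the eigenvectors whose eigenvalues exceed a threshold of order $\kappa$, which yields an approximately orthonormal family $\widetilde U_j$.
\item Estimate the label moment matrix $\widetilde U_j^*M_j\widetilde U_j$ by the clipped conditional-mean estimator of Lemma~\ref{lem:query-access}, rescaled for bounded-norm queries.
\item Keep the top $k$ eigenvectors to obtain $B_j$.
\end{enumerate}
Since $u^{\otimes j}$ is symmetric, $\dist(u_i^{\otimes j},\range U_j)\le\dist(u_i^{\otimes(j-1)},\range B_{j-1})$. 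As in \eqref{eq:merged-tt-recursion}, the lost mass $L_j$ should therefore increase by at most $O(\sqrt k)$ times the stage's matrix error, plus a thresholding loss.

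\textbf{Error accounting.} There are three contributions.
\begin{enumerate}
\item \emph{Gram error.} With entrywise accuracy $\zeta_G/(kd)$, whitening gives $\norm{\widetilde U_j^*\widetilde U_j-I}_\op\lesssim\zeta_G/\text{threshold}$.
\item \emph{Discarded small-eigenvalue directions.} The loss here is bounded by the threshold times the mass, because $\sum_ia_i\norm{\cdot}^2\le1$. I would set the threshold of order $\kappa^2/t$ per stage, so these losses sum to $O(\kappa^2)$.
\item \emph{Moment estimation.} The trace-comparison argument from the proof of Lemma~\ref{lem:tt-compression} carries over, with an extra multiplicative $(1+O(\zeta_G))$ for near-orthonormality.
\end{enumerate}
Summing over $t$ stages and choosing the per-stage precision as an inverse polynomial in $t,k,\kappa^{-1}$ gives $\sum_ia_i\dist(u_i^{\otimes t},\range B)^2\le\kappa^2$.

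The final Gram bound for $B$ follows from one last fresh Gram estimate and a re-orthonormalization. Apply $(\widehat G)^{-1/2}$ after discarding tiny eigenvalues, and then $\norm{B^*B-I}_\op\le\zeta_G\le\kappa$ holds with high probability. If the discarded directions carry mass, they are controlled by the same threshold argument. If every eigenvalue falls below the threshold, the empty map is returned, and the mass bound then holds because the atoms project with total mass below $\kappa^2$.

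\textbf{Cost.} The estimators use multipliers of order $2^{O(j)}$, which accounts for the $\exp(O(t))$ factor. The conditional-mean clipping in Lemma~\ref{lem:moment-signal} gives cost linear in $1+\sigma^2$. Circuit sizes grow by a polynomial per stage because subexpressions are shared, so the total cost matches \eqref{alt:compression-cost}.

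The main obstacle will be the interplay between whitening and the small Gram eigenvalues. Non-orthonormal columns distort both the leakage recursion and the moment-matrix error, and near-singular candidate families can amplify the estimation error. My first approach would be the spectral threshold with truncated inverse square root, together with a perturbation bound for $\widehat G^{-1/2}$ restricted to eigenvalues above the threshold, choosing the Gram accuracy polynomially smaller than the threshold.
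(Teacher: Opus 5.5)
Your architecture matches the paper's: candidate columns $\Sym(b\otimes e_\ell)$, an estimated Gram matrix with spectral thresholding and whitening, an estimated compressed moment matrix, and truncation to at most $k$ columns. The gap is in how you account for the discarded small-eigenvalue directions. You claim that discarding estimated Gram eigenvalues below $\alpha$ costs at most $\alpha$ times the mass. This is not justified, and for a general unit tensor it is false. When the candidate family is ill-conditioned, a unit tensor can lie entirely in the span of $U_jc$ over near-null coefficient vectors $c$, and discarding them removes all of it.

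What saves the step is a bounded coefficient representation inherited from the previous level. Let $Vc(u)$ be the projection of $u^{\otimes(j-1)}$ onto $\range V$.
\begin{enumerate}
\item Near-orthonormality of $V$ gives $\|c(u)\|\le(1-\zeta_G)^{-1/2}$.
\item The comparison vector $\Sym(Vc(u)\otimes u)$ equals $U_j$ applied to the coefficient vector $u\otimes c(u)$, which has the same norm bound.
\item $U_j$ maps the discarded part of this coefficient vector to norm at most $\sqrt{\alpha'}$, where $\alpha'=(\alpha+\eta_G)/(1-\zeta_G)$.
\end{enumerate}
Your proposal never makes this link. The perturbation bound for $\widehat G^{-1/2}$ that you suggest for the ``main obstacle'' only certifies near-orthonormality of the retained columns; it says nothing about the mass lost.

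Even with this representation, you get a triangle inequality for the root-mean-square error, $e_j\le e_{j-1}+\sqrt{\alpha'}+\cdots$, not an additive bound on squared losses as in \eqref{eq:merged-tt-recursion}. Exact Pythagorean additivity is lost because, after symmetrization, the previous residual $\Sym((I-P)u^{\otimes(j-1)}\otimes u)$ is in general not orthogonal to the candidate span. So the cross term with the discarded part of the comparison vector cannot be dropped, and the new squared error can only be bounded by $e_{j-1}^2+\alpha'+2\sqrt{\alpha'}\,e_{j-1}$. With your threshold $\alpha\sim\kappa^2/t$, this gives only $e_t\lesssim\kappa\sqrt t$, that is, squared loss of order $t\kappa^2$ rather than $\kappa^2$.

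The paper therefore takes $\alpha=\kappa^2/(C_0t^2)$, $\zeta_G=\kappa/(C_0t)$ and $\beta=\kappa^2/(C_0t^2k)$, so that each increment in \eqref{alt:compression-recursion} is at most $\kappa/t$. The Gram error also enters that recursion additively, through $\|P-BB^*\|_{\op}\le\zeta_G$, and not only as a factor $1+O(\zeta_G)$. These choices are still inverse-polynomial, so your cost analysis survives once the recursion is corrected. Your final fresh Gram estimate and re-orthonormalization are unnecessary: $(BR)^*(BR)=R^*B^*BR$ already has spectrum in $[1-\zeta_G,1+\zeta_G]$.
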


\begin{proof}
Begin with the order-zero basis $\{1\}$. Suppose a map $V$ with at most $k$
nearly orthonormal columns has been constructed at order $j-1$. Form the
$n\leq dk$ candidate columns
\begin{equation}
 A_{s\ell}=\Sym(e_s\otimes v_\ell),\qquad
 p_{A_{s\ell}}(z)=z_s p_{v_\ell}(z).
 \label{alt:candidates}
\end{equation}
Estimate $A^*A$ to operator error $\eta_G$. Discard estimated eigenvalues
smaller than a threshold $\alpha>0$ and whiten using the retained estimated
eigenvalues. If $U\Lambda U^*$ is the retained eigendecomposition, put
$B=AU\Lambda^{-1/2}$. Then
\begin{equation}
 \norm{B^*B-I}_{\op}\leq\eta_G/\alpha\leq\zeta_G
 \label{alt:whitening}
\end{equation}
provided $\eta_G\leq\zeta_G\alpha$.

Let $e_{j-1}$ be the square root of the weighted mean-square projection error
at order $j-1$. For each unit $u$, write the projection of
$u^{\otimes(j-1)}$ onto $\range V$ as $Vc(u)$. Near orthonormality implies
$\norm{c(u)}_2\leq(1-\zeta_G)^{-1/2}$. Tensoring with $u$ and symmetrizing
does not increase the approximation error. The candidate coefficient vector
$u\otimes c(u)$ has the same norm bound. In the discarded estimated Gram
subspace, the true quadratic form of $A^*A$ is at most
$\alpha+\eta_G$. Hence the additional approximation error from discarding
that subspace is at most
\begin{equation}
 \sqrt{\frac{\alpha+\eta_G}{1-\zeta_G}}.
 \label{alt:gram-discard}
\end{equation}
This argument requires no lower singular-value bound for the unknown atom
dictionary.

Set $z(u)=B^*u^{\otimes j}$ and
$C=\sum_i a_i z(u_i)z(u_i)^*$. This positive semidefinite matrix has rank at
most $k$. Its entries are order-$2j$ atomic queries and can be estimated to
operator error $\xi$ using \eqref{eq:merged-atomic-moment}. Retain the eigenvectors
of the estimated matrix with eigenvalues at least $\beta>\xi$, and let $R$
have these orthonormal columns. The min--max principle implies that $R$ has
at most $k$ columns on the estimation success event. If more than $k$
columns are retained, the algorithm terminates and reports failure.
Thus the dimensions of all recursively constructed matrices remain
bounded even when estimation fails.
Let $Q=I-RR^*$. The matrix $QCQ$ is positive semidefinite, has rank at
most $k$, and satisfies $\norm{QCQ}_{\op}\leq\beta+\xi$. Consequently,
\begin{equation}
 \tr[(I-RR^*)C]\leq k(\beta+\xi).
 \label{alt:cov-discard}
\end{equation}
The updated tensor basis is $BR$.

For completeness, put $H=B^*B$ and let
$P=B H^{-1}B^*$ be the orthogonal projector onto $\range B$. Since the
nonzero eigenvalues of $BB^*$ equal those of $H$,
$\norm{P-BB^*}_{\op}\leq\zeta_G$.
Use $BRR^*B^*u^{\otimes j}$ as a comparison vector in $\range(BR)$.
The triangle inequality in weighted $L^2$, \eqref{alt:gram-discard}, and
\eqref{alt:cov-discard} give
\begin{equation}
 e_j\leq e_{j-1}
       +\sqrt{\frac{\alpha+\eta_G}{1-\zeta_G}}
       +\zeta_G
       +\sqrt{(1+\zeta_G)k(\beta+\xi)}.
 \label{alt:compression-recursion}
\end{equation}
We used $\sum_i a_i\leq1$ for the uniform error terms.
The new Gram matrix also has spectrum in $[1-\zeta_G,1+\zeta_G]$.

Fix $C_0=100$ and choose, at all levels,
\begin{equation}
 \alpha=\frac{\kappa^2}{C_0t^2},\qquad
 \zeta_G=\frac{\kappa}{C_0t},\qquad
 \beta=\frac{\kappa^2}{C_0t^2k},\qquad
 \eta_G\leq\frac{\zeta_G\alpha}{4},\qquad \xi\leq\frac{\beta}{4}.
 \label{alt:compression-parameters}
\end{equation}
Each increment in \eqref{alt:compression-recursion} is then at most
$\kappa/t$, so $e_t\leq\kappa$. If a basis becomes empty, it remains empty,
and the same error bound applies.

Every required inverse precision is polynomial in $d,k,t,\kappa^{-1}$.
The largest labeled query order is $2t$. Candidate tensors and whitened
columns have bounded Frobenius norms on the preceding success events, so
the moment bounds give \eqref{alt:compression-cost} after allocating the
failure probability among all entries and levels. At each level, linear
combinations and the multiplications in \eqref{alt:candidates} add only a
polynomial number of circuit gates. Sharing the gates from earlier
levels keeps the total circuit size polynomial. Each stage uses an
independent estimation batch, so the bounds hold conditionally on
successful estimates at the preceding stages.
\end{proof}

\subsection{A direction cover from complex Gaussian reweighting}
\label{alt:coarse-section}

\begin{proposition}[Complete and sound coarse cover]
\label{alt:coarse}
Fix $\theta$ and $\eta_*$ as in \eqref{alt:parameters}. For any
$\tau\in(0,1)$ and failure probability $\omega\in(0,1)$, a randomized
procedure returns a finite list $L\subseteq\sphere$ such that, with
probability at least $1-\omega$:
\begin{enumerate}
\item for every atom with $a_i\geq\tau$, some $v\in L$ satisfies
      $D(v,u_i)\leq\theta$ \emph{(completeness)};
\item every $v\in L$ satisfies $D(v,u_i)\leq\theta$ for at least one
      positive-weight atom $u_i$ \emph{(soundness)}.
\end{enumerate}
The list length is bounded by a fixed polynomial in
$k,\tau^{-1},\log(2/\omega)$. The sample complexity and number of real-arithmetic operations are at most
$(1+\sigma^2)\poly(d,k,\tau^{-1},\log(2/\omega))$.
The witness atom in the soundness statement need not be heavy.
\end{proposition}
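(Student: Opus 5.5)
We prove Proposition~\ref{alt:coarse} by first compressing the moments with Lemma~\ref{alt:compression}, then decoding directions through random complex Gaussian reweightings, and finally filtering the decoded candidates with a mass test.

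\emph{Compression.} Choose $t=\lceil C\log(Ck/\tau)\rceil$ and $\kappa=c\tau^{2}\theta^{4}/\sqrt k$. Lemma~\ref{alt:compression} then gives a nearly orthonormal $B$ with at most $k$ columns. The compressed atomic matrix $M=\sum_i a_i z_i z_i^*$, where $z_i=B^*u_i^{\otimes t}$, is estimated through \eqref{eq:merged-atomic-moment}. Up to the Gram distortion $\zeta_G$ and the leakage $\kappa^2$, its quadratic forms agree with the ideal mixture $\sum_i a_i\inner{\cdot}{u_i^{\otimes t}}^2$.

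\emph{Reweighting.} Draw a complex Gaussian vector $g=x+\ii y$ and form the reweighting polynomial $p(z)=(g^\top z)^{m}$ with $m\asymp t$. Contracting the order-$(m+2)$ moments against $p$ gives a $d\times d$ matrix
\[
N_g=\sum_i a_i (g^\top u_i)^{m}u_iu_i^\top+\mathcal E_g .
\]
The complex factor $(g^\top u_i)^m$ acts like the Gaussian-contraction posterior of Lemma~\ref{lem:decoder}, with Hermitian weights $|g^\top u_i|^{2m}$. To keep the weights positive we instead use $|p|^2$, which amounts to sandwiching $M$ with the compressed vector for $(g^\top\cdot)^{m}$. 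Atoms whose projective distance from the dominant atom exceeds $s$ then have relative weight at most $(1-D^2/8)^{m}$, by the affinity bound \eqref{eq:merged-affinity} applied in expectation over $g$. From each resulting normalized matrix we return the leading eigenvector, but only if it passes the spectral test \eqref{eq:merged-decoder-test}. Repeating the draw $\poly(k,\tau^{-1},\log(2/\omega))$ times hits each heavy atom with probability proportional to $a_i$, which gives completeness. The $2\theta$-ball frequency test of Proposition~\ref{prop:robust-cover}, estimated with an independent validation batch, then discards candidates whose ball mass is at most $\tau/100$.

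\emph{Soundness.} This repeats the argument of Lemma~\ref{lem:decoder}. Any accepted eigenvector $v$ has $v^\top Fv\geq 1-O(\theta^2)$ for the ideal posterior $F$, up to a trace-norm perturbation controlled by $\kappa$, $\zeta_G$, and the estimation error $\xi$. Since $v^\top Fv$ is a weighted average of $(v^\top u_i)^2$, some positive-weight atom lies within $\theta$ of $v$. Markov's inequality over draws bounds the mass of bad draws linearly by these errors, and the mass test removes the rare unsound candidates. The list length is the number of draws, which is polynomial. The cost follows from \eqref{alt:compression-cost} together with the scaled-complex-multiplier clipping remark, which keeps the noise dependence linear in $1+\sigma^2$. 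Failure probabilities are allocated by a union bound.

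\emph{Main obstacle.} The hardest step is the perturbation transfer. The reweighted contractions are only approximately orthonormal, so the completeness identity $\int K_z^\top K_z=I$ used in Lemma~\ref{lem:decoder} holds only up to the factor $1\pm\zeta_G$. I would bound the expected trace-norm error by $(1+\zeta_G)\kappa$ plus the Gram term, and then choose $\kappa$ polynomially small in $\tau$ so that the bad-draw mass stays below $\tau/100$.
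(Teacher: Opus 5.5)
Your outline has the right ingredients: compression by Lemma~\ref{alt:compression}, reweighting by a power of $g^\top x$, a leading eigenvector, and a weighted-average soundness argument. There is a genuine gap, however, because you transplant the analysis of Lemma~\ref{lem:decoder} onto an \emph{untilted} complex Gaussian draw, and the two properties you borrow do not survive the transfer.

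In Lemma~\ref{lem:decoder} the measurements are sampled from the tilted law $\tfrac12\tr(R_z)\,d\mu(z)$. That tilting is exactly what selects atom $i$ with probability proportional to $a_i$. It is also what bounds the probability of a bad output by the estimated-state mass of the bad set, hence linearly in $L_t$ and $E_t$. Here $g$ is a standard complex Gaussian, and you give no way to sample the analogous density, the Gaussian density times $\sum_ia_i\abs{g^\top u_i}^{2m}$.

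Under the untilted law, a draw does not concentrate on atom $i$ with probability proportional to $a_i$. The normalized mass $\sum_ia_i\abs{g^\top u_i}^{2m}/(2^mm!)$ has mean at most one but is typically much smaller. Since the moment-estimation error is absolute, draws on which it swamps the mass are typical, not rare, and Markov's inequality over draws yields no linear soundness bound. The affinity bound \eqref{eq:merged-affinity} does not help: its $m$-th power comes from $m$ independent real measurements, and an integrated bound gives no pointwise relative-weight bound for a single $g$ raised to the power $m$. Finally, the frequency test of Proposition~\ref{prop:robust-cover} is miscalibrated here. The probability of producing an output near a given heavy atom is in general only inverse-polynomial, of order $e^{-O(t)}$, and can be far below $\tau/6$. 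The test can therefore discard exactly the good candidates.

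What is missing is a two-part argument, one part for each guarantee.

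\emph{Completeness.} Fix a heavy atom $u$ and condition on the rare event $16t\leq\abs{g^\top u}^2\leq16t+2$, which has probability $(1-e^{-1})e^{-8t}$. The decomposition \eqref{alt:conditional-complex}, the tail bound \eqref{alt:far-tail}, and a union bound then show the following. With conditional probability $0.99$, every atom with $1-(u^\top u_i)^2\geq\eta_*$ has amplitude at most $(1-\eta_*/4)\abs{g^\top u}$. This gives a per-proposal success probability $p\asymp e^{-8t}$, which is inverse-polynomial because $\eta_*$ is fixed. Hence $N=\lceil p^{-1}\log(3k/\omega)\rceil$ proposals suffice. Compression error is controlled by Markov's inequality on this target event. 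That forces $\kappa\lesssim\eta_*m_0p$, far smaller than your $\kappa=c\tau^2\theta^4/\sqrt k$.

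\emph{Soundness.} Avoid any comparison with an ideal posterior. For the compressed tensor $T=T_g$ actually used, estimate with fresh samples the exact atomic quantities $z_T=\sum_ia_i\abs{p_T(u_i)}^2$ and $A_T=\sum_ia_i\abs{p_T(u_i)}^2u_iu_i^\top$. These are Hermite queries of orders $2t$ and $2t+2$ through $p_Tp_{\overline T}$; $A_T$ cannot be read off the order-$2t$ matrix $M$. Accept only if $\widehat z_T\geq m_0/2$, in addition to the spectral test, with estimation error $e_0=\eta_*m_0/10^4$. Then $A_T/z_T$ is an exact positive mixture and the absolute mass threshold dominates $e_0$. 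The weighted-average argument therefore makes every accepted output sound on the estimation success event, with no frequency test or pruning.

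These labeled queries also need a deterministic bound on $\norm{T_g}_{\Frob}$, such as the screen $S_g\leq C_2k/p$. That, rather than the approximate orthonormality in $\int K_z^\top K_z\,d\mu=I$, is the real obstacle you must handle.
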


\subsubsection{Concentration under random reweighting}

Let $g=X_0+\ii Y_0$, where $X_0,Y_0$ are independent standard Gaussian
vectors. Fix $C_1=32$ and $c_1=1/100$, and set
\begin{equation}
 t=\left\lceil C_1\eta_*^{-1}\log\frac{100k}{\tau\eta_*}\right\rceil,
 \qquad R^2=16t,\qquad p=c_1e^{-8t},\qquad m_0=\tau8^t.
 \label{alt:reweight-parameters}
\end{equation}
Because $\eta_*$ is fixed, $\exp(O(t))$, $p^{-1}$, $m_0$, and $m_0^{-1}$
are bounded by fixed polynomials in $k$ and $\tau^{-1}$.

Fix a target atom $u$ of weight at least $\tau$, and put $z=g^\top u$.
For any other atom, write $r_i=u^\top u_i$. Conditional on $z$,
\begin{equation}
 g^\top u_i=r_i z+\sqrt{1-r_i^2}\,\xi_i,
 \label{alt:conditional-complex}
\end{equation}
where each nondegenerate $\xi_i$ has independent standard Gaussian real and
imaginary parts. The $\xi_i$ need not be independent across atoms.
Call an atom far if $1-r_i^2\geq\eta_*$. For such an atom,
\begin{equation}
 \Prob\!\left(\abs{g^\top u_i}>(1-\eta_*/4)\abs{z}\,\middle|\,z\right)
 \leq \exp\!\left(-\frac{\eta_*\abs{z}^2}{32}\right).
 \label{alt:far-tail}
\end{equation}
Indeed, it suffices that
$\abs{\xi_i}\leq(1-\abs{r_i})\abs{z}/(2\sqrt{1-r_i^2})$.
This threshold is at least $\sqrt{\eta_*}\abs{z}/4$, and the radial tail of
$\xi_i$ is $\Prob(|\xi_i|>s)=e^{-s^2/2}$.

The event $R^2\leq|z|^2\leq R^2+2$ has probability
$(1-e^{-1})e^{-8t}$. Conditional on it, a union bound in
\eqref{alt:far-tail} ensures, with probability at least $0.99$, that all far
atoms simultaneously satisfy the amplitude bound. By the choice of $c_1$,
this joint target event has probability at least $4p$.

Define
\begin{equation}
 q_g(v)=\frac{(g^\top v)^t}{\sqrt{2^t t!}},\qquad
 Z_g=\sum_i a_i|q_g(u_i)|^2,\qquad
 C_g=\frac{\sum_i a_i|q_g(u_i)|^2u_iu_i^\top}{Z_g}.
 \label{alt:ideal-weight}
\end{equation}
The normalized matrix $C_g$ is defined whenever $Z_g>0$.
On the target event,
$Z_g\geq\tau R^{2t}/(2^t t!)\geq m_0$.
The total normalized weight on far atoms is at most
\begin{equation}
 \tau^{-1}(1-\eta_*/4)^{2t}\leq\eta_*/100.
 \label{alt:far-mass}
\end{equation}
It follows that $C_g\succeq0$, $\tr C_g=1$, and
$u^\top C_g u\geq1-2\eta_*$. A leading eigenvector is projectively close to $u$;
below we establish this conclusion directly for the estimated matrix used
by the algorithm.

\subsubsection{Implicit reweighting with a deterministic norm bound}

The norm of $g^{\otimes t}$ is too large for the required moment estimates.
We therefore apply Lemma~\ref{alt:compression} with a precision
\begin{equation}
 \kappa=c_\kappa
 \min\!\left\{\frac14,\frac{\eta_* m_0p}{10^8}\right\},
 \label{alt:kappa-reweight}
\end{equation}
where $c_\kappa=1/100$.
Condition on successful compression, put $F=BB^*$, and define
\begin{equation}
 T_g=\frac{Fg^{\otimes t}}{\sqrt{2^t t!}}.
 \label{alt:compressed-reweight}
\end{equation}
This is a linear combination of the real basis columns $b_\ell$ with
coefficients $p_{b_\ell}(g)/\sqrt{2^t t!}$, so its polynomial is directly
evaluated by their shared circuits.

Complex Gaussian isotropy on symmetric tensors gives
\begin{align}
 \E\norm{T_g}_{\Frob}^2&=\tr(F^2)\leq2k,
 & \E Z_g&\leq1, \label{alt:compressed-moments}\\
 \E\sum_i a_i|q_g(u_i)-p_{T_g}(u_i)|^2
 &=\sum_i a_i\norm{(I-F)u_i^{\otimes t}}_{\Frob}^2
 \leq\kappa^2+\zeta_G^2\leq2\kappa^2. \label{alt:compressed-error}
\end{align}
For the last bound, decompose $I-F$ as $(I-P)+(P-F)$, where $P$ projects
orthogonally onto $\range B$. The two resulting vectors are orthogonal,
and $\norm{P-F}_{\op}\leq\zeta_G$.

Set $C_2=1000$. Markov's inequality shows that
\begin{equation}
 \norm{T_g}_{\Frob}^2\leq C_2k/p,\quad
 Z_g\leq C_2/p,\quad
 E_g:=\sum_i a_i|q_g(u_i)-p_{T_g}(u_i)|^2\leq C_2\kappa^2/p
 \label{alt:markov-event}
\end{equation}
fail with total probability at most $5p/C_2=p/200$.
Moreover, Cauchy--Schwarz gives
\begin{align}
 \sum_i a_i\left||q_g(u_i)|^2-|p_{T_g}(u_i)|^2\right|
 &\leq \sqrt{E_g}\left(2\sqrt{Z_g}+\sqrt{E_g}\right) \nonumber\\
 &\leq 3000\kappa/p\leq\eta_* m_0/100,
 \label{alt:weight-stability}
\end{align}
by \eqref{alt:kappa-reweight}.
Thus both the mass and the unnormalized covariance change by at most the
last quantity. On the intersection with the target event, the compressed
mass is at least $0.99m_0$, and its normalized covariance $C$ satisfies
$u^\top C u\geq1-3\eta_*$.

Conditional on successful compression, the following rejection rule gives a
deterministic norm bound for every retained query. The algorithm computes
\begin{equation}
 S_g=(1+\zeta_G)\sum_\ell\frac{|p_{b_\ell}(g)|^2}{2^t t!}
 \label{alt:norm-screen}
\end{equation}
and rejects any proposal with $S_g>C_2k/p$. Since
$\norm{T_g}_{\Frob}^2\leq S_g$ and
$\E S_g=(1+\zeta_G)\tr(B^*B)\leq2k$, this rejection costs at most
$2p/C_2$ in probability. One may also reject proposals with
$\norm{g}_2^2>100(d+t+1)$. For $\norm{g}_2^2\sim\chi^2_{2d}$,
Markov's inequality applied to $\exp(\norm{g}_2^2/4)$ bounds this
probability by $2^d e^{-25(d+t+1)}<p/100$.
The target event has probability at least $4p$, while the three bounds in
\eqref{alt:markov-event} and the two screens lose less than $p$ in total.
Their intersection therefore has probability at least $p$.

\subsubsection{Acceptance tests and soundness of every output}

For each retained tensor $T=T_g$, estimate
\begin{equation}
 z_T=\sum_i a_i|p_T(u_i)|^2,\qquad
 A_T=\sum_i a_i|p_T(u_i)|^2u_iu_i^\top.
 \label{alt:query-weighted}
\end{equation}
These are atomic queries of orders $2t$ and $2t+2$. The appropriate
holomorphic polynomial for the squared weight is
\begin{equation}
 p_T(Z)p_{\overline T}(Z),
 \label{alt:holomorphic-square}
\end{equation}
which conjugates the coefficients only. Evaluating $|p_T(Z)|^2$ at the
complex query argument would not give the required Hermite query.
The tensor in \eqref{alt:holomorphic-square} has Frobenius norm at most
$\norm{T}_{\Frob}^2$, as do the coordinate-augmented tensors for entries of
$A_T$. On successful compression, the screen bounds these tensor norms,
and Lemma~\ref{lem:query-access} supplies finite-variance estimates under
the global noise assumption. The same sample sizes are used when compression fails, which keeps
the running time bounded on every outcome.

Using fresh labeled samples, construct a scalar $\widehat z_T$ and a
symmetric matrix $\widehat A_T$ such that
\begin{equation}
 |\widehat z_T-z_T|\leq e_0:=\frac{\eta_* m_0}{10^4},\qquad
 |(\widehat A_T-A_T)_{ab}|\leq\frac{e_0}{d}.
 \label{alt:weighted-estimation}
\end{equation}
The latter controls both the operator error and the trace error by $e_0$.
Accept precisely when
\begin{equation}
 \widehat z_T\geq m_0/2,\qquad
 \lambda_{\max}(\widehat A_T/\widehat z_T)\geq1-5\eta_*,
 \label{alt:accept}
\end{equation}
and output a unit leading eigenvector $v$ of $\widehat A_T$.

We first prove soundness. For any accepted proposal with accurate estimates,
$z_T\geq0.49m_0$. Since $A_T\succeq0$ and $\tr A_T=z_T$,
\begin{equation}
 \left\lVert\frac{\widehat A_T}{\widehat z_T}
                   -\frac{A_T}{z_T}\right\rVert_{\op}
 \leq\frac{4\eta_*}{10^4}.
 \label{alt:normalized-perturbation}
\end{equation}
Consequently $v^\top(A_T/z_T)v\geq1-6\eta_*$. This Rayleigh quotient is a
positive weighted average of $|v^\top u_i|^2$. At least one positive-weight
atom therefore satisfies $|v^\top u_i|^2\geq1-6\eta_*$, and
\begin{equation}
 D(v,u_i)^2=2(1-|v^\top u_i|)\leq12\eta_*<\theta^2.
 \label{alt:soundness}
\end{equation}

For completeness, on a target event surviving the screens,
$z_T\geq0.99m_0$ and $u^\top C u\geq1-3\eta_*$, where $C=A_T/z_T$.
The estimation bounds imply both tests in \eqref{alt:accept}. The returned
vector satisfies $v^\top C v\geq1-6\eta_*$. As $C\succeq0$ and
$\tr C=1$,
\begin{equation}
 2-9\eta_*\leq u^\top C u+v^\top C v
 =\tr[C(uu^\top+vv^\top)]
 \leq1+|u^\top v|.
 \label{alt:completeness-rayleigh}
\end{equation}
Thus $D(u,v)^2\leq18\eta_*<\theta^2$.

Draw
\begin{equation}
 N=\left\lceil p^{-1}\log\frac{3k}{\omega}\right\rceil
 \label{alt:proposal-count}
\end{equation}
independent proposals. Conditional on successful compression, each heavy
target has a probability at least $p$ of obtaining
a successful proposal. A union bound covers all heavy targets. Distribute
the remaining failure probability among compression and the estimation of all
retained proposals. The bounds on $t$, $p$, $m_0$, $\kappa$, and the query
norms show that every inverse precision and sample count is polynomial in
the parameters stated in Proposition~\ref{alt:coarse}. This proves that
proposition. The argument establishes soundness for all outputs, including
those whose witnesses have arbitrarily small positive weights.